\def\reals{{\mathbb R}}
\def\prob{{\mathbb P}}
\def\cN{{\cal N}}
\def\eps{\varepsilon}
\def\E{\mathbb E}
\def\F{\mathbf F}
\def\x{\mathbf{x}}
\newtheorem{propo}{Proposition}[section]
\newtheorem{lemma}[propo]{Lemma}
\newtheorem{definition}[propo]{Definition}
\newtheorem{coro}[propo]{Corollary}
\newtheorem{thm}{Theorem}
\newtheorem{asmp}{Assumption}
\newtheorem{theorem}[propo]{Theorem}
\newtheorem{remark}[propo]{Remark}
\newtheorem{example}[propo]{Example}
\newcommand{\ip}[2]{\left\langle #1, #2 \right \rangle}
\DeclareMathOperator{\Tr}{Tr}
\def\ind{{\mathbb I}}
\def\reals{{\mathbb R}}
\def\prob{{\mathbb P}}
\def\cN{{\cal N}}
\def\eps{\varepsilon}
\def\E{\mathbb E}
\def\F{\mathbf F}
\newcommand{\cA}{\mathcal A}
\newcommand{\cD}{\mathcal D}
\newcommand{\cE}{\mathcal E}
\newcommand{\cM}{\mathcal M}
\newcommand{\cP}{\mathcal P}
\newcommand{\cR}{\mathcal R}
\newcommand{\cS}{\mathcal S}
\newcommand{\cV}{\mathcal V}
\newcommand{\cX}{\mathcal X}
\title{DP-PCA:  Statistically Optimal and \\ Differentially Private PCA}
\author{%
  Xiyang Liu\footnotemark[0] \thanks{Paul  Allen School of Computer Science \& Engineering, 
  University of Washington, 
  \texttt{xiyangl@cs.washington.edu}} 
 \and Weihao Kong\footnotemark[0] \thanks{Google Research, \texttt{kweihao@gmail.com}} 
 \and Prateek Jain\footnotemark[0] \thanks{Google Research, \texttt{prajain@google.com}}
    \and
  Sewoong Oh\footnotemark[0] \thanks{Paul Allen School of Computer Science \& Engineering, 
  University of Washington, and  Google Research, 
  \texttt{sewoong@cs.washington.edu}} 
}
\date{}
\begin{document}

\maketitle

\begin{abstract}
We study the canonical statistical task of  computing the principal component from    $n$ i.i.d.~data in $d$ dimensions under $(\varepsilon,\delta)$-differential privacy. Although extensively studied in literature,  existing solutions fall short on two key aspects: ($i$) even for Gaussian data, existing private algorithms   require the number of samples  $n$ to scale super-linearly with $d$, i.e., $n=\Omega(d^{3/2})$, to obtain non-trivial results while non-private PCA  requires only $n=O(d)$, and ($ii$) existing techniques suffer from a non-vanishing  error even when the randomness in each data point is arbitrarily small. 
We propose DP-PCA, which is a single-pass algorithm that overcomes both limitations. It is based on a private minibatch gradient ascent method that relies on {\em private mean estimation}, which adds minimal noise required to ensure privacy by adapting to the variance of a given minibatch of gradients.
For sub-Gaussian data, we provide nearly optimal statistical error rates even for $n=\tilde O(d)$. 
Furthermore, we provide a lower bound  showing that sub-Gaussian style assumption is necessary in obtaining the optimal  error rate.  
\end{abstract}
%
\section{Introduction}
\label{sec:intro} 
Principal Component Analysis (PCA) is a fundamental statistical tool with multiple applications including  dimensionality reduction, data visualization, and noise reduction. Naturally, it is a key part of most standard data analysis and ML pipelines. 
However, when applied to data collected from numerous individuals, such as the U.S.~Census data, outcome of PCA might reveal highly sensitive personal information.  
We investigate the design of  privacy preserving PCA algorithms and the involved  privacy/utility tradeoffs, for computing the  first principal component, that should serve as the building block of more general rank-$k$ PCA. 




Differential privacy (DP) is a widely accepted mathematical notion of privacy introduced in \cite{dwork2006calibrating},   which is a standard in releasing  the U.S.~Census data \cite{census} and also deployed in  commercial systems \cite{apple,google1,google2}. 
A query to a database is said to be $(\varepsilon,\delta)$-differentialy private if a strong adversary who knows all other entries but one cannot infer that one entry from the query output, with high confidence. The  parameters  $\varepsilon$ and $\delta$ restricts the confidence as measured by the Type-I and II errors \cite{kairouz2015composition}. Smaller values of $\varepsilon\in[0,\infty)$ and $\delta\in[0,1]$ imply stronger privacy and  plausible deniability for the participants. 

For non-private PCA with $n$ i.i.d.~samples in $d$ dimensions, the popular Oja's algorithm (provided in Algorithm~\ref{alg:oja}) achieves the optimal error of $\sin(\hat v, v_1) = \tilde\Theta(\sqrt{d/n})$, where the error is measured by the sine function of the angle between the estimate, $\hat v$, and the principal component, $v_1$, \cite{jain2016streaming}. 
For differentially private PCA, there is a natural  fundamental question: {\em what is the extra cost we pay in the error rate for ensuring $(\varepsilon,\delta)$-DP?}

We introduce a novel approach we call DP-PCA (Algorithm~\ref{alg:adaptive_mean}) and show that it achieves an error bounded by $\sin(\hat v, v) = \tilde O( \sqrt{d/n} + d/(\varepsilon n) )$ for {\em sub-Gaussian-like} data defined in Assumption~\ref{asmp:A}, which is a broad class of light-tailed distributions that includes Gaussian data as a special case. The second term  characterizes the cost of privacy and this is tight; we prove a nearly matching information theoretic lower bound showing that no $(\varepsilon,\delta)$-DP algorithm can achieve a smaller error.  
This significantly improves upon a long line of existing private algorithms for PCA, e.g., \cite{PPCA,blum2005practical,hardt2013beyond,hardt2014noisy,dwork2014analyze}. These existing algorithms  are analyzed for fixed and non-stochastic data  and  achieve sub-optimal error rates of $O(\sqrt{d/n} + d^{3/2}/(\varepsilon n))$ even in  the stochastic  setting  we consider.  

A remaining question is whether the sub-Gaussian-like assumption, namely Assumption~\ref{asmp:A4}, is necessary or if it is an artifact of our analysis and our  algorithm. 
It turns out that such an assumption on the lightness of the tail is critical;  
we prove an algorithmic independent and information theoretic lower bound (Theorem~\ref{thm:lbht3}) to show that, 
without such an assumption, the cost of privacy  is lower bounded by $\Omega(\sqrt{d/(\varepsilon n)})$. 
This proves a separation of the error  depending on the lightness of the tail.

We start with the formal description of the stochastic setting in Section~\ref{sec:setting} and present Oja's algorithm for non-private PCA. 
Our first attempt in making this algorithm private in Section \ref{sec:privateoja} already achieves near-optimal error, if the data is strictly from a Gaussian distribution. However, there are two remaining challenges that we describe in detail in Section~\ref{sec:challenge}: $(i)$ the excessive number of iterations of Private Oja's Algorithm (Algorithm~\ref{alg:dpoja}) prevents using typical values of $\varepsilon$  used in practice, and $(ii)$ for general sub-Gaussian-like distributions, the error does not vanish even when the noise in the data (as measured by a certain fourth moment of a function of the data) vanishes. 
The first challenge is due to the analysis that  requires amplification by shuffling \cite{erlingsson2019amplification} that is restrictive. The second is due to its reliance on gradient norm clipping \cite{abadi2016deep} that does not adapt to the variance of the current gradients.
This drives the design of DP-PCA in Section \ref{sec:dppca} that critically relies on two techniques to overcome each challenge,  respectively. First, minibatch SGD (instead of single sample SGD) significantly reduces the number iterations, thus obviating the need for  amplification by shuffling. 
Next, private mean estimation (instead of gradient norm clipping and noise adding) adapts to the stochasticity of the problem and adds the minimal noise necessary to achieve privacy.
The main idea of this variance adaptive stochastic gradient update is explained in detail in Section \ref{sec:dppca-mean}, along with a sketch of a proof.

\medskip\noindent{\bf Notations.}
For a vector $x\in{\mathbb R}^d$, we use $\|x\|$ to denote the Euclidean norm. For a matrix $X\in {\mathbb R}^{d\times d}$, we use $\|X\|_2=\max_{\|v\|=1} \| X v \|_2$ to denote the spectral norm. We use ${\mathbf I}_{d}$ to denote $d\times d$ identity matrix.  For $n\in \mathbb{Z}^+$, let $[n] := \{1,2, \ldots, n\}$. Let $\mathbb{S}_2^{d-1}$ denote the unit $d$-sphere of $\ell_2$, i.e., $\mathbb{S}_2^{d-1}:=\{x\in \reals^d:\|x\|=1\}$. $\tilde{O}()$ hides logarithmic factors in $n$, $d$, and the failure probability $\zeta$.

\section{Problem formulation and background on DP} 
\label{sec:setting} 

Typical PCA assumes i.i.d.~data $\{x_i\in{\mathbb R}^d\}$ from a distribution and finds the first eigenvector of $\Sigma={\mathbb E}[(x_i-{\mathbb E}[x_i])(x_i-{\mathbb E}[x_i])^\top]\in{\mathbb R}^{d\times d}$. Our approach allows for a more general class of data $\{A_i\in{\mathbb R}^{d\times d}\}$ 
that recovers the standard case when $A_i=(x_i-{\mathbb  E}[x_i])(x_i-{\mathbb  E}[x_i])^\top$.

\begin{asmp}[$(\Sigma,\{\lambda_i\}_{i=1}^d,M,V,K,\kappa,a,\gamma^2)$-model]
\label{asmp:A}
	Let $A_1, A_2, \ldots, A_n\in \reals^{d\times d}$ be a sequence of (not necessarily symmetric) matrices sampled independently from the same distribution that satisfy the following with PSD matrices $\Sigma\in{\mathbb R}^{d\times d}$ and $H_u \in{\mathbb R}^{d\times d}$,  and positive scalar  parameters $M,V,K$, $\kappa$, $a$, and $\gamma^2$:
	\begin{enumerate}[noitemsep,nolistsep,leftmargin=*,label=\textbf{A.\arabic*.},ref=A.\arabic*]
		\item Let $\Sigma:=\E[A_i]$, for a symmetric positive semidefinite (PSD) matrix $\Sigma\in \reals^{d\times d}$, $\lambda_i$ denote the $i$-th largest eigenvalue of $\Sigma$, and $\kappa :=\lambda_1/(\lambda_1-\lambda_2)$,  \label{asmp:A1}
		\item $\|A_i-\Sigma\|_2\leq  \lambda_1 M$ almost surely, \label{asmp:A2}
		\item $\max\left\{\left\|\E\left[(A_i-\Sigma)(A_i-\Sigma)^\top\right]\right\|_2, \left\|\E\left[(A_i-\Sigma)^\top(A_i-\Sigma)\right]\right\|_2\right\}\leq \lambda_1^2 V$,
    \label{asmp:A3}
		\item  $\max_{\|u\|=1, \|v\|=1}\E\left[\exp\left(\left(\frac{|u^\top (A_i^\top-\Sigma) v|^2}{K^2\lambda_1^2 \|H_u\|_2}\right)^{1/(2a)}\right)\right]\leq 1$, where $H_u:=(1/\lambda_1^2) \E[(A_i-\Sigma)uu^\top(A_i -\Sigma)^\top]$.  We denote $\gamma^2 :=  \max_{\|u\|=1}\|H_u\|_2$. \label{asmp:A4}
	\end{enumerate}
\end{asmp}
The first three assumptions are required for PCA even if privacy is not needed. The last assumption provides a Gaussian-like tail bound  that determines how much noise we need to introduce in the algorithm for $(\varepsilon,\delta)$-DP. 
The following lemma is useful in the analyses. 

\begin{lemma} 
    \label{lem:tail}
    Under  \ref{asmp:A1} and  \ref{asmp:A4} in Assumption~\ref{asmp:A}, for any unit vector $u$, $v$, with probability $1-\zeta$,
		\begin{align}
			|u^\top (A_i^\top-\Sigma) v|^2 \;\leq\; K^2\lambda_1^2 \|H_u\|_2\log^{2a}(1/\zeta)\;.
		\end{align}
\end{lemma}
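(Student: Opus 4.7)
The lemma is a standard sub-Weibull tail bound extracted from the Orlicz-type moment condition in \ref{asmp:A4}, so the plan is simply to invoke Markov's inequality on the exponential random variable provided by the assumption.

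First, I would fix unit vectors $u,v$ and define the nonnegative random variable
\[
Z \;:=\; \frac{|u^\top (A_i^\top-\Sigma) v|^2}{K^2\lambda_1^2\|H_u\|_2}.
\]
Here I assume $\|H_u\|_2>0$; the degenerate case $\|H_u\|_2=0$ forces $(A_i-\Sigma)u=0$ almost surely (since $H_u = (1/\lambda_1^2)\,\E[(A_i-\Sigma)uu^\top(A_i-\Sigma)^\top]$ is PSD with zero spectral norm, hence zero), so the inequality holds trivially with both sides equal to $0$. By \ref{asmp:A4}, $\E[\exp(Z^{1/(2a)})]\le 1$.

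Next, for any $t>0$, Markov's inequality applied to the nonnegative random variable $\exp(Z^{1/(2a)})$ gives
\[
\Pr\bigl(Z \ge t^{2a}\bigr) \;=\; \Pr\!\left(\exp(Z^{1/(2a)}) \ge e^{t}\right) \;\le\; e^{-t}\,\E[\exp(Z^{1/(2a)})] \;\le\; e^{-t}.
\]
Choosing $t=\log(1/\zeta)$ yields $\Pr(Z \ge \log^{2a}(1/\zeta)) \le \zeta$, and rearranging the definition of $Z$ produces exactly the claimed bound
\[
|u^\top (A_i^\top-\Sigma) v|^2 \;\le\; K^2\lambda_1^2\|H_u\|_2 \log^{2a}(1/\zeta)
\]
with probability at least $1-\zeta$.

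There is no real obstacle here; the only mild subtlety is handling the $\|H_u\|_2=0$ corner case so that the ratio defining $Z$ is well-defined, which I would dispatch in a single sentence as above. The rest is a one-line application of Markov's inequality, and the exponent $2a$ in the final bound arises purely from inverting the exponent $1/(2a)$ inside the Orlicz functional in \ref{asmp:A4}.
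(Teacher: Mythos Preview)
Your proof is correct and is exactly the intended argument: the paper does not spell out a proof of this lemma, treating it as an immediate consequence of the Orlicz-type bound in \ref{asmp:A4} via Markov's inequality, which is precisely what you do. The handling of the degenerate case $\|H_u\|_2=0$ is a nice touch that the paper omits.
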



\subsection{Oja's algorithm}
In a non-private setting, the following  streaming algorithm  introduced in \cite{oja1982simplified} achieves optimal sample complexity as  analyzed in \cite{jain2016streaming}. 
It is a projected stochastic gradient ascent on the objective defined on the empirical covariance: $\max_{\|w\|= 1}  (1/n)\sum_{i=1}^n w^\top A_i w$.

\begin{algorithm2e}[H]    
   \caption{(Non-private) Oja's Algorithm} 
   \label{alg:oja} 
   	\DontPrintSemicolon 
	\SetKwProg{Fn}{}{:}{}
	{ 
	Choose $w_0$ uniformly at random from the unit sphere\\
	{\bf for} $t=1,2,\ldots, T$ {\bf do} 
		$w_{t}'\gets   w_{t-1}+\eta_tA_tw_{t-1}$ ,
		$w_{t} \gets w_{t}'/\|w_{t}'\|$ 
	\;
	Return $w_T$
	} 
\end{algorithm2e}

Central to our analysis is  the following  error bound on Oja's Algorithm from \cite{jain2016streaming}. 

\begin{theorem}[{\cite[Theorem~4.1]{jain2016streaming}}]
\label{thm:non-private-oja}

Under Assumptions~\ref{asmp:A1}-\ref{asmp:A3}, 
suppose the step size $\eta_t=\frac{\alpha}{(\lambda_1-\lambda_2)(\xi+t)}$ for some $\alpha>1/2$ and 
$
    \xi\; := \; 20\max\left(\kappa M\alpha, {\kappa^2\left(V+1\right)\alpha^2}/{\log(1+
    ({\zeta}/{100}))}\right)
$. 
If $T>\beta$ then there exists a constant $C>0$ such that Algorithm~\ref{alg:oja} outputs $w_T$ achieving w.p.~$1-\zeta$,
\begin{align}
	\sin^{2}\left(w_T, v_{1}\right)  
	\leq  \frac{C\log(1/\zeta)}{\zeta^2}\left(\, \frac{\alpha^2 \kappa^2 V}{(2\alpha-1)T} +  d\left(\frac{\xi}{T}\right)^{2\alpha}\, \right)\;.
	\label{eq:oja}
\end{align}
\end{theorem}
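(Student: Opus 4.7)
The plan is to follow the potential-function / ratio approach of \cite{jain2016streaming}. Writing the unnormalized iterate as $B_T := (I+\eta_T A_T)\cdots(I+\eta_1 A_1)$, the normalization step yields
\[ \sin^2(w_T, v_1) \;=\; \frac{\|V_\perp^\top B_T w_0\|^2}{\|B_T w_0\|^2} \;\leq\; \frac{\|V_\perp^\top B_T w_0\|^2}{(v_1^\top B_T w_0)^2}, \]
where $V_\perp\in\reals^{d\times (d-1)}$ has orthonormal columns orthogonal to $v_1$. The task thus reduces to a high-probability lower bound on the denominator and an expected upper bound on the numerator.

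For the denominator, anti-concentration of the uniform distribution on $\mathbb{S}_2^{d-1}$ gives $(v_1^\top w_0)^2 \gtrsim \zeta^2/d$ with probability at least $1-\zeta/4$. Setting $Z_t := v_1^\top B_t w_0$, the recursion $Z_t = (1+\eta_t \lambda_1) Z_{t-1} + \eta_t v_1^\top(A_t-\Sigma) B_{t-1} w_0$ splits $\log(Z_T^2/Z_0^2)$ into a deterministic drift $2\sum_t \log(1+\eta_t\lambda_1)$ plus a noise sum $2\sum_t \log(1+\eta_t \nu_t)$, where $\nu_t$ is a martingale-difference sequence bounded by A.2 and with per-step variance controlled by A.3. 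The form of $\xi$ in the statement is precisely what makes $\eta_t|\nu_t|\le 1/2$ almost surely and what matches the Freedman/Bernstein variance scale, producing $\sum_t\log(1+\eta_t\nu_t)\ge -O(\log(1/\zeta))$ with probability $1-\zeta/4$, hence $(v_1^\top B_T w_0)^2 \gtrsim (\zeta^2/d)\prod_t(1+\eta_t\lambda_1)^2$. For the numerator, independence gives the one-step recursion $\E[B_t M B_t^\top \mid \mathcal F_{t-1}] = (I+\eta_t\Sigma) B_{t-1} M B_{t-1}^\top (I+\eta_t\Sigma) + \eta_t^2 \E[(A_t-\Sigma) B_{t-1} M B_{t-1}^\top (A_t-\Sigma)^\top]$ for any deterministic $M$; iterating with $M = w_0 w_0^\top$, projecting by $V_\perp$, and spectrally bounding the cross term using A.3 yields
\[ \E\|V_\perp^\top B_T w_0\|^2 \;\lesssim\; \prod_{t=1}^T (1+\eta_t\lambda_2)^2 \;+\; \lambda_1^2 V \sum_{t=1}^T \eta_t^2 \prod_{s>t}(1+\eta_s\lambda_1)^2. \]
A Markov step (costing a factor $1/\zeta$) combined with the denominator bound then produces the ratio; substituting $\eta_t = \alpha/((\lambda_1-\lambda_2)(\xi+t))$ and approximating $\sum_t\log(1+\eta_t\lambda_i)\approx (\alpha\lambda_i/(\lambda_1-\lambda_2))\log((\xi+T)/\xi)$ collapses the drift ratio to $d(\xi/T)^{2\alpha}/\zeta^2$ and sums the noise contribution to $\alpha^2\kappa^2 V/((2\alpha-1)T\zeta^2)$, matching the theorem up to the $\log(1/\zeta)$ factor arising from the denominator concentration.

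The main obstacle is the high-probability lower bound on $Z_T$: the random multiplicative factors $(1+\eta_t\nu_t)$ can be small or sign-changing when a single $\|A_t\|$ is large, so one must prevent $\sum_t \log(1+\eta_t\nu_t)$ from falling below $-\log(1/\zeta)$. This is where A.2 enters via the almost-sure bound $\eta_t\|A_t-\Sigma\|\le 1/2$ (enforced by the $\kappa M\alpha$ term in $\xi$), which licenses $\log(1+x)\ge x-x^2$, and where A.3 enters via a Freedman-type inequality tuned by the $\kappa^2(V+1)\alpha^2/\log(1+\zeta/100)$ term in $\xi$ so that the quadratic-variation contribution is absorbed into the target $\log(1/\zeta)$. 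Once this martingale-concentration step is in hand, the remaining manipulations amount to routine algebra with the harmonic-type step size and summing geometric-like tails.
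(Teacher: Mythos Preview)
The paper does not prove this statement: Theorem~\ref{thm:non-private-oja} is quoted verbatim as \cite[Theorem~4.1]{jain2016streaming} and used only as a black box in the analyses of Algorithms~\ref{alg:dpoja} and~\ref{alg:adaptive_mean} (see Appendices~\ref{sec:dpoja_main_proof} and~\ref{sec:dppca_proof}). There is therefore no in-paper proof to compare against.

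That said, your sketch is the right one for the cited reference. The ratio decomposition $\sin^2(w_T,v_1)\le \|V_\perp^\top B_T w_0\|^2/(v_1^\top B_T w_0)^2$, the anti-concentration bound on $(v_1^\top w_0)^2$, the martingale control of $\log(Z_T/Z_0)$ under Assumptions~\ref{asmp:A2}--\ref{asmp:A3}, and the one-step second-moment recursion for the numerator are exactly the ingredients of \cite{jain2016streaming}. You also correctly identify why $\xi$ has the two-branch form it does: the $\kappa M\alpha$ branch enforces $\eta_t\|A_t-\Sigma\|_2\le 1/2$ so that $\log(1+\eta_t\nu_t)$ is well-controlled, and the $\kappa^2(V+1)\alpha^2/\log(1+\zeta/100)$ branch calibrates the Freedman variance budget. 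One small point to keep track of if you flesh this out: the combined $1/\zeta^2$ prefactor in the statement arises entirely from the initial anti-concentration $(v_1^\top w_0)^2\gtrsim \zeta^2/d$, with the $d$ absorbed into the $d(\xi/T)^{2\alpha}$ term; the numerator is handled in expectation and then converted via Markov, so be careful that your accounting does not produce an extra $1/\zeta$ beyond what the statement claims (in \cite{jain2016streaming} the expectation bound on the numerator is taken with respect to the $A_t$'s for a fixed $w_0$, and the ratio is bounded directly rather than as a product of two separate high-probability events).
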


\subsection{Background on Differential Privacy}

Differential privacy (DP), introduced in \cite{dwork2006calibrating}, is a de facto mathematical measure for privacy leakage of a database accessed via queries. It ensures that even an adversary who knows all other entries cannot identify with a high confidence whether a person of interest participated in a database or not. 
\begin{definition}[Differential privacy \cite{dwork2006calibrating}]
    \label{def:dp}
    Given two multisets $S$ and $S'$, we say the pair $(S,S')$ is {\em neighboring} if 
 $ | S \setminus S'|+| S' \setminus S| \leq 1$. 
We say a stochastic query $q$ over a dataset $S$  satisfies 
$(\varepsilon,\delta)${\em-differential  privacy} for some $\varepsilon>0$ and $\delta\in(0,1)$ if ${\mathbb P}(q(S)\in A) \leq e^\varepsilon {\mathbb P}(q(S') \in A) + \delta$ for all neighboring $(S,S')$ and all subset $A$ of the range of $q$. 
\end{definition}

Small values of $\varepsilon$ and $\delta$ ensures that the adversary cannot identify any single data point with high confidence, thus providing plausible deniability. We provide useful DP lemmas in Appendix~\ref{sec:dp}. 
Within our stochastic gradient descent approach to PCA, we rely on the Gaussian mechanism to privatize each update. 
The {\em sensitivity} of a  query $q$ is defined as $\Delta_q:= \sup_{\text{neighboring }(S,S')} \|q(S) - q(S') \|$. 
    
\begin{lemma}[Gaussian mechanism \cite{dwork2014algorithmic}]
    \label{lem:gauss} 
    For a query $q$ with sensitivity $\Delta_q$, $\varepsilon\in(0,1)$, and $\delta\in(0,1)$, the Gaussian mechanism outputs $q(S)+{\cal N}(0,(\Delta_q (\sqrt{2\log(1.25/\delta)})/\varepsilon)^2{\mathbf I}_d)$
    and 
    achieves $(\varepsilon, \delta)$-DP. 
\end{lemma}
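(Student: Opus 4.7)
The plan is to prove the lemma by analyzing the privacy loss random variable $L(x)=\log(p(x)/p'(x))$, where $p$ and $p'$ are the densities of the two output distributions $\mathcal N(\mu,\sigma^2\mathbf I_d)$ and $\mathcal N(\mu',\sigma^2\mathbf I_d)$ arising from neighboring datasets $S,S'$ with $\mu:=q(S)$, $\mu':=q(S')$, $\Delta:=\|\mu-\mu'\|\le\Delta_q$, and $\sigma:=\Delta_q\sqrt{2\log(1.25/\delta)}/\varepsilon$. The standard reduction (sometimes called the ``point-mass'' or ``two-set'' lemma) says it suffices to show $\Pr_{x\sim p}[L(x)>\varepsilon]\le\delta$ in order to conclude $(\varepsilon,\delta)$-DP, so the bulk of the proof is a one-dimensional Gaussian tail computation.

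First, I would reduce the multivariate problem to a univariate one by expanding $L(x)$ directly. Writing $\|x-\mu'\|^2-\|x-\mu\|^2=\|\mu-\mu'\|^2-2(\mu'-\mu)^\top(x-\mu)$, we have
\begin{equation*}
L(x) \;=\; \frac{\Delta^2}{2\sigma^2} \;-\; \frac{(\mu'-\mu)^\top(x-\mu)}{\sigma^2}.
\end{equation*}
Under $x\sim\mathcal N(\mu,\sigma^2\mathbf I_d)$, the linear functional $(\mu'-\mu)^\top(x-\mu)$ is a scalar Gaussian with mean $0$ and variance $\sigma^2\Delta^2$, so $L(x)\sim\mathcal N\bigl(\Delta^2/(2\sigma^2),\,\Delta^2/\sigma^2\bigr)$.

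Next, I would bound $\Pr[L(x)>\varepsilon]$ using a standard one-sided Gaussian tail estimate. Standardizing, with $Z\sim\mathcal N(0,1)$,
\begin{equation*}
\Pr[L(x)>\varepsilon] \;=\; \Pr\!\Big[Z>\tfrac{\varepsilon\sigma}{\Delta}-\tfrac{\Delta}{2\sigma}\Big].
\end{equation*}
Plugging in $\sigma=\Delta_q c/\varepsilon$ with $c:=\sqrt{2\log(1.25/\delta)}$ and using $\Delta\le\Delta_q$, the threshold is at least $c-\varepsilon/(2c)$, so the probability is monotone in $\Delta$ and maximized at $\Delta=\Delta_q$. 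Applying the Mills-ratio bound $\Pr[Z>t]\le \tfrac{1}{t\sqrt{2\pi}}\,e^{-t^2/2}$ at $t=c-\varepsilon/(2c)$ and expanding $t^2=c^2-\varepsilon+\varepsilon^2/(4c^2)$ gives
\begin{equation*}
\Pr[L(x)>\varepsilon] \;\le\; \frac{1}{t\sqrt{2\pi}}\,\frac{\delta}{1.25}\,e^{\varepsilon/2-\varepsilon^2/(8c^2)} \;\le\; \delta,
\end{equation*}
where the last inequality is a numerical check valid for $\varepsilon\in(0,1)$ and $\delta\in(0,1)$ (so that $c\ge\sqrt{2\log 1.25}$ gives $t\ge1$ and the constant $1.25$ absorbs the $e^{\varepsilon/2}/\sqrt{2\pi}$ factor).

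Finally, I would convert the privacy-loss bound into the DP inequality by the usual splitting argument: for any measurable $A\subseteq\mathbb R^d$, decompose it into $A_{\mathrm{good}}:=A\cap\{L\le\varepsilon\}$, on which $p(x)\le e^\varepsilon p'(x)$ pointwise, and $A_{\mathrm{bad}}:=A\cap\{L>\varepsilon\}$, whose $p$-mass is at most $\delta$. Then $\Pr_{x\sim p}[x\in A]\le e^\varepsilon\Pr_{x\sim p'}[x\in A]+\delta$, which is exactly $(\varepsilon,\delta)$-DP. The main obstacle is the numerical verification in the tail bound: one has to check that the constant $1.25$ in $\log(1.25/\delta)$ is large enough so that the residual factor $e^{\varepsilon/2}/(\,t\sqrt{2\pi}\,)$ stays below $1.25$ uniformly over $\varepsilon\in(0,1)$ and $\delta\in(0,1)$, which is the only place the restriction $\varepsilon<1$ is used.
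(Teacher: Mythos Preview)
The paper does not prove this lemma; it simply cites it from \cite{dwork2014algorithmic} and uses it as a black box. Your proposal reproduces the standard textbook proof (essentially Appendix~A of Dwork--Roth): compute the privacy loss $L(x)$, observe it is a one-dimensional Gaussian, bound its upper tail via Mills' ratio, and convert to the $(\varepsilon,\delta)$-DP inequality by splitting any event $A$ into $\{L\le\varepsilon\}$ and $\{L>\varepsilon\}$. The calculation of the distribution of $L(x)$ and the reduction steps are correct.

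One small imprecision in your numerical verification: you assert that $c\ge\sqrt{2\log 1.25}\approx 0.668$ ``gives $t\ge 1$,'' but this is false --- with $t=c-\varepsilon/(2c)$, for $\delta$ close to $1$ and $\varepsilon$ close to $1$ you can even get $t<0$, at which point the Mills bound is vacuous. The standard fix is either to restrict to $\delta$ bounded away from $1$ (which is the regime of interest and is implicitly what Dwork--Roth do), or to observe that when $\delta$ is that large the DP inequality holds for more elementary reasons. This is a corner-case technicality rather than a gap in the main argument, but as written your final ``numerical check'' sentence does not actually close for the full stated range $\delta\in(0,1)$.
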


Another mechanism we frequently use is the private histogram learner of \cite{karwa2018finite}, whose analysis is provide in Appendix~\ref{sec:dp}, along with various composition theorems to provide end-to-end guarantees. 

\subsection{Comparisons with existing results in private PCA} 
\label{sec:comparisonPCA}

We briefly discuss the most closely related work and provide more  previous work in Appendix~\ref{sec:related}. 
Most existing results assume a fixed data under  a deterministic setting where each sample has a bounded norm,  $\|x_i\|\leq \beta$, and the goal is to find the top eigenvector of  $\hat\Sigma:=(1/n)\sum_{i=1}^n (x_i-\hat\mu)(x_i-\hat\mu)^\top  $ for the empirical mean $\hat\mu$. For the purpose of comparisons, consider  Gaussian $x_i\sim{\cal N}(0,\Sigma)$ with  $\|x_i\|\leq \beta=O(\sqrt{\lambda_1 d\log(n/\zeta)})$ for all $i\in[n]$ with probability $1-\zeta$.
The first line of approaches in  \cite{blum2005practical,PPCA,dwork2014analyze} is a 
Gaussian mechanism that outputs 
${\rm PCA}(\widehat \Sigma + Z)$, where $Z$ is a symmetric matrix with i.i.d.~Gaussian entries with a variance $((\beta^2/n\varepsilon)\sqrt{2\log(1.25/\delta)})^2$ to ensure $(\varepsilon,\delta)$-DP. The tightest result in  \cite[Theorem 7]{dwork2014analyze}  achieves 
\begin{eqnarray} 
\sin(\hat v,v_1) &=& \tilde O\Big(\kappa \Big( \sqrt{\frac{d}{n}} + \frac{ d^{3/2}\sqrt{\log(1/\delta)}}{ \varepsilon n} \Big) \,\Big) \;, \label{eq:base}
\end{eqnarray} 
with high probability,  
under a strong assumption that the spectral gap is very large: $\lambda_1-\lambda_2 = \omega(d^{3/2}\sqrt{\log(1/\delta)}/(\varepsilon n))$. 
In a typical scenario with $\lambda_1=O(1)$, this requires a large sample size of $n=\omega(d^{3/2}/\varepsilon)$.
Since this Gaussian mechanism does not exploit the statistical properties of i.i.d.~samples, the second term in this upper bound is 
larger by a factor of $d^{1/2}$
compared to the proposed DP-PCA (Corollary~\ref{cor:GaussUB}). 
 The  error rate of Eq.~\eqref{eq:base} is also achieved in   \cite{hardt2013beyond,hardt2014noisy} by adding Gaussian noise to the standard power method for computing the principal components. 
When the spectral gap, $\lambda_1-\lambda_2$, is smaller, it is possible to trade-off the dependence in $\kappa$ and the sampling ratio $d/n$, which we do not address in this work but is surveyed in Appendix~\ref{sec:related}.

\section{First attempt: making Oja's Algorithm private} 
\label{sec:privateoja} 

Following the standard recipe in training  with DP-SGD, e.g.,~\cite{abadi2016deep}, we introduce Private Oja's Algorithm in Algorithm~\ref{alg:dpoja}. At each gradient update, we first apply gradient norm clipping to limit the contribution of a single data point and next add an appropriately chosen Gaussian noise from Lemma~\ref{lem:gauss} to achieve $(\varepsilon,\delta)$-DP, end-to-end. 
The choice of clipping threshold $\beta$ ensures that, with high probability under our assumption, we do not clip any gradients. The choice of noise multiplier $\alpha$ ensures $(\varepsilon,\delta)$-DP.

\setlength{\textfloatsep}{5pt}
\begin{algorithm2e}[!t]    
   \caption{Private Oja's Algorithm} 
   \label{alg:dpoja} 
   	\DontPrintSemicolon 
	\KwIn{$S=\{A_i\in{\mathbb R}^{d\times d} \}_{i=1}^n$, privacy $(\varepsilon,\delta)$, learning rates $\{\eta_t\}^{n}_{t=1}$}
	\SetKwProg{Fn}{}{:}{}
	{ 
	Randomly permute $S$  \label{line:shuffle} and 
	choose $w_0$ uniformly at random from the unit sphere\\
	Set DP noise multiplier: $\alpha\gets  C' \log(n/\delta) / (\varepsilon\sqrt{n})$\\
	Set clipping threshold: $\beta\gets C\lambda_1\sqrt{d} (K \gamma\log^{a}(nd/\zeta)+1)$\\
	\For{t=1, 2, \ldots, n}{
		Sample $z_t\sim \cN(0, \mathbf{I}_d)$\\
		$w_{t}'\leftarrow  w_{t-1}+\eta_t \, {\rm clip}_\beta\left(A_t  w_{t-1}\right) +2\eta_t\beta\alpha z_t	$ where ${\rm clip}_\beta(x)=x\cdot\min\{1, \frac{\beta}{\|x\|_2}\}$\\
		$w_{t} \leftarrow w_{t}'/\|w_{t}'\|$ 
	}
	Return $w_n$
	} 
\end{algorithm2e}




One caveat in streaming algorithms is that we access data $n$ times, each with a private mechanism, but accessing only a single data point at a time. 
To prevent excessive privacy loss due to such a large number of data accesses, 
we apply a random shuffling in line~\ref{line:shuffle} Algorithm~\ref{alg:dpoja}, in order to benefit from a standard amplification by shuffling \cite{erlingsson2019amplification,feldman2022hiding}. This gives an amplified privacy guarantee that allows us to add a small noise proportional to  $ \alpha = O(\log(n/\delta)/(\varepsilon\sqrt{n}))$. Without the shuffle amplification, we will instead need  a larger noise scaling  as  $ \alpha = O(\log(n/\delta)/ \varepsilon )$, resulting in a suboptimal utility guarantee. 
However, this comes with a restriction that the amplification holds only for small values of  $\varepsilon=O(\sqrt{\log(n/\delta)/n})$. 
Our first contribution in the proposed DP-PCA (Algorithm~\ref{alg:adaptive_mean}) is to expand this range to $\varepsilon=O(1)$, which includes the practical regime of interest $\varepsilon\in[1/2,5]$. 

\begin{lemma}[Privacy]
If $\varepsilon=O(\sqrt{{\log(n/\delta)}/{n}} )$ and the noise multiplier is chosen to be  $\alpha=\Omega\left({\log(n/\delta)}/{(\varepsilon\sqrt{n})}\right)$, then Algorithm~\ref{alg:dpoja} is $(\varepsilon,\delta)$-DP.
\label{lem:dpoja_privacy}
\end{lemma}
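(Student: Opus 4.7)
The plan is to view Algorithm~\ref{alg:dpoja} as the shuffled composition of $n$ local Gaussian mechanisms, one per data point, and then invoke amplification by shuffling. The key observation is that each data point $A_t$ is accessed exactly once (at iteration $t$), so after the random permutation in line~\ref{line:shuffle} we are in the canonical setting where shuffling amplification applies. The burden of the proof is therefore (i) to bound the local sensitivity of the per-step update and (ii) to chain the resulting local DP guarantee through the amplification inequality of \cite{feldman2022hiding} so that the central parameter is $(\varepsilon,\delta)$.

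First I would compute the sensitivity of the $t$-th update when one regards iteration $t$ as a mechanism that takes the single record $A_t$ as input (the previous iterate $w_{t-1}$ is a function only of the other records and so may be treated as a fixed side-input for sensitivity purposes). Because of the clipping operator, $\|\,\mathrm{clip}_\beta(A_t w_{t-1})\,\| \le \beta$ for every $A_t$, and replacing $A_t$ by an arbitrary $A_t'$ changes the contribution to $w_t'$ by at most $2\eta_t\beta$ in Euclidean norm. Since Algorithm~\ref{alg:dpoja} adds Gaussian noise with standard deviation $2\eta_t\beta\alpha$ in every coordinate, Lemma~\ref{lem:gauss} (Gaussian mechanism) shows that iteration $t$, viewed as a function of $A_t$ alone, is $(\varepsilon_0,\delta_0)$-DP with
\begin{equation*}
\varepsilon_0 \;=\; \frac{\sqrt{2\log(1.25/\delta_0)}}{\alpha}\;,
\end{equation*}
for any $\delta_0\in(0,1)$. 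The normalization step $w_t \gets w_t'/\|w_t'\|$ is post-processing and preserves this guarantee.

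Next I would apply amplification by shuffling: if each of $n$ local randomizers is $\varepsilon_0$-DP, then the shuffled composition is $(\varepsilon,\delta)$-DP with $\varepsilon = O\bigl(\varepsilon_0\sqrt{\log(1/\delta)/n}\bigr)$ provided $\varepsilon_0$ is not too large (the regime where the amplification bound in \cite{feldman2022hiding} is tight). Plugging in $\delta_0 = \delta/(2n)$ and $\alpha = \Omega(\log(n/\delta)/(\varepsilon\sqrt{n}))$ gives $\varepsilon_0 = O(\varepsilon\sqrt{n/\log(n/\delta)})$, and substituting this into the shuffling bound yields a central privacy loss of order $\varepsilon$. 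Setting $\delta$ appropriately (using that the group $\delta_0$-failures compose to $n\delta_0 = \delta/2$, plus $\delta/2$ from the amplification) closes the $\delta$ accounting.

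The main subtlety will be verifying that the hypothesis $\varepsilon_0 \le O(\log(n/\log(1/\delta)))$ required by the shuffling theorem is indeed satisfied: unpacking the assumption $\varepsilon = O(\sqrt{\log(n/\delta)/n})$ of the lemma, one gets $\varepsilon_0 = O(\varepsilon\sqrt{n/\log(n/\delta)}) = O(1)$, which safely lies inside the admissible regime of \cite{feldman2022hiding}. Everything else — the privacy of the initial uniform draw of $w_0$ (data-independent, hence free), and the composition with post-processing (normalization and subsequent iterations that depend on $A_t$ only through the released noisy update) — is routine and can be dispatched in one line each.
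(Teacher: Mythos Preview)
Your proposal is correct and follows essentially the same route as the paper: bound the per-step sensitivity by $2\eta_t\beta$, invoke the Gaussian mechanism to get a local $(\varepsilon_0,\delta_0)$-DP guarantee with $\varepsilon_0=\sqrt{2\log(1.25/\delta_0)}/\alpha$, then apply the adaptive shuffling amplification theorem of \cite{feldman2022hiding} and back out the constraint $\varepsilon=O(\sqrt{\log(n/\delta)/n})$ from the requirement that $\varepsilon_0$ stay bounded. The only cosmetic difference is that the paper sets $\delta_0=c_3\delta/(e^{\hat\varepsilon}n)$ rather than $\delta/(2n)$, but since $\hat\varepsilon\le 1$ these differ only by a constant and your $\delta$-accounting goes through.
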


Under Assumption~\ref{asmp:A}, we select gradient norm clipping threshold $\beta$ such that no gradient exceeds $\beta$. 

\begin{lemma}[Gradient clipping]
\label{lem:clipping_oja}
Let $\beta=C\lambda_1 \sqrt{d} (K\gamma\log^{a}(nd/\zeta)+1)$ for some  constant $C>0$. Then with probability $1-\zeta$, $\|A_t w_{t-1}\|\leq \beta$  for any fixed $w_{t-1}$ independent of $A_t$, for all $ t\in [n]$. 
\end{lemma}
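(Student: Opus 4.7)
My plan is to decompose the iterate update into a deterministic part and a centered stochastic part, handle the deterministic part trivially, and control the stochastic part coordinate by coordinate via Lemma~\ref{lem:tail} and a union bound. Concretely, write
\[
A_t w_{t-1} \;=\; \Sigma\, w_{t-1} \;+\; (A_t-\Sigma)\, w_{t-1}\;.
\]
The first term is easy: since $\|w_{t-1}\|=1$ and $\|\Sigma\|_2=\lambda_1$ by \ref{asmp:A1}, we immediately get $\|\Sigma w_{t-1}\|\le \lambda_1$. So the work is to control $\|(A_t-\Sigma)w_{t-1}\|$ and show it is at most $O(\sqrt{d}\,K\lambda_1\gamma\log^a(nd/\zeta))$.

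For the stochastic term, I would expand in the standard basis,
\[
\bigl\|(A_t-\Sigma)w_{t-1}\bigr\|^2 \;=\; \sum_{j=1}^d \bigl(e_j^\top (A_t-\Sigma)\,w_{t-1}\bigr)^2 \;=\; \sum_{j=1}^d \bigl(w_{t-1}^\top (A_t^\top-\Sigma)\,e_j\bigr)^2,
\]
so each summand is exactly the kind of bilinear quantity controlled by Lemma~\ref{lem:tail}, applied with the role of $u$ played by the unit vector $w_{t-1}$ (which is fixed given its independence from $A_t$) and the role of $v$ played by $e_j$. A single application yields, with probability at least $1-\zeta'$,
\[
\bigl(w_{t-1}^\top (A_t^\top-\Sigma)\,e_j\bigr)^2 \;\le\; K^2\lambda_1^2 \|H_{w_{t-1}}\|_2 \log^{2a}(1/\zeta') \;\le\; K^2\lambda_1^2 \gamma^2 \log^{2a}(1/\zeta'),
\]
where the last inequality uses $\gamma^2=\max_{\|u\|=1}\|H_u\|_2$ from \ref{asmp:A4}.

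The final step is a union bound over the $nd$ pairs $(t,j)$ with $\zeta'=\zeta/(nd)$. This gives, with probability at least $1-\zeta$, the simultaneous bound
\[
\bigl\|(A_t-\Sigma)\,w_{t-1}\bigr\|^2 \;\le\; d\,K^2\lambda_1^2\gamma^2\log^{2a}(nd/\zeta) \qquad \text{for all } t\in[n].
\]
Taking square roots and combining with $\|\Sigma w_{t-1}\|\le\lambda_1$ yields $\|A_tw_{t-1}\|\le \lambda_1+\sqrt{d}\,K\lambda_1\gamma\log^a(nd/\zeta) \le C\lambda_1\sqrt{d}\,(K\gamma\log^a(nd/\zeta)+1)=\beta$ for a suitable absolute constant $C$, which is exactly the stated claim. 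There is no real obstacle here beyond being careful that the $u$ inside $H_u$ in Lemma~\ref{lem:tail} gets instantiated as $w_{t-1}$ and is therefore data independent after conditioning, so that $\|H_{w_{t-1}}\|_2 \le \gamma^2$ can be applied uniformly; the use of the coordinate expansion (rather than an $\varepsilon$-net over $u$) is what keeps the dependence on $d$ at the right order of $\sqrt{d}$ inside $\beta$, since all $d$ summands share the same $H_{w_{t-1}}$.
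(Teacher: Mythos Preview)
Your proposal is correct and follows essentially the same route as the paper: decompose $A_tw_{t-1}$ into $\Sigma w_{t-1}$ plus $(A_t-\Sigma)w_{t-1}$, bound each coordinate of the stochastic part via Lemma~\ref{lem:tail} with $u=w_{t-1}$ and $v=e_j$, then union bound over the $nd$ pairs $(t,j)$. The paper's proof is slightly more terse (it bounds the coordinates $|z_{t,j}|$ directly rather than separating the mean first), but the argument and the resulting constants are the same.
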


We provide proofs of both lemmas and the next theorem in Appendix~\ref{sec:dpoja_proof}. 
When no clipping is applied, we can use the standard analysis of Oja's Algorithm from \cite{jain2016streaming}  to prove the following utility guarantee. 

\begin{theorem}[Utility]
    \label{thm:dpoja} 
Given $n$ i.i.d.~samples $\{A_i\in{\mathbb R}^{d\times d}\}_{i=1}^n$ satisfying Assumption~\ref{asmp:A} with parameters $(\Sigma,M,V,K,\kappa, a,\gamma^2)$, 
if  
\begin{eqnarray}
 n \;=\; \tilde O \Big( \, \kappa^2 + \kappa M  + \kappa^2 V + \frac{d\,\kappa\,(\gamma+1)\,\log(1/\delta)}{ \varepsilon}\,\Big) \;,
 \end{eqnarray}
 with a large enough constant, then there exists a positive universal constant $c_1$ and a choice of learning rate $\eta_t$ that depends on $(t,M$, $V$, $K$, $a$, $\lambda_1$, $\lambda_1-\lambda_2$, $n$,  $d$,  $\varepsilon$, $\delta)$ such that Algorithm~\ref{alg:dpoja} with a choice of $\zeta=0.01$ outputs $w_n$ that achieves  with probability $0.99$,
\begin{align}
	\sin^{2}\left(w_n, v_{1}\right)  
	\;=\; 
	\widetilde{O}\left(\kappa^2\Big(\frac{V}{n} + \frac{(\gamma+1)^2d^2\log^2(1/\delta)}{\varepsilon^2n^2} \,\Big)\,\right)\;,
	\label{eq:dpoja}
\end{align}
where $\widetilde{O}(\cdot)$ hides poly-logarithmic factors in $n$, $d$, $1/\varepsilon$, and $\log(1/\delta)$ and polynomial factors in $K$.
\end{theorem}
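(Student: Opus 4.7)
The plan is to reduce Algorithm~\ref{alg:dpoja} to an instance of non-private Oja run on an effective matrix sequence that absorbs the injected Gaussian noise, and then invoke Theorem~\ref{thm:non-private-oja}. Privacy is immediate from Lemma~\ref{lem:dpoja_privacy}, so all the work is on utility. By Lemma~\ref{lem:clipping_oja} together with a union bound over the $n$ iterations (taking $\zeta/n$ at each step), with probability at least $1-\zeta$ no iterate is ever clipped, so on this event the update is exactly $w_t' = w_{t-1} + \eta_t A_t w_{t-1} + 2\eta_t\beta\alpha\, z_t$.

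Because $\|w_{t-1}\|=1$, setting $\tilde A_t := A_t + 2\beta\alpha\, z_t w_{t-1}^\top$ gives $(\mathbf I_d+\eta_t\tilde A_t)w_{t-1}=w_t'$, so the trajectory $\{w_t\}$ is identical to non-private Oja's on $\{\tilde A_t\}$. Conditioned on $w_{t-1}$, independence of $z_t$ from the history and $\E[z_t]=0$ give $\E[\tilde A_t\mid w_{t-1}]=\Sigma$. I would then recompute the effective parameters: a Gaussian tail bound gives $\|z_t\|\lesssim\sqrt{d\log(n/\zeta)}$ uniformly with probability $1-\zeta$, so $\|\tilde A_t-\Sigma\|_2\leq \lambda_1\tilde M$ with $\tilde M = M+\widetilde O\bigl(\beta\alpha\sqrt d/\lambda_1\bigr)$; because cross terms vanish by independence, both $\E[(\tilde A_t-\Sigma)(\tilde A_t-\Sigma)^\top\mid w_{t-1}]$ and its transpose-order analogue have spectral norm at most $\lambda_1^2 V + O(\beta^2\alpha^2 d)$, so $\tilde V = V+O(\beta^2\alpha^2 d/\lambda_1^2)$.

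Substituting $\beta = C\lambda_1\sqrt d\bigl(K\gamma\log^a(nd/\zeta)+1\bigr)$ and $\alpha = O(\log(n/\delta)/(\varepsilon\sqrt n))$ yields $\tilde V = V + \widetilde O\bigl(d^2(\gamma+1)^2\log^2(1/\delta)/(\varepsilon^2 n)\bigr)$, and a weaker analogous expression for $\tilde M$. Applying Theorem~\ref{thm:non-private-oja} with $T=n$, a constant $\alpha_{\rm oja}>1/2$ in the step size chosen large enough to render the $d(\xi/T)^{2\alpha_{\rm oja}}$ burn-in term negligible, and $\zeta=0.01$ gives $\sin^2(w_n,v_1) = \widetilde O\bigl(\kappa^2\tilde V/n\bigr)$, which expands to the claimed rate. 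The stated sample-size hypothesis is precisely what makes $n$ exceed the burn-in threshold $\xi = \widetilde O\bigl(\kappa\tilde M + \kappa^2(\tilde V+1)\bigr)$: the dominant noise contribution requires $n \gtrsim \kappa^2 d^2(\gamma+1)^2\log^2(1/\delta)/(\varepsilon^2 n)$, i.e., $n \gtrsim \kappa d(\gamma+1)\log(1/\delta)/\varepsilon$, while the $\kappa\tilde M$ constraint is weaker and absorbed.

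The main obstacle I foresee is the reduction itself: $\{\tilde A_t\}$ is not an i.i.d.~sequence because $z_t w_{t-1}^\top$ is coupled to the iterate $w_{t-1}$. What needs to be checked is that the proof of Theorem~\ref{thm:non-private-oja} in \cite{jain2016streaming} uses only conditional unbiasedness, conditional second-moment bounds, and uniform high-probability control of $\|\tilde A_t-\Sigma\|_2$ -- all established above -- rather than global independence across $t$. This adaptivity check is routine but is the step most likely to require care.
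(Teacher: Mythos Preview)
Your approach is essentially correct and arrives at the same effective parameters $\tilde M,\tilde V$ and the same final rate as the paper. The one substantive difference is in how you absorb the Gaussian noise into an effective matrix sequence, and this difference is exactly the point where you (rightly) flag a concern.

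You set $\tilde A_t := A_t + 2\beta\alpha\, z_t w_{t-1}^\top$, which depends on the iterate $w_{t-1}$ and therefore makes $\{\tilde A_t\}$ a non-i.i.d., adapted sequence. You then have to argue that the analysis of Theorem~\ref{thm:non-private-oja} goes through under conditional-moment assumptions rather than true independence. The paper sidesteps this entirely by a rotational-invariance coupling: it replaces the vector noise $z_t$ with $G_t w_{t-1}$, where $G_t\in\reals^{d\times d}$ has i.i.d.\ $\cN(0,1)$ entries independent of everything else. Since $\|w_{t-1}\|=1$, the conditional law of $G_t w_{t-1}$ given the history is exactly $\cN(0,\mathbf I_d)$, so the trajectory law is unchanged; but now the effective matrix $B_t := A_t + 2\alpha\beta\, G_t$ is genuinely i.i.d.\ and independent of $w_{t-1}$, and Theorem~\ref{thm:non-private-oja} applies verbatim with no adaptivity check. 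The variance bound then comes from $\|\E[G_tG_t^\top]\|_2 = O(d)$ (Lemma~\ref{lem:norm_noise}), which matches the $O(\beta^2\alpha^2 d)$ term you obtained from the transpose-order second moment.

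So: your route would work if the Oja analysis is robust to martingale-difference inputs (it plausibly is, but that is an extra verification), whereas the paper's coupling trick removes the issue at no cost. Everything downstream---substituting $\alpha,\beta$, choosing $h=\Theta(\log n)$ in the learning-rate schedule to kill the $d(\xi/n)^{2h}$ burn-in term, and reading off the sample-size condition---is the same in both arguments.
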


The first term in Eq.~\eqref{eq:dpoja} matches the non-private error rate for Oja's algorithm in Eq.~\eqref{eq:oja} with $\alpha=O(\log n)$ and $T=n$, and the second term is the price we pay for ensuring $(\varepsilon,\delta)$-DP. 

\begin{remark}
    For a canonical setting of a Gaussian data with $A_i=x_ix_i^\top$ and $x_i\sim{\cal N}(0,\Sigma)$, we have, for example from \cite[Lemma~1.12]{rigollet2015high}, that $M=O(d \log(n))$, $V=O(d )$, $K=4$, $a=1$, and $\gamma^2=O(1)$.    Theorem~\ref{thm:dpoja} implies the following error rate:
\begin{eqnarray}
    \label{eq:dpoja_Gauss}
     \sin^2\left(w_n, v_{1}\right) \;=\; \tilde O \Big( \kappa^2 \Big( \frac{d}{n} + \frac{d^2\log^2(1/\delta)}{\varepsilon^2 n^2} \Big) \Big) \;.
\end{eqnarray}
\label{rem:gauss}
\end{remark}
Comparing to a lower bound in Theorem~\ref{thm:lb}, this is already near optimal. However, for general distributions satisfying Assumption~\ref{asmp:A}, Algorithm~\ref{alg:dpoja} (in particular the second term in Eq.~\eqref{eq:dpoja}) can be significantly sub-optimal. 
We explain this second weakness of Private Oja's Algorithm in the following section (the first weakness is the restriction on $\varepsilon=O(\sqrt{\log(n/\delta)/n})$).

\vspace*{-5pt}
\section{Two remaining challenges} \vspace*{-3pt}
\label{sec:challenge}

We explain the two remaining challenges in Private Oja's Algorithm and propose techniques to overcomes these challenges that lead to the design of DP-PCA (Algorithm~\ref{alg:adaptive_mean}).

\medskip\noindent 
{\bf First challenge:  restricted range of $\varepsilon=O(\sqrt{\log(n/\delta)/n})$.} 
This
 is due to the large number, $n$, of iterations that necessitates the use of  
the amplification by shuffling in Theorem~\ref{thm:shuffling}. We  reduce the number of iterations with 
 minibatch SGD. For $T=O(\log^2 n)$ and $t=1,2,\ldots,T$, we repeat 
 \begin{eqnarray}
     \label{eq:minibatchsgd}
     w'_t \gets w_{t-1} +  \frac{\eta_t}{B}\sum_{i=1+B(t-1)}^{Bt-1}{\rm clip}_\beta(A_iw_{t-1}) + \frac{w\eta_t \beta \alpha }{B}z_t
     \;,\text{ and }\;\;
     w_t \gets w'_t/\|w_t'\| \;, 
 \end{eqnarray}
where $z_t\sim{\cal N}(0,{\bf I}_d)$ and the minibatch size is $B=\lfloor n/T\rfloor$.
Since the dataset is accessed only $T=O(\log^2 n)$ times, the end-to-end privacy is analyzed with the serial composition  (Lemma~\ref{lem:serial}) instead of the amplification by shuffling. This ensures $(\varepsilon,\delta)$-DP for any $\varepsilon=O(1)$, resolving the first challenge, and still achieves the utility guarantee of Eq.~\eqref{eq:dpoja}.

\medskip\noindent 
{\bf 
Second challenge: excessive noise for privacy. 
}  
This is best explained with an example. 
\begin{example}[Signal and noise separation]
\label{ex:toy}
Consider a setting with $A_i=x_ix_i^\top$ and 
$x_i \;=\; s_i + n_i$ where $s_i=v$ with probability half and $s_i=-v$ otherwise 
for a unit norm vector $v$ and $n_i\sim{\cal N}(0,\sigma^2 {\bf I})$. 
We want to find the principal component of $\Sigma={\mathbb E}[x_ix_i^\top]=vv^\top + \sigma^2{\bf I}$, which is $v$.
This construction decomposes the signal and the noise. For $A_i=vv^\top + s_in_i^\top + n_is_i^\top + n_in_i^\top$, the signal component is determined by $vv^\top$  that is deterministic due to the sign cancelling.  The noise component is $x_in_i^\top + n_is_i^\top + n_in_i^\top$  which is random. We can control the 
Signal-to-Noise Ratio (SNR), $1/\sigma^2$, by changing $\sigma^2$, and we are particularly interested in the regime where $\sigma^2$ is small. 
As we are interested in $\sigma^2<1$, this satisfies Assumption~\ref{asmp:A} with
$\lambda_1=1+\sigma^2$, $\lambda_2=\sigma^2$,  
$V=O(d \sigma^2)$, 
$K = O(1)$, $a=1$, and $\gamma^2 =\sigma^2$. Substituting this into Eq.~\eqref{eq:dpoja}, Private Oja's Algorithm achieves 
\begin{eqnarray}
    \label{eq:toy_oja} 
    \sin^2(w_n,v_1)\; = \; \tilde O\Big( \frac{\sigma^2 d}{n} + \frac{d^2\log(1/\delta)}{\varepsilon^2n^2}\Big) \;, 
\end{eqnarray}
where we are interested in $\sigma^2< 1$. 
\end{example}
This is problematic since the second term, due to the DP noise, does not vanish as the randomness $\sigma^2$ in the data decreases. 
We do not observe this for Gaussian data where  signal and noise scale proportionally as shown below. 
We reduce the noise we add for privacy, by switching from a simple norm clipping, that adds noise  proportional to the norm of the gradients, to private estimation, that only requires the noise to scale as the {\em range} of the gradients, i.e. the maximum distance between two gradients in the minibatch. The toy example above showcases that the range can be arbitrarily smaller than the maximum norm (Fig.~\ref{fig:grad}). 
We want to emphasize that although 
the idea of using private estimation within an optimization has been conceptually proposed in abstract settings, e.g., in \cite{kamath2021improved}, DP-PCA is the first setting where $(i)$ such separation between the norm and the range of the gradients holds  under any  statistical model, and hence $(ii)$ the long line of recent advances in private estimation provides significant gain over the simple DP-SGD \cite{abadi2016deep}.

\begin{figure}[!t]
\centering
\includegraphics[width=.45\textwidth]{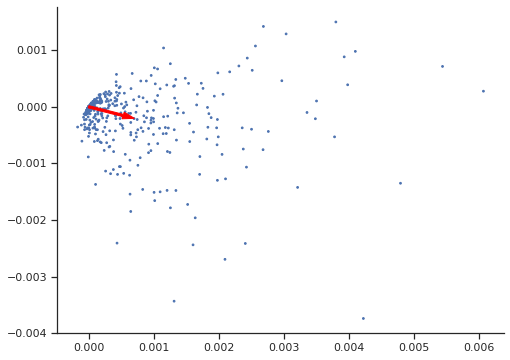}
\includegraphics[width=.45\textwidth]{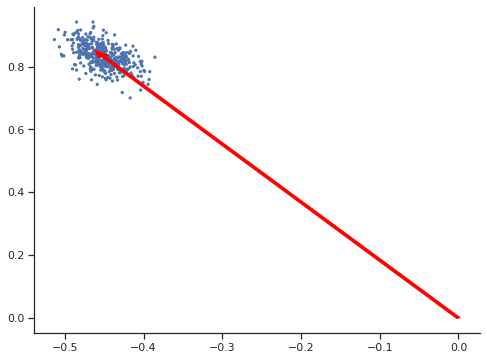}\vspace*{-5pt}
\caption{2-d  PCA under the Gaussian data from Remark~\ref{rem:gauss} (left) shows that the average gradient (red arrow) is smaller than  the range of the  minibatch of 400 gradients (blue dots). Under Example~\ref{ex:toy} (right), the range  can be made arbitrarily smaller than the average gradient by changing $\sigma^2$.} 
\label{fig:grad}
\end{figure}

\section{Differentially Private Principal Component Analysis (DP-PCA)}
\label{sec:dppca}

Combining the two ideas of minibatch SGD and private mean estimation, we propose DP-SGD. 
We use minibatch SGD of minibatch size $B=O(n/\log^2 n)$ to allow for larger range of $\varepsilon=O(1)$. We use Private Mean Estimation to add an appropriate level of noise chosen adaptively according to Private Eigenvalue Estimation. 
We describe details of both sub-routines in Section~\ref{sec:dppca-mean}.

\begin{algorithm2e}[!t]    
   \caption{Differentially Private Principal Component Analysis (DP-PCA)} 
   \label{alg:adaptive_mean} 
   	\DontPrintSemicolon 
	\KwIn{$S=\{A_i\}_{i=1}^n$, $(\varepsilon ,\delta )$, 
	batch size $B\in{\mathbb Z}_+$,
	learning rates $\{\eta_t\}_{t=1}^{\lfloor n/B\rfloor}$, 
	 probability $\zeta\in(0,1)$}
	\SetKwProg{Fn}{}{:}{}
	{ 
	Choose $w_0$ uniformly at random from the unit sphere\\
	\For{$t=1, 2, \ldots, T= \lfloor n/B\rfloor$}{ 
		Run Private Top Eigenvalue Estimation (Algorithm~\ref{alg:eigen}) with $(\varepsilon/2 , \delta/2 )$-DP and failure probability $\zeta/(2T)$ on $\{A_{B(t-1)+i} w_{t-1}\}_{i=1}^{\lfloor B/2\rfloor}$. Let the returned estimation be $\hat{\Lambda}_t>0$.\\	
	
		Run Private Mean Estimation (Algorithm~\ref{alg:dpmean}) with $\left(\varepsilon/2, \delta/2 \right)$-DP, failure probability $\zeta/(2T)$, and the estimated eigenvalue $2\hat{\Lambda}_t$ on $\left\{A_{B(t-1)+\lfloor B/2\rfloor+i} w_{t-1}\right\}_{i\in \lfloor B/2\rfloor}$. Let the returned mean gradient estimate  be $\hat{g}_t\in{\mathbb R}^d$.\\
		$w_{t}'\gets   w_{t-1}+\eta_t\hat{g}_t \;\;,\;\;\;\;	$ 
		$w_{t} \gets w_{t}'/\|w_{t}'\|$ 
	}
	Return $w_T$
	} 
\end{algorithm2e}

We show an upper bound on the error achieved by DP-PCA under an appropriate choice of the learning rate. We provide a complete proof in Appendix~\ref{sec:dppca_proof} that includes the explicit choice of the learning rate $\eta_t$ in Eq.~\eqref{eq:def_lr}, and a proof sketch is provided in Section~\ref{sec:sketch}.

 \begin{theorem}
    \label{thm:main} 
    For $\varepsilon\in(0,0.9)$, DP-PCA guarantees  $(\varepsilon,\delta)$-DP for all $S$, $B$, $\zeta$, and $\delta$. 
Given $n$ i.i.d.~samples $\{A_i\in{\mathbb R}^{d\times d}\}_{i=1}^n$ satisfying Assumption~\ref{asmp:A} with parameters $(\Sigma,M,V,K,\kappa,a,\gamma^2)$, 
if 
\begin{eqnarray}
 n \;=\; \tilde O \Big( \, e^{\kappa^2} + \frac{d^{1/2} (\log(1/\delta))^{3/2}}{\varepsilon}    
 + \kappa M + \kappa^2 V  + \frac{d\,\kappa\,\gamma\,(\log(1/\delta))^{1/2}}{\varepsilon}\,\Big) \;,
 \end{eqnarray}
 with a large enough constant and $\delta\leq 1/n$, then there exists a positive universal constant $c_1$ and a choice of learning rate $\eta_t$ that depends on $(t,M$, $V$, $K$, $a$, $\lambda_1$, $\lambda_1-\lambda_2$, $n$,  $d$,  $\varepsilon$, $\delta)$ such that $T=\lfloor n/B\rfloor $ steps of 
DP-PCA in Algorithm~\ref{alg:adaptive_mean} with  choices of $\zeta=0.01$ and $B=c_1n/(\log n)^2$, 
outputs $w_T$ such that with probability $0.99$,
\begin{align}
	\sin\left(w_T, v_{1}\right)  
	\;=\; 
	\widetilde{O}\left(\kappa \Big(\sqrt{\frac{V}{n}} + \frac{\gamma d \sqrt{\log(1/\delta)}}{\varepsilon n} \,\Big)\,\right)\;,
	\label{eq:main}
\end{align}
where $\widetilde{O}(\cdot)$ hides poly-logarithmic factors in $n$, $d$, $1/\varepsilon$, and $\log(1/\delta)$ and polynomial factors in $K$.
\end{theorem}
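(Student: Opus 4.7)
The plan separates into privacy and utility. For privacy, every sample $A_i$ is used by exactly one of the $2T$ subroutine calls, since the $T = \lfloor n/B\rfloor$ minibatches are disjoint and each minibatch is split into two disjoint halves feeding Private Top Eigenvalue Estimation and Private Mean Estimation, respectively. Both subroutines are $(\varepsilon/2,\delta/2)$-DP, so composition of DP across this disjoint partition of the dataset immediately yields $(\varepsilon,\delta)$-DP end-to-end. The dependence on $w_{t-1}$ does not affect privacy accounting because $w_{t-1}$ is computed only from earlier minibatches that are disjoint from minibatch $t$, and hence is a post-processing function from the current subroutine's point of view.

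For utility, I would argue iteration-by-iteration, conditioning on the filtration $\mathcal F_{t-1}$ generated by $w_{t-1}$ and the data used in previous iterations, so the current minibatch is i.i.d.\ from the data distribution and independent of $w_{t-1}$. Three per-iteration guarantees, each holding with probability at least $1-\zeta/(2T)$, drive the analysis: (i) Private Top Eigenvalue Estimation outputs $\hat\Lambda_t$ that upper bounds $\lambda_1^2\|H_{w_{t-1}}\|_2$ within a constant factor, so the noise added by the mean estimator is correctly calibrated to the (unknown) directional variance; (ii) Private Mean Estimation outputs $\hat g_t$ satisfying
\begin{equation*}
\|\hat g_t-\Sigma w_{t-1}\|\;\le\;\widetilde{O}\!\Big(\lambda_1\sqrt{V/B}+\lambda_1\gamma\, d\,\sqrt{\log(1/\delta)}/(\varepsilon B)\Big),
\end{equation*}
which decomposes into the empirical-mean concentration from Assumption~\ref{asmp:A3} plus the variance-adapted Gaussian privacy noise; and (iii) the conditional bias $\E[\hat g_t \mid \mathcal F_{t-1}] - \Sigma w_{t-1}$ is either zero or small enough to be absorbed into the high-probability error. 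A union bound over the $T$ iterations costs only logarithmic factors.

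With these per-iteration bounds, I would reduce the convergence analysis to Theorem~\ref{thm:non-private-oja} by defining an effective random matrix $\tilde A_t := \bar A_t + (\hat g_t - \bar g_t)w_{t-1}^\top$, where $\bar A_t$ is the empirical mean of the second half of minibatch $t$, so that $\tilde A_t w_{t-1} = \hat g_t$ and $\E[\tilde A_t \mid \mathcal F_{t-1}] = \Sigma$ up to negligible DP bias. Verifying a conditional version of Assumption~\ref{asmp:A} for $\{\tilde A_t\}$ with effective parameters $(M_{\mathrm{eff}}, V_{\mathrm{eff}})$---where $M_{\mathrm{eff}}$ absorbs the high-probability range of the private noise and $V_{\mathrm{eff}}$ accounts for both the $1/B$ sampling reduction and the privacy-noise contribution---and then plugging in $T = n/B$, $B = c_1 n/(\log n)^2$, with the learning rate from the appendix, exactly balances the $\alpha^2 \kappa^2 V_{\mathrm{eff}}/T$ variance term against the deterministic $d(\xi/T)^{2\alpha}$ decay in~\eqref{eq:oja}, yielding the stated $\sin(w_T, v_{1})$ rate after taking square roots.

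The main obstacle I anticipate is the variance-adapted Private Mean Estimation guarantee itself: to replace Private Oja's pessimistic $(\gamma+1)\sqrt{d}$ norm bound (arising from clipping in Lemma~\ref{lem:clipping_oja}) by a sharp $\gamma$-dependent private noise, one must first privately estimate the true directional variance and then use that estimate to calibrate the Gaussian mechanism, all under the sub-Gaussian-like tails of Assumption~\ref{asmp:A4} and while preserving $(\varepsilon/2,\delta/2)$-DP end-to-end for the composed eigenvalue-then-mean procedure. A secondary difficulty is meeting the burn-in condition $T > \beta$ of Theorem~\ref{thm:non-private-oja} with only $T = O(\log^2 n)$ iterations: suppressing the $d(\xi/T)^{2\alpha}$ term forces $\alpha$ to grow polylogarithmically in $n$, which in turn pushes the $e^{\kappa^2}$ and other polynomial terms into the sample-complexity lower bound on $n$.
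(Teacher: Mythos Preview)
Your proposal is correct and follows the paper's strategy: privacy via parallel composition over the disjoint minibatch halves, and utility by constructing an effective per-iteration matrix so that the private update is recast as a noisy Oja step, then invoking Theorem~\ref{thm:non-private-oja} with effective parameters $(\widetilde M,\widetilde V)$ and the learning rate from the appendix.

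The one substantive difference is how the effective matrix is built. You take the rank-one correction $\tilde A_t := \bar A_t + (\hat g_t - \bar g_t)\,w_{t-1}^\top$, whereas the paper exploits that, on the no-truncation event of Lemma~\ref{lem:dpmean}, the additive DP noise is exactly spherical Gaussian; by rotation invariance and $\|w_{t-1}\|=1$ one may write $\hat g_t - \bar g_t = \beta_t G_t w_{t-1}$ for a fresh i.i.d.\ Gaussian matrix $G_t\in\reals^{d\times d}$, giving $B_t=\bar A_t+\beta_t G_t$. Both constructions yield the same dominant effective variance (in each case $\max\{\|\E[(\cdot)(\cdot)^\top]\|_2,\|\E[(\cdot)^\top(\cdot)]\|_2\}=\beta_t^2 d$, the asymmetry in your rank-one form appearing only in the transposed product) and hence the same rate. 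The paper's version buys one simplification: after replacing the random $\beta_t$ by the deterministic upper bound $\beta$, the law of $B_t$ no longer involves $w_{t-1}$, so Theorem~\ref{thm:non-private-oja} applies without needing the ``conditional version of Assumption~\ref{asmp:A}'' you mention; your rank-one form keeps $w_{t-1}$ explicit and so requires the martingale-style argument you allude to. Your identification of the two delicate points---the $\gamma$-adaptive calibration replacing the $(\gamma+1)$ of Lemma~\ref{lem:clipping_oja}, and the $e^{\kappa^2}$ burn-in from forcing $\alpha=\Theta(\log n)$ with only $T=\Theta(\log^2 n)$ iterations---matches exactly what the paper does.
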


We further interpret this analysis and show that $(i)$ DP-PCA is nearly optimal when the data is from a  Gaussian distribution by comparing against a lower bound (Theorem~\ref{thm:lb}); and $(ii)$ DP-PCA significantly improves upon the private Oja's algorithm under Example~\ref{ex:toy}. 
We discuss the necessity of some of the assumptions at the end of this section, including how to agnostically find the appropriate learning rate scheduling. 

\medskip\noindent{\bf Near-optimality of DP-PCA under Gaussian distributions.} 
Consider the case of i.i.d.~samples $\{x_i\}_{i=1}^n$ from a Gaussian distribution from Remark~\ref{rem:gauss}.  

\begin{coro}[Upper bound; Gaussian distribution]
    \label{cor:GaussUB}
    Under the hypotheses of Theorem~\ref{thm:main} and $\{A_i=x_ix_i^\top\}_{i=1}^n$ with Gaussian random vectors  $x_i$'s, after $T=n/B$ steps, DP-PCA outputs $w_T$ that achieves, with  probability $0.99$,  
\begin{eqnarray}
    \label{eq:GaussUB} 
      \sin(w_T, v_1 ) 
\; = \;  \tilde O\left( \kappa \left (\sqrt{\frac{d}{n}}+\frac{d\sqrt{\log(1/\delta)}}{\varepsilon n}\right)
  \right) \; .
\end{eqnarray} 
\end{coro}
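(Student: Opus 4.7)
The plan is a direct specialization of Theorem~\ref{thm:main}: I would substitute the Gaussian-specific values of $(M,V,K,a,\gamma^2)$ stated in Remark~\ref{rem:gauss} into Eq.~\eqref{eq:main}. All the algorithmic and sample-complexity machinery has already been done inside Theorem~\ref{thm:main}, so the only genuine content of the corollary is verifying that, for $A_i=x_ix_i^\top$ with $x_i\sim\cN(0,\Sigma)$, the parameters in Assumption~\ref{asmp:A} satisfy $M=\tilde O(d)$, $V=O(d)$, $K=O(1)$, $a=1$, and crucially $\gamma^2=O(1)$. The last of these is the only quantity whose bound is not essentially immediate from standard Gaussian tail estimates, and it is exactly what enables the $d/(\varepsilon n)$ (rather than $d^{3/2}/(\varepsilon n)$) cost-of-privacy rate.

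First I would verify $M$ and the sub-exponential parameters. A union bound on standard Gaussian norm concentration gives $\max_{i\in[n]}\|x_i\|^2=\tilde O(\lambda_1 d)$ with probability $1-\zeta$, hence $\|A_i-\Sigma\|_2\leq \|x_i\|^2+\lambda_1=\tilde O(\lambda_1 d)$, yielding $M=\tilde O(d)$. Assumption~\ref{asmp:A2} is stated as an almost-sure bound, but it is only invoked through a high-probability event in the analysis of Theorem~\ref{thm:main}, so the union-bound failure probability can be folded into the overall $0.99$ success guarantee. For \ref{asmp:A3}, an Isserlis-theorem calculation of $\E[(A_i-\Sigma)(A_i-\Sigma)^\top]$ bounds its spectral norm by $O(\lambda_1^2 d)$, giving $V=O(d)$. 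For \ref{asmp:A4}, the scalar $u^\top(x_ix_i^\top-\Sigma)v$ is a degree-two polynomial of Gaussians, so Hanson--Wright (or a direct moment-generating-function computation) gives sub-exponential tails with $K=O(1)$ and $a=1$.

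For $\gamma^2$, a short Isserlis calculation yields
\begin{equation*}
\E\bigl[(A_i-\Sigma)\,uu^\top\,(A_i-\Sigma)^\top\bigr] \;=\; (\Sigma u)(\Sigma u)^\top + (u^\top\Sigma u)\,\Sigma,
\end{equation*}
whose spectral norm is at most $\|\Sigma u\|^2+(u^\top\Sigma u)\lambda_1\leq 2\lambda_1^2$ for any unit $u$, so $\gamma^2\leq 2$. With $V=O(d)$ and $\gamma=O(1)$ in hand, Eq.~\eqref{eq:main} reads $\sin(w_T,v_1)=\tilde O(\kappa(\sqrt{d/n}+d\sqrt{\log(1/\delta)}/(\varepsilon n)))$, which is exactly Eq.~\eqref{eq:GaussUB}, and the sample-complexity hypothesis of Theorem~\ref{thm:main} collapses to $n=\tilde O(e^{\kappa^2}+\kappa^2 d+d\kappa\sqrt{\log(1/\delta)}/\varepsilon)$, a regime in which the stated bound is non-trivial. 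I do not anticipate any real obstacle; the only substantive step is the Isserlis computation that pins $\gamma^2$ at a constant, and the only cosmetic point is the almost-sure phrasing of \ref{asmp:A2}, handled by conditioning.
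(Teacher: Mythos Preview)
Your proposal is correct and matches the paper's approach: the corollary is obtained by substituting the Gaussian parameter values from Remark~\ref{rem:gauss} ($M=\tilde O(d)$, $V=O(d)$, $K=O(1)$, $a=1$, $\gamma^2=O(1)$) into Theorem~\ref{thm:main}. You in fact supply more than the paper does---the paper simply asserts these values citing \cite{rigollet2015high}, whereas you carry out the Isserlis computation showing $\E[(A_i-\Sigma)uu^\top(A_i-\Sigma)^\top]=(u^\top\Sigma u)\Sigma+(\Sigma u)(\Sigma u)^\top$ and hence $\gamma^2\le 2$, and you correctly flag the cosmetic issue with the almost-sure phrasing of \ref{asmp:A2}.
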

We prove a nearly matching lower bound, up to  factors of $\sqrt{\lambda_1/\lambda_2}$ and $\sqrt{\log(1/\delta)}$. One caveat is that the lower bound assumes  {\em pure}-DP with $\delta=0$. We do not yet have a lower bound technique for approximate DP that is tight, and all known approximate DP lower bounds have gaps to achievable upper bounds 
in its dependence in $\log(1/\delta)$, e.g.,    \cite{barber2014privacy,liu2021differential}. 
We provide a proof in Appendix~\ref{sec:lb_proof}. 

\begin{theorem}[Lower bound; Gaussian distribution]
    \label{thm:lb}
	 Let $\cM_\varepsilon$ be a class of $(\varepsilon,0)$-DP estimators that map  $n$ i.i.d. samples  to an estimate $\hat{v} \in{\mathbb R}^d$. A set of Gaussian distributions with $(\lambda_1,\lambda_2)$ as the first and second eigenvalues of the covariance matrix is denoted by $\cP_{(\lambda_1,\lambda_2)}$. There exists a universal constant $C>0$ such that
	\begin{align}
\inf_{\hat{v}\in \cM_{\varepsilon}}\sup_{P\in \cP_{(\lambda_1 , \lambda_2)} }\E_{S\sim P^n}\left[\sin(\hat{v}(S), v_1 )\right]
\; \geq \;  C\min\left( \kappa\left (\sqrt{\frac{d}{n}}+\frac{d}{\varepsilon n}\right)\sqrt{\frac{\lambda_2}{\lambda_1}}, 1\right)\;.
\label{eq:GaussLB} 
\end{align} 
\end{theorem}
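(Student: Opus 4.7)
The plan is to establish both terms of the lower bound via Fano's inequality applied to a single packing of the unit sphere. The $\sqrt{d/n}$ term will follow from the classical (statistical) Fano bound with $n\cdot KL$ as the information budget, while the $d/(\varepsilon n)$ term will follow from the pure-DP variant of Fano, in which the information budget becomes $n\varepsilon\cdot TV$.

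First I would fix a small constant $\alpha\in(0,1/4)$ and construct a $2\alpha$-packing $\{v_1,\dots,v_N\}\subset \mathbb{S}_2^{d-1}$ in the sine (equivalently chordal) distance with $\log N = \Omega(d)$; such packings exist by a standard volume argument on the sphere. For each $v_j$ set $\Sigma_j := \lambda_2 I_d + (\lambda_1-\lambda_2)v_jv_j^\top$ and $P_j := \mathcal{N}(0,\Sigma_j)$; then $\Sigma_j$ has top eigenvector $v_j$ with eigenvalue $\lambda_1$ and all remaining eigenvalues equal to $\lambda_2$, so $P_j \in \mathcal{P}_{(\lambda_1,\lambda_2)}$. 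Because $\det\Sigma_j = \lambda_1\lambda_2^{d-1}$ is independent of $j$, a short calculation with $\Sigma_k^{-1}\Sigma_j$ (expanding the two rank-one terms and taking traces) yields
\[
KL(P_j\|P_k) \;=\; \tfrac{(\lambda_1-\lambda_2)^2}{2\lambda_1\lambda_2}\sin^2(v_j,v_k),
\]
and Pinsker then gives $TV(P_j,P_k)\le \tfrac{\lambda_1-\lambda_2}{2\sqrt{\lambda_1\lambda_2}}\sin(v_j,v_k) = \tfrac{1}{2\kappa}\sqrt{\lambda_1/\lambda_2}\,\sin(v_j,v_k)$.

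For the non-private term, classical Fano on the product measures $\{P_j^n\}$ gives that any estimator has $\sin(\hat v,v_j)\ge \alpha$ with constant probability as long as $n\cdot\max_{j,k}KL(P_j\|P_k) \lesssim \log N \asymp d$, i.e.\ as long as $\alpha \lesssim \kappa\sqrt{d/n}\sqrt{\lambda_2/\lambda_1}$. For the private term I would invoke the pure-DP Fano bound, which follows from the coupling characterization of pure DP: for any $(\varepsilon,0)$-DP mechanism $\mathcal{A}$ and any two distributions $P,Q$, one can couple $P^n$ and $Q^n$ with expected Hamming distance $n\cdot TV(P,Q)$, and group privacy then yields $d_{TV}(\mathcal{A}(P^n),\mathcal{A}(Q^n)) \le O(n\varepsilon\, TV(P,Q))$. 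Plugging this into Fano replaces the information budget by $n\varepsilon\cdot\max_{j,k}TV(P_j,P_k)$, so any $(\varepsilon,0)$-DP estimator must fail unless $\alpha \lesssim \kappa d\sqrt{\lambda_2/\lambda_1}/(n\varepsilon)$. Markov's inequality converts both statements into expectation lower bounds, and taking the maximum of the two forbidden-$\alpha$ thresholds (truncated at $1$, since $\sin\le 1$) yields Eq.~\eqref{eq:GaussLB}.

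The main obstacle is the DP-Fano step---specifically justifying the inequality $d_{TV}(\mathcal{A}(P^n),\mathcal{A}(Q^n)) \lesssim n\varepsilon\,TV(P,Q)$ from the coupling/group-privacy argument and combining it with a $\log N \asymp d$ packing. The reason this route produces $d/(\varepsilon n)$ scaling (rather than $\sqrt{d/(\varepsilon n)}$ as the non-private bound would suggest by analogy) is that the quantity $n\varepsilon\cdot TV$ is \emph{linear} in $\sin(v_j,v_k)$, not quadratic as $n\cdot KL$ is; resolving this linear relation for the desired $\sin(v_j,v_k)\ge 2\alpha$ extracts one factor of $\sqrt{d}$ from $\log N$ and one more from inverting $TV\propto\alpha$. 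The remaining ingredients (sphere packing, the KL formula for spiked covariances, and the conversion from probability to expectation) are routine.
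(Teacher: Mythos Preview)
Your proposal is correct and follows essentially the same route as the paper: a localized packing of the sphere at scale $\alpha$, the spiked-covariance family $\Sigma_v=(\lambda_1-\lambda_2)vv^\top+\lambda_2 I_d$, the closed-form KL together with Pinsker for TV, and then DP Fano (the paper invokes the DP Fano inequality of Acharya et al.\ as a black box, whereas you sketch its coupling/group-privacy derivation). The one point to tighten is that you need a \emph{two-sided} packing---pairwise $\sin$ distances both $\gtrsim\alpha$ and $\lesssim\alpha$---so that $\max_{j,k}KL\asymp\alpha^2$ and $\max_{j,k}TV\asymp\alpha$ actually hold; the paper uses the local packing construction of Vu--Lei for exactly this purpose and then optimizes over $\alpha$.
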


\medskip \noindent
{\bf Comparisons with private 
Oja's algorithm.}  
We demonstrate that DP-PCA can significantly improve upon Private Oja's Algorithm with Example~\ref{ex:toy}, where DP-PCA achieves an error bound of 
$ \sin (w_T,v_1) = \tilde O \big( \sigma\sqrt{d/n} + \sigma d \sqrt{\log(1/\delta)}/(\varepsilon n ) \big)$. As the noise power $\sigma^2$ decreases DP-PCA achieves a vanishing error, whereas Private Oja's Algorithm has a non-vanishing error in  Eq.~\eqref{eq:toy_oja}. 
This follows from the fact that the second term in the error bound in Eq.~\eqref{eq:main} scales as $\gamma$, which can be made arbitrarily smaller than the second term in Eq.~\eqref{eq:dpoja} that scales as $(\gamma+1)$.  Further, the error bound for DP-PCA holds for any $\varepsilon=O(1)$, whereas Private Oja's Algorithm requires significantly smaller $\varepsilon=O(\sqrt{\log(n/\delta)/n})$.

\medskip\noindent 
{\bf Remarks on the assumptions of Theorem~\ref{thm:main}.} 
We have an exponential dependence of the sample complexity in the spectral gap, $n\geq  \exp(\kappa^2)$. This ensures we have a large enough  $T=\lfloor n/B \rfloor$ to reduce the non-dominant second term in Eq.~\eqref{eq:oja}, in balancing the learning rate $\eta_t$ and  $T$ (which is explicitly shown in Eqs.~\ref{eq:balance1} and \eqref{eq:balance2} in the Appendix). It is possible to get rid of this exponential dependence at the cost of an extra term of $\tilde O(\kappa^4\gamma^2d^2\log(1/\delta)/(\varepsilon n )^2)$ in the error rate in Eq.~\eqref{eq:main}, 
 by selecting a slightly larger $T = c \kappa^2 \log^2 n$. 
A Gaussian-like tail bound in Assumption~\ref{asmp:A4} is necessary to get the desired upper bound scaling as $\tilde O( d\sqrt{\log (1/\delta)}/(\varepsilon n))$ in Eq.~\ref{eq:GaussUB}, for example. 
The next lower bound shows that without such assumptions on the tail, the error due to privacy scales as $\Omega(\sqrt{d\wedge \log(1/\delta) / (\varepsilon n)}) $. We believe that the dependence in $\delta$ is loose, and it might be possible to get a tighter lower bound using  \cite{kamath2022new}. 
We provide a proof and other lower bounds in Appendix~\ref{sec:lb}.  


\begin{theorem}[Lower bound without Assumption~\ref{asmp:A4}]
    \label{thm:lbht3}
	 Let $\cM_\varepsilon$ be a class of $(\varepsilon,\delta)$-DP estimators that map  $n$ i.i.d.~samples  to an estimate $\hat{v} \in{\mathbb R}^d$.  A set of distributions satisfying Assumptions~\ref{asmp:A1}--\ref{asmp:A3} with 
	 $M=\tilde{O}(d+\sqrt{n\varepsilon/d} )$,  $V=O(d)$ and $\gamma=O(1)$ is denoted by $\tilde \cP$. 
	 For $d\geq 2 $, there exists a universal constant $C>0$ such that
	\begin{align}
\inf_{\hat{v}\in \cM_{\varepsilon}} 
\sup_{P\in \tilde \cP}\E_{S\sim P^n}\left[\sin(\hat{v}(S), v_1 )\right] 
\; \geq \;  C\kappa\min\left( \sqrt{\frac{ d \wedge \log \left(\left(1-e^{-\varepsilon}\right) / \delta\right)}{\varepsilon n}}, 1\right)\;.
\label{eq:htLB3} 
\end{align} 
\end{theorem}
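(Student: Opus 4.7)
The plan is to apply the standard two-point Le Cam reduction, leveraging approximate-DP group privacy together with a carefully designed heavy-tailed ``spike'' family that deliberately violates Assumption~\ref{asmp:A4}.

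First, I would construct a family $\{P_v\}_{v \in \mathbb{S}_2^{d-1}}$ indexed by the desired top eigenvector. Take $A_i = x_i x_i^\top$ with $x_i = z_i + b_i \sigma_i R v$, where $z_i \sim \cN(0, \mathbf{I}_d)$, $b_i \sim \text{Bernoulli}(p)$, and $\sigma_i \in \{\pm 1\}$ is a Rademacher sign, all mutually independent. A direct calculation gives $\Sigma_v := \E[A_i] = \mathbf{I}_d + p R^2 v v^\top$, so $v$ is the top eigenvector with eigengap $pR^2$. The parameters $R$ (large) and $p$ (small) will be calibrated so that Assumptions~\ref{asmp:A1}--\ref{asmp:A3} hold with the stated $M = \tilde O(d + \sqrt{n\varepsilon/d})$, $V = O(d)$, and $\gamma = O(1)$; this amounts to verifying $\|A_i - \Sigma_v\|_2 \lesssim R^2 + \|z_i\|^2$ with high probability and computing the second moments of $A_i - \Sigma_v$ in closed form using the independence of $z_i$, $b_i$, $\sigma_i$. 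Heavy tails (and hence failure of A.4) come from the small spike probability combined with the large magnitude $R$.

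Next, for two directions $v_0, v_1 \in \mathbb{S}_2^{d-1}$ with $\sin(v_0, v_1) = \alpha$, I would couple the $n$-sample datasets from $P_{v_0}$ and $P_{v_1}$ by sharing all of $(z_i, b_i, \sigma_i)_{i=1}^n$; the coupled samples then differ only on the ``spike'' indices $\{i : b_i = 1\}$, whose number is $K \sim \text{Binomial}(n, p)$. Choose $p$ so that $K \leq k^\star := d \wedge \log((1-e^{-\varepsilon})/\delta)$ holds with constant probability. Then invoke the group-privacy inequality for $(\varepsilon,\delta)$-DP: for datasets $D_0, D_1$ differing in $k^\star$ entries and any $(\varepsilon,\delta)$-DP mechanism $M$,
\[
\Pr[M(D_0) \in S] \;\leq\; e^{k^\star \varepsilon}\,\Pr[M(D_1) \in S] + \frac{e^{k^\star \varepsilon}-1}{e^\varepsilon - 1}\,\delta \; .
\]
The calibration of $k^\star$ makes the right-hand side bounded by a constant strictly less than one, so the output laws under $P_{v_0}^n$ and $P_{v_1}^n$ are statistically indistinguishable; Le Cam's method then forces $\E[\sin(\hat v(S), v)] \gtrsim \alpha$ for at least one of $v \in \{v_0, v_1\}$.

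Finally, I would solve for the largest $\alpha$ this construction can sustain. Imposing $pR^2 = \Theta(1/\kappa)$ to match the required condition number and $np = \Theta(k^\star)$ to saturate the DP group-privacy budget yields $R^2 = \Theta(n/(\kappa k^\star))$, which fits within the allowed $M$ budget provided $k^\star$ is not too small. A direct spectral computation then shows that the maximum indistinguishable separation scales as $\alpha \asymp \kappa \sqrt{k^\star /(\varepsilon n)}$, capped at $1$ when the construction degenerates. The main obstacle is the careful calibration needed to simultaneously verify Assumptions~\ref{asmp:A1}--\ref{asmp:A3} with the precise $M = \tilde O(d + \sqrt{n\varepsilon/d})$ bound, and to extract the sharp $\log((1-e^{-\varepsilon})/\delta)$ dependence (rather than a loose $\log(1/\delta)$) from the $\delta$ term of the group-privacy inequality, which requires using its exact form rather than a naive iteration of the basic $(\varepsilon,\delta)$-inequality.
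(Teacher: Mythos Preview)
Your construction (Gaussian plus a rare spike along $v$) is essentially the same family the paper builds, and the coupling by sharing the latent variables is the right idea. The gap is in the indistinguishability step: a two-point Le Cam argument with group privacy cannot deliver the $d$ in $k^\star = d\wedge\log((1-e^{-\varepsilon})/\delta)$. With only two hypotheses, the group-privacy bound
\[
\Pr[M(D_0)\in S]\;\le\;e^{k\varepsilon}\Pr[M(D_1)\in S]+\frac{e^{k\varepsilon}-1}{e^\varepsilon-1}\,\delta
\]
is informative (i.e., forces the two output laws close in total variation) only when $k\varepsilon=O(1)$. If you set $k=k^\star$ with $k^\star\varepsilon\asymp d$ or $k^\star\varepsilon\asymp\log(1/\delta)$, the factor $e^{k^\star\varepsilon}$ blows up and the inequality places no constraint on $\Pr[M(D_1)\in S]$; so your claim that ``the right-hand side is bounded by a constant strictly less than one'' fails precisely in the regime you need. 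Consequently, the assertion that ``the maximum indistinguishable separation scales as $\alpha\asymp\kappa\sqrt{k^\star/(\varepsilon n)}$'' has no source: in a two-point reduction the angular separation $\sin(v_0,v_1)$ is a free parameter, unconstrained by the DP coupling, and nothing in your calibration of $p,R,\kappa$ ties it to $k^\star$.

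The paper avoids this by replacing the two-point reduction with a \emph{packing} argument. It takes a set $\cV\subset\mathbb{S}_2^{d-1}$ with $|\cV|\ge 2^d$ and pairwise separation $\|v-v'\|\ge 1/2$, keeps the same mixture family $P_v=(1-\alpha)\cN(0,\mathbf{I}_d)+\alpha\,\mathrm{Unif}\{\pm\alpha^{-1/4}v\}$, and invokes Barber--Duchi's $(\varepsilon,\delta)$-DP packing lemma (Lemma~\ref{lemma:packing} in the paper). That lemma produces a lower bound whose leading factor is $(|\cV|-1)e^{-\varepsilon\lceil n\alpha\rceil}$ rather than $e^{-\varepsilon\lceil n\alpha\rceil}$ alone, so one may push $\varepsilon n\alpha$ up to $\log|\cV|\asymp d$ while keeping the bound a constant; the $\log((1-e^{-\varepsilon})/\delta)$ cap then enters from the additive $\delta$ term in the same lemma. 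To repair your argument you would need to upgrade from two hypotheses to an exponential-in-$d$ packing and use this packing lemma in place of raw group privacy.
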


Currently, DP-PCA requires choices of the learning rates, $\eta_t$, that depend on possibly unknown quantities. Since we can privately evaluate the quality of our solution, one can instead run multiple instances of DP-PCA with varying  $\eta_t = c_1/(c_2 + t)$ and find the best choice of $c_1>0$ and $c_2>0$. Let $w_T(c_1,c_2)$ denote the resulting solution for one instance of $\{\eta_t = c_1/(c_2 + t)\}_{t=1}^T$. We first set a target error $\zeta$. For each round $i=1,\ldots$, we will run algorithm for $(c_1, c_2) = [2^{i-1},2^{-i+1}]\times [2^{-i+1}, 2^{-i+2}\ldots, 2^{i-1}]$ and $(c_1, c_2) = [2^{-i+1}, 2^{-i+2}\ldots, 2^{i-1}]\times [2^{i-1},2^{-i+1}]$, and compute each $\sin(w_T(c_1, c_2), v_1)$ privately, each with privacy budget $\eps_i = \frac{\eps}{2^{i+1}(2i-1)}, \delta_i = \frac{\delta}{2^{i+1}(2i-1)}$. We terminate the algorithm once there there is a $w_T(c_1, c_2)$ satisfies $\sin(w_T(c_1, c_2), v_1)\le \zeta$. It is clear that this search meta-algorithm terminate in logarithmic round, and the total sample complexity only blows up by a poly-log factor.







\section{Private mean estimation for the minibatch stochastic gradients}
\label{sec:dppca-mean}

DP-PCA critically relies on private mean estimation to reduce  variance of the noise required to achieve $(\varepsilon,\delta)$-DP. 
We follow a common recipe from  \cite{karwa2018finite,kamath2019privately,kamath2020private,biswas2020coinpress, covington2021unbiased}.
First, we privately find an approximate range of the gradients in the minibatch (Alg.~\ref{alg:eigen}). Next, we apply the Gaussian mechanism to the truncated gradients where the truncation is tailored to the estimated range (Alg.~\ref{alg:dpmean}). 

\medskip\noindent{\bf Step 1: estimating the range.} We need to find an approximate range of the minibatch of gradients in order to adaptively truncate the gradients and bound the sensitivity. Inspired by a private preconditioning mechanism designed for mean estimation with unknown covariance from \cite{kamath2021private}, we propose to use privately estimated top eigenvalue of the covariance matrix of the gradients. 
For details on the version of the histogram learner we use in Alg.~\ref{alg:eigen} in Appendix~\ref{sec:eigen_proof}, we refer to \cite[Lemma D.1]{liu2021robust}. 
Unlike the private preconditioning of \cite{kamath2021private} that estimates all eigenvalues and requires $n=\widetilde{O}(d^{3/2}\log(1/\delta)/\varepsilon)$ samples, we only require the top eigenvalue and hence the next theorem shows that we only need $n=\widetilde{O}(d \log(1/\delta)/\varepsilon)$.

\begin{theorem}
    \label{thm:eigen}
	Algorithm~\ref{alg:eigen} is $(\varepsilon, \delta)$-DP. Let $g_i=A_i u$ for some fixed  vector $u$, where $A_i$ satisfies \ref{asmp:A1} and \ref{asmp:A4}  in Assumption~\ref{asmp:A} such that the mean is $\E[g_i] = \Sigma u$ and the covariance is  $\E[(g_i-\Sigma u)(g_i-\Sigma u)^\top] = \lambda_1^2 H_u $. With a large enough sample size scaling as 
	\begin{align}
		B &= O\left(\frac{K^2 \,d
		\,\log^{1+2a}(nd\log(1/(\delta\zeta))/\varepsilon)\log(1 /(\zeta\delta))}{\varepsilon}\right)
		=\tilde{O}\left(\frac{K^2 \,d\, \log(1 /\delta)}{\varepsilon}\right)\;,
	\end{align}
	Algorithm~\ref{alg:eigen}  outputs $\hat{\Lambda}$ achieving  $\hat{\Lambda}\in \left[(1/\sqrt{2})\lambda_1^2\|H_u\|_2, \sqrt{2} \lambda_1^2 \|H_u\|_2 \right]$  with probability $1-\zeta$, where the pair $(K>0,a>0 )$ parametrizes the tail of the distribution in \ref{asmp:A4} and  $\tilde{O}(\cdot)$ hides logarithmic factors in $B,d,1/\zeta, \log(1/\delta)$, and  $\varepsilon$.
\end{theorem}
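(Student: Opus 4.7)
The plan is to use the standard subsample-and-aggregate template with the Karwa--Vadhan private histogram learner (invoked through Lemma~D.1 of \cite{liu2021robust}). I would partition the $B$ samples into $m = \tilde O(\log(1/(\zeta\delta))/\varepsilon)$ disjoint sub-batches of size $k = B/m$. On each sub-batch $j \in [m]$, I compute a non-private estimate $\hat\Lambda^{(j)}$ of the target $\lambda_1^2 \|H_u\|_2$ as the top eigenvalue of the empirical covariance of $\{g_i = A_i u\}$, using pairwise differences $(g_{2\ell} - g_{2\ell-1})/\sqrt{2}$ so as to cancel the unknown mean $\Sigma u$. I then discretize $\log_{\sqrt 2}\hat\Lambda^{(j)}$ into geometric bins of ratio $\sqrt 2$ and privately release the mode. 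The algorithm returns a representative of the winning bin, which by construction differs from $\lambda_1^2 \|H_u\|_2$ by a factor in $[1/\sqrt 2, \sqrt 2]$ as long as the mode lies in the right bin.

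Privacy is immediate: each input $A_i$ enters at most one sub-batch and therefore affects at most one count in the histogram, so the histogram has $\ell_\infty$-sensitivity one. The $(\varepsilon,\delta)$-DP guarantee of the histogram learner then yields end-to-end $(\varepsilon,\delta)$-DP, with the output being a post-processing of the private counts.

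The non-trivial content is the per-sub-batch concentration. By Assumption~\ref{asmp:A4} and Lemma~\ref{lem:tail}, for any fixed unit vectors $v, w$ the scalar $v^\top (A_i - \Sigma) u u^\top (A_i - \Sigma)^\top w$ is sub-Weibull of order $1/(2a)$ with scale controlled by $K^2 \lambda_1^2 \|H_u\|_2$. Combining a standard $1/4$-net argument over $\mathbb{S}_2^{d-1}$ (paying a $e^{O(d)}$ factor in the failure probability) with a sub-Weibull Bernstein inequality applied to the pairwise sample covariance, truncated at level $K^2 \lambda_1^2 \|H_u\|_2 \log^{2a}(k/\zeta)$ as suggested by Lemma~\ref{lem:tail}, shows that $k = \Omega(K^2 d \log^{1+2a}(k/\zeta))$ samples suffice to bound the spectral error of the empirical covariance by $(\sqrt 2 - 1)\lambda_1^2\|H_u\|_2$ except on an event of probability at most $1/10$. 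Hence with probability $9/10$ we have $\hat\Lambda^{(j)} \in [2^{-1/4}, 2^{1/4}]\lambda_1^2\|H_u\|_2$, and each local estimate lands in one of (at most) two adjacent geometric bins.

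By a Chernoff bound at least $4m/5$ of the $m$ local estimates fall in this pair of bins as long as $m = \Omega(\log(1/\zeta))$. The Karwa--Vadhan histogram learner then identifies a bin whose noisy count deviates from its true count by at most $O(\log(1/(\zeta\delta))/\varepsilon)$, so taking $m = \Omega(\log(1/(\zeta\delta))/\varepsilon)$ ensures the returned bin overlaps $[2^{-1/4}, 2^{1/4}]\lambda_1^2\|H_u\|_2$, giving $\hat\Lambda \in [\lambda_1^2\|H_u\|_2/\sqrt 2, \sqrt 2\,\lambda_1^2\|H_u\|_2]$ with probability $1 - \zeta$. Multiplying $k \cdot m$ yields the claimed $B = \tilde O(K^2 d \log(1/\delta)/\varepsilon)$. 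I expect the sub-batch spectral concentration to be the main technical obstacle: Assumption~\ref{asmp:A4} only provides sub-Weibull tails (exponent $1/(2a)$), so off-the-shelf sub-Gaussian matrix Bernstein is unavailable, and the $\log^{1+2a}(\cdot)$ polylog factor in $k$ arises precisely from controlling the contribution above the truncation threshold dictated by Lemma~\ref{lem:tail}, together with the covering argument on the sphere.
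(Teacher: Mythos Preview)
Your proposal is correct and follows the same subsample-and-aggregate with private-histogram template as the paper: pairwise differences to cancel the mean, $k=\Theta(\log(1/(\delta\zeta))/\varepsilon)$ disjoint sub-batches, per-sub-batch top eigenvalue, and a geometric-bin histogram. The two minor deviations are worth flagging. First, for the sub-batch spectral concentration you anticipate a net over $\mathbb{S}_2^{d-1}$ plus a scalar sub-Weibull Bernstein; the paper instead takes a union bound over the $d$ coordinate directions (via Lemma~\ref{lem:tail}) to get an almost-sure bound $\|\tilde g_i\|\leq 2K\lambda_1\sqrt{d\|H_u\|_2}\log^a(bd/\zeta)$, conditions on that event, and then invokes an off-the-shelf bounded-sample matrix Bernstein (\cite[Corollary~6.20]{wainwright2019high}). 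This sidesteps the sub-Weibull analysis entirely and yields the same $b=\Omega(K^2 d\,\log^{1+2a}(\cdot))$ requirement with less work. Second, the paper does not use your Chernoff step: it simply union-bounds over the $k$ sub-batches so that \emph{all} $\lambda_1^{(j)}$ land in at most two adjacent $2^{1/4}$-ratio bins, and then argues the histogram learner returns one of those two bins. Your majority-based argument is also valid and costs the same in orders.
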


We provide a proof in Appendix~\ref{sec:eigen_proof}.
There are other ways to privately estimate the range. 
Some approaches require known bounds such as $ \sigma_{\rm min}^2 \leq \lambda_1^2 (  H_u)_{ii} \leq \sigma_{\rm max}^2 $ for all $i\in[d]$ \cite{karwa2018finite}, and other agnostic approaches  are more involved such as instance optimal universal estimators of \cite{dong2021universal}. 

\medskip\noindent{\bf Step 2: Gaussian mechanism for mean estimation.} 
Once we have a good estimate of the top eigenvalue from the previous section, we use it to select the bin size of the private histogram and compute the  truncated empirical mean. Since truncated empirical mean has a bounded sensitivity, we can use Gaussian mechanism to achieve DP. The algorithm is now standard in DP mean estimation, e.g., \cite{karwa2018finite,kamath2019privately}. However, the analysis is slightly different since our  assumptions on $g_i$'s are different. 
For completeness, we provide the Algorithm~\ref{alg:dpmean} in Appendix~\ref{sec:dpmean_proof}. 

The next lemma shows that the Private Mean Estimation is $(\varepsilon,\delta)$-DP, and with high probability clipping does not apply to any of the gradients.  The returned private mean, therefore, is distributed as a spherical Gaussian centered at the  empirical mean of the gradients. This result requires that we have a good estimate of the top eigenvalue from Alg.~\ref{alg:eigen} such that $\hat{\Lambda}\simeq \lambda_1^2 \|H_u\|_2$. This analysis implies that we get an unbiased estimate of the  gradient mean (which is critical in the analysis) with noise scaling as $\tilde O(\lambda_1 \gamma\sqrt{d\log(1/\delta)} /(\varepsilon B))$, where $\gamma^2=\max_{u:\|u\|=1} \|H_u\|_2$ (which is critical in getting the tight sample complexity in the second term of the final utility guarantee in Eq.~\eqref{eq:main}). We provide a proof in Appendix~\ref{sec:dpmean_proof}. 

\begin{lemma}
	For $\varepsilon\in(0,0.9)$ and any $\delta\in(0,1)$, Algorithm~\ref{alg:dpmean} is $(\varepsilon, \delta)$-DP. Let $g_i=A_i u$ for some fixed  vector $u$, where $A_i$ satisfies \ref{asmp:A1} and \ref{asmp:A4}  in Assumption~\ref{asmp:A} such that the mean is $\E[g_i] = \Sigma u$ and the covariance is  $\E[(g_i-\Sigma u)(g_i-\Sigma u)^\top] =\lambda_1^2 H_u $. If $\hat{\Lambda}\in [\lambda_1^2\|H_u\|_2/\sqrt{2}, \sqrt{2}\lambda_1^2 \|H_u\|_2]$, $\delta\leq 1/B$, and  $B=\Omega((\sqrt{d\log(1/\delta)}/\varepsilon)\log(d/(\zeta\delta)))$ then,  with probability $1-\zeta$, we have $g_i\in \bar{g}+\left[-3K\sqrt{\hat{\Lambda}}\log^a(Bd/\zeta), 3K\sqrt{\hat{\Lambda}}\log^a(Bd/\zeta)\right]^d$ for all $i\in [B].$
	\label{lem:dpmean}
\end{lemma}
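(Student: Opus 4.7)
The plan is to treat Algorithm~\ref{alg:dpmean} as a two-stage procedure---a coordinate-wise private histogram producing a rough center $\bar g\in\mathbb{R}^d$, followed by a Gaussian-mechanism release of the empirical mean of the $g_i$'s truncated to $\bar g+[-R,R]^d$ with $R:=3K\sqrt{\hat\Lambda}\log^a(Bd/\zeta)$---and analyze the privacy and no-clip statements separately. For privacy, I would run each stage at $(\varepsilon/2,\delta/2)$-DP and combine by basic composition. The histogram stage is the expensive one: applying a one-dimensional $(\varepsilon'',\delta'')$-DP histogram to each of $d$ coordinates and using advanced composition forces $\varepsilon''=\Theta(\varepsilon/\sqrt{d\log(1/\delta)})$, and the per-coordinate sample requirement of \cite[Lemma D.1]{liu2021robust} then produces the $\sqrt{d\log(1/\delta)}/\varepsilon$ factor in the stated lower bound on $B$. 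The Gaussian stage releases a mean whose per-coordinate sensitivity is at most $2R/B$ on the no-clip event, so calibrating the noise by Lemma~\ref{lem:gauss} delivers the other half of the budget.

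For the no-clip claim I start from Lemma~\ref{lem:tail}. Applying it with the outer unit vector equal to the iterate $u$ and the test vector equal to the standard basis vector $e_j$, and using that $\Sigma$ is symmetric to rewrite $u^\top(A_i^\top-\Sigma)e_j = e_j^\top(A_i-\Sigma)u = (g_i)_j-(\Sigma u)_j$, yields
\begin{align*}
|(g_i)_j - (\Sigma u)_j| \;\leq\; K\lambda_1\sqrt{\|H_u\|_2}\,\log^a(1/\zeta')
\end{align*}
with probability $1-\zeta'$. A union bound over $i\in[B]$ and $j\in[d]$ with $\zeta'=\zeta/(4Bd)$, combined with the hypothesis $\hat\Lambda\geq\lambda_1^2\|H_u\|_2/\sqrt 2$, gives on an event of probability at least $1-\zeta/2$ the uniform bound $|(g_i)_j-(\Sigma u)_j|\leq C_1 K\sqrt{\hat\Lambda}\log^a(Bd/\zeta)$ for a universal constant $C_1<3$.

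It then suffices to show $|\bar g_j-(\Sigma u)_j|\leq C_2 K\sqrt{\hat\Lambda}\log^a(Bd/\zeta)$ with $C_1+C_2\leq 3$, which I obtain from the histogram guarantee. The tail bound of the previous paragraph shows that at least a constant fraction of the $g_i$'s lie in every interval of length $\Theta(R)$ centered at $(\Sigma u)_j$. With bin width chosen of order $R$ and $B$ meeting the stated sample-complexity requirement, \cite[Lemma D.1]{liu2021robust} guarantees, with probability at least $1-\zeta/2$, that the histogram returns a bin whose representative $\bar g_j$ lies within $O(R)$ of this populated region, and hence of $(\Sigma u)_j$. Two triangle inequalities then close the argument. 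The main obstacle is bookkeeping: pinning down constants tightly enough to land exactly at the factor $3$ in the statement, and verifying that the single stated lower bound on $B$ simultaneously supports advanced composition in the privacy step, concentration of the histogram onto the correct bin, and the $B\cdot d$ union bound in the tail step---in particular that the polylogarithmic slack $\log(d/(\zeta\delta))$ is enough to absorb all three failure budgets.
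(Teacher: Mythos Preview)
Your plan matches the paper's proof almost exactly: split the $(\varepsilon,\delta)$ budget between the $d$ per-coordinate histograms (via advanced composition, Lemma~\ref{lem:composition}) and the Gaussian mechanism on the truncated mean, then argue utility by combining the tail bound of Lemma~\ref{lem:tail} (union over $i\in[B]$, $j\in[d]$) with the histogram accuracy guarantee and a triangle inequality $|g_{ij}-\bar g_j|\le |g_{ij}-\mu_j|+|\mu_j-\bar g_j|$.

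Two small points are worth flagging. First, your phrase ``per-coordinate sensitivity is at most $2R/B$ \emph{on the no-clip event}'' is dangerous: differential privacy must hold for all inputs, not on a high-probability event. The correct statement---which the algorithm's explicit truncation step gives you for free---is that the sensitivity of the \emph{truncated} mean is bounded by $2R\sqrt d/B$ \emph{deterministically}; the no-clip event is only used later in the utility analysis to conclude that truncation did nothing. Second, the paper makes the constant~$3$ fall out cleanly by choosing the histogram bin width to be the small fixed quantity $\tau=2^{1/4}K\sqrt{\hat\Lambda}\log^a(25)$ rather than $\Theta(R)$. With that choice, $|\bar g_j-\mu_j|\le 2\tau=O(K\sqrt{\hat\Lambda})$ has no $\log^a(Bd/\zeta)$ factor, so it is absorbed into the dominant term $|g_{ij}-\mu_j|\le K\lambda_1\sqrt{\|H_u\|_2}\log^a(Bd/\zeta)$ (after using $\hat\Lambda\ge\lambda_1^2\|H_u\|_2/\sqrt2$) and the bookkeeping you worry about disappears.
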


\subsection{Proof sketch of Theorem~\ref{thm:main}} 
\label{sec:sketch}

We choose $B=\Theta(n/\log^2 n)$ such that we access the dataset only $T=\Theta(\log^2 n)$ times. Hence we do not need to rely on amplification by shuffling. 
To add Gaussian noise that scales as the standard deviation of the gradients in each minibatch (as opposed to potentially excessively large mean of the gradients), DP-PCA adopts techniques from recent advances in private mean estimation. Namely, we first get a private and accurate estimate of the  range from Theorem~\ref{thm:eigen}. Using this estimate, $\hat{\Lambda}$, Private Mean Estimation returns an unbiased estimate of the empirical mean of the gradients, as long as no truncation has been applied as ensured by Lemma~\ref{lem:dpmean}. This gives 
\begin{align}
		w_{t}'&\gets  w_{t-1}+\eta_t \left(\frac{1}{B}\sum_{i=1}^B A_{B(t-1)+i}w_{t-1}  +\beta_t z_t\right) \;,
\end{align}
for $z_t\sim{\cal N}(0,{\bf I})$ and  $\beta_t=\frac{8K\sqrt{2\hat{\Lambda}_t}\log^a(Bd/\zeta)\sqrt{2d\log(2.5/\delta)}}{\varepsilon B}$. Using rotation invariance of spherical Gaussian random vectors and the fact that $\|w_{t-1}\|=1$, we can reformulate it as 
\begin{align}
		w_{t}'&\gets  w_{t-1}+\eta_t \underbrace{\left(\frac{1}{B}\sum_{i=1}^B A_{B(t-1)+i} +\beta_t G_t\right)}_{\tilde A_t}w_{t-1} \;.
\end{align}
This process can be analyzed with Theorem~\ref{thm:non-private-oja} with $\tilde A_t$ substituting $A_t$.

\section{Discussion}
\label{sec:discussion}

Under the canonical task of computing the principal component from i.i.d.~samples, we show the first result achieving a near-optimal error rate. This critically relies on two ideas: minibatch SGD and private mean estimation. In particular, private mean estimation plays a critical role in the case when the range of the gradients is significantly smaller than the norm; we achieve an optimal error rate that cannot be achieved with the standard recipe of gradient clipping or even with a more sophisticated adaptive clipping \cite{andrew2021differentially}.

Assumption~\ref{asmp:A4} can be relaxed to heavy-tail bounds with bounded $k$-th moment on $A_i$, in which case we expect the second term in Eq.~\eqref{eq:main} to scale as 
$O(d(\sqrt{\log(1/\delta)}/\varepsilon n)^{1-1/k})$, drawing analogy from a similar trend in a computationally inefficient DP-PCA without spectral gap  \cite[Corollary 6.10]{liu2021differential}. 
 When a fraction of data is corrupted, recent advances in \cite{xu2010principal,kong2020robust,jambulapati2020robust} provide optimal algorithms for PCA. However, existing approach of \cite{liu2021differential} for robust and private PCA is computationally intractable. 
Borrowing ideas from robust and private mean estimation in \cite{liu2021robust}, one can design an efficient algorithm, but at the cost of sub-optimal sample complexity. It is an interesting direction to design an optimal and robust version of DP-PCA.
Our lower bounds are loose in its dependence in $\log(1/\delta)$. Recently, a  promising lower bound technique has been introduced in \cite{kamath2022new} that might close  this gap. 

\section*{Acknowledgement} 
This work is supported in part by Google faculty research award and NSF grants CNS-2002664, IIS-1929955, DMS-2134012, CCF-2019844 as a part of NSF Institute for Foundations of Machine Learning (IFML), CNS-2112471 as a part of NSF AI Institute for Future Edge Networks and Distributed Intelligence (AI-EDGE).

\bibliographystyle{plain}
\bibliography{ref}

\newpage 
\appendix
\section*{Appendix}
\section{Related work}
\label{sec:related}

Our work builds upon a series of advances in 
private SGD \cite{kamath2021improved, bassily2014private, bassily2019private, feldman2020private, kulkarni2021private, wang2020differentially, hu2021high}
to make  advance in understanding the tradeoff of privacy and sample complexity for PCA. Such tradeoffs have been studied extensively in canonical statistical estimation problems of mean (and covariance) estimation and linear regression. 

\medskip\noindent
\textbf{Private mean estimation.} 
As one of the most fundamental problem in private data analysis, mean estimation is initially studied under the bounded support assumptions, and the optimal error rate is now well understood. More recently,~\cite{barber2014privacy} considered the private mean estimation problem for $k$-th moment bounded distributions where the support of the data is \textit{unbounded} and provided minimax error bound in various settings. \cite{karwa2018finite} studied private mean estimation from Gaussian sample, and obtained an optimal error rate. There has been a lot of recent interests on private mean estimation under various assumptions, including mean and covariance joint estimation ~\cite{kamath2019privately, biswas2020coinpress}, heavy-tailed mean estimation~\cite{kamath2020private}, mean estimation for general distributions~\cite{feldman2018calibrating, tzamos2020optimal}, distribution adaptive mean estimation~\cite{bun2019average}, estimation for unbounded distribution parameters~\cite{kamath2021private}, mean estimation under pure differential privacy~\cite{hopkins2021efficient}, local differential privacy~\cite{duchi2019lower, duchi2018minimax, gaboardi2019locally, joseph2019locally,li2022robustness}, user-level differential privacy~\cite{esfandiari2021tight,esfandiari2021tight}, Mahalanobis distance\cite{brown2021covariance,liu2021differential} and robust and differentially private mean estimation~\cite{liu2021robust, kothari2021private, liu2021differential}.

\medskip\noindent
\textbf{Private linear regression} 
The goal of private linear regression is to learn a linear predictor of response variable $y$ from a set of examples $\{x_i, y_i\}_{i=1}^n$ while guarantee the privacy of the examples. Again, the work on private linear regression can be divided into two categories: deterministic and randomized. In the deterministic setting where the data is deterministically given without any probabilistic assumptions, significant advances in DP linear regression has been made 
\cite{vu2009differential,kifer2012private,mir2013differential,dimitrakakis2014robust,bassily2014private,wang2015privacy,foulds2016theory, minami2016differential, wang2018revisiting, sheffet2019old}. In the randomized settings where each example $\{\x_i, y_i\}$ is drawn i.i.d. from a distribution, \cite{dwork2009differential} proposes an exponential time algorithm that achieves asymptotic consistency. \cite{cai2019cost} provides an efficient and minimax optimal algorithm under sub-Gaussian design and nearly identity covariance assumptions. Very recently, \cite{liu2021differential} for the first time gives an exponential time algorithm that achieves minimax risk for general covariance matrix under sub-Gaussian and hypercontractive assumptions.

\medskip\noindent{\bf Private PCA without spectral gap.}
There is a long line of work in 
Private PCA 
\cite{hardt2012beating, hardt2013beyond, hardt2014noisy, blum2005practical, PPCA, kapralov2013differentially, dwork2014analyze, balcan2016improved}. We explain the closely related ones in Section~\ref{sec:comparisonPCA}, providing interpretation when the covariance matrix has a spectral gap.   

When there is no spectral gap, one can still learn a principal component. However, since the principal component is not unique, the error is typically measured in how much of the variance is captured in the estimated direction: 
$1-\hat v^\top\Sigma \hat v/\|\Sigma\|$. 
    \cite{PPCA} introduces  an exponential mechanism (from \cite{mcsherry2007mechanism}) which samples an estimate from a distribution $f_{\widehat \Sigma}(\hat v) =(1/C)\exp\{ ((\varepsilon n)/c^2)\hat v^\top \widehat \Sigma \hat v\}$, where $C$ is a normalization constant to ensure that the pdf integrates to one. This achieves a stronger pure DP, i.e., $(\varepsilon,0)$-DP, but is computationally expensive;  \cite{PPCA} does not provide a tractable implementation and  \cite{kapralov2013differentially} provides a polynomial time implementation with time complexity at least cubic in  $d$. 
This achieves an error rate of $1-\hat v^\top\Sigma \hat v/\|\Sigma\| =\tilde O(d^2 /(\varepsilon n)) $ in  \cite[Theorem 7]{PPCA} when samples are from Gaussian ${\cal N}(0,\Sigma)$, which,  when there is a spectral gap, translates into 
\begin{eqnarray}
    \sin(\hat v, v_1)^2 &=& \tilde O\Big(\frac{\kappa  d^2}{ \varepsilon n }\Big) \;,    
\end{eqnarray}
with high probability. 
Closest to our setting is the analyses in \cite[Corollary 6.5]{liu2021differential} that proposed an exponential mechanism that achieves 
$1-\hat v^\top\Sigma \hat v/\|\Sigma\| =\tilde O(\sqrt{d/n} + (d+\log(1/\delta))/(\varepsilon n)) $ with high probability under 
$(\varepsilon,\delta)$-DP and  Gaussian samples, but this algorithm is  computationally intractable. This is shown to be tight when there is no spectral gap. When there is a spectral gap, this translates into 
\begin{eqnarray}
    \sin(\hat v, v_1)^2 &=& \tilde O\Big( \kappa 
    \Big(\sqrt{\frac{d}{n}} + \frac{d+\log(1/\delta)}{\varepsilon n}\Big) \Big) \;.     
\end{eqnarray} 
As these algorithms and the corresponding analyses are tailored for gap-free cases, they have better dependence on  $\kappa$  and worse dependence on $d/n$ and $d/\varepsilon n$, 
compared to the proposed DP-PCA and its error rate in Corollary~\ref{cor:GaussUB}. 

\section{Preliminary on differential privacy}
\label{sec:dp}


\begin{lemma}[Stability-based histogram {\cite[Lemma~2.3]{karwa2018finite}}]\label{lem:hist-KV17} For every $K\in \mathbb{N}\cup \infty$, domain $\Omega$, for every collection of disjoint bins $B_1,\ldots, B_K$ defined on $\Omega$, $n\in \mathbb{N}$, $\eps\geq 0,\delta\in(0,1/n)$, $\beta>0$ and $\alpha\in (0,1)$ there exists an $(\eps,\delta)$-differentially private algorithm $M:\Omega^n\to \mathbb{R}^K$ such that for any set of data $X_1,\ldots,X_n\in \Omega^n$
\begin{enumerate}
\item $\hat{p}_k = \frac{1}{n}\sum_{X_i\in B_k}1$
\item $(\tilde{p}_1,\ldots,\tilde{p}_K)\gets M(X_1,\ldots,X_n),$ and
\item
$$
n\ge \min\left\{\frac{8}{\eps\beta}\log(2K/\alpha),\frac{8}{\eps\beta}\log(4/\alpha\delta)\right\} 
$$
\end{enumerate}
then,
$$
\mathbb{P}(|\tilde{p}_k-\hat{p}_k|\le\beta)\ge 1-\alpha
$$
\end{lemma}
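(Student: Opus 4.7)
The plan is to exhibit a two-branch mechanism and show that each branch is $(\varepsilon,\delta)$-DP and meets the stated accuracy bound, with the ``$\min$'' in the sample complexity reflecting a choice between $\varepsilon$-DP Laplace noise (good when $K$ is finite) and a stability-based thresholded variant (which absorbs the dependence on $K$ into the $\delta$ budget). The core tools I would use are the standard Laplace mechanism together with the observation that the empirical histogram query $(\hat p_1,\ldots,\hat p_K)$ has $\ell_1$-sensitivity $2/n$, since changing one data point can move mass $1/n$ out of one bin and $1/n$ into another.

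First I would analyze the pure-DP branch. Define $\tilde p_k := \hat p_k + \mathrm{Lap}(2/(n\varepsilon))$ independently for each $k\in[K]$, which is $\varepsilon$-DP by the Laplace mechanism and the sensitivity computation. A standard tail bound $\Pr(|\mathrm{Lap}(b)|>t)=e^{-t/b}$ together with a union bound over all $K$ bins gives
\begin{align}
\Pr\!\Big(\max_{k\in[K]}|\tilde p_k - \hat p_k|>\beta\Big)\;\le\; 2K\,\exp\!\left(-\beta n\varepsilon/2\right).
\end{align}
Requiring the right-hand side to be at most $\alpha$ yields $n\ge (2/(\beta\varepsilon))\log(2K/\alpha)$, matching the first branch of the stated bound up to an absolute constant (which I attribute to allowing some slack in the noise scale).

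Next I would handle the approximate-DP branch, which must work when $K$ may be infinite. I would follow a stability-based recipe: compute $\tilde p_k = \hat p_k + \mathrm{Lap}(b)$ and release $\tilde p_k$ only for those bins whose noisy count exceeds a threshold $\tau$, reporting all other bins as zero. The privacy argument is that for any bin $k$ with $\hat p_k = 0$ on one of the two neighboring datasets, the probability of that bin being released is at most $\Pr(\mathrm{Lap}(b) \geq \tau - 1/n)$; choosing $\tau = \Theta(\log(1/\delta)/(n\varepsilon))$ makes this at most $\delta$, and the Laplace mechanism handles the remaining bins to give $(\varepsilon,\delta)$-DP without any dependence on $K$. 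Accuracy then only needs $\beta \gtrsim \tau$ plus a single-bin Laplace tail bound, yielding $n \ge (8/(\beta\varepsilon))\log(4/(\alpha\delta))$. The hard part will be calibrating $\tau$ and $b$ simultaneously so that the truncation is essentially never triggered for truly populated bins while the stability condition remains tight enough to charge only $\delta$ to the privacy parameter; getting the constants to land on the clean form $(8/(\varepsilon\beta))\log(4/(\alpha\delta))$ requires carefully balancing these competing constraints.
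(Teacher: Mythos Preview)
The paper does not supply its own proof of this lemma: it is quoted verbatim as a preliminary (Lemma~\ref{lem:hist-KV17}) from \cite[Lemma~2.3]{karwa2018finite} and never argued in the text. So there is nothing in the paper to compare your proposal against.

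That said, your sketch is the standard proof of the Karwa--Vadhan stability-based histogram and is essentially correct. A couple of minor points worth tightening if you carry it through. First, in the stability branch the privacy argument needs a hair more care than ``the probability of releasing an empty bin is at most $\delta$'': on neighboring datasets at most one bin can go from empty to nonempty, so you only need to charge $\delta$ once, but you should also verify that the Laplace noise on the \emph{nonempty} bins (possibly infinitely many in principle, but only finitely many with nonzero count) composes to the claimed $\varepsilon$ via the usual $\ell_1$-sensitivity argument. Second, the accuracy guarantee for the thresholded branch is not quite ``a single-bin Laplace tail bound'': you need both that every truly occupied bin survives the threshold \emph{and} that its noisy value is within $\beta$ of the truth, which is where the factor $4$ inside the logarithm (rather than $2$) and the constant $8$ (rather than $2$) come from. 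Your instinct that calibrating $\tau$ and $b$ simultaneously is the delicate step is exactly right.
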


Since we focus on one-pass algorithms where a data point is only accessed once, we need a basic parallel composition of DP. 

\begin{lemma}[Parallel composition \cite{mcsherry2009privacy}]
    \label{lem:parallel} Consider a sequence of interactive queries 
    $\{q_k\}_{k=1}^K$ each operating on a subset $S_k$ of the database and each satisfying  $(\varepsilon, \delta)$-DP.
    If  $S_k$'s are disjoint then the composition $(q_1(S_1), q_2(S_2), \ldots, q_K(S_K))$ is
    $(\varepsilon,\delta)$-DP.
\end{lemma}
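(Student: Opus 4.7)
The plan is to reduce the composition to a single application of $(\varepsilon,\delta)$-DP on the one subset that actually contains the differing record. Let $S$ and $S'$ be neighboring databases in the sense that $|S\setminus S'|+|S'\setminus S|\leq 1$, and let $S_1,\ldots,S_K$ and $S_1',\ldots,S_K'$ be the corresponding partitions (where each $S_k$ is the restriction of $S$ to the $k$-th block, and similarly for $S'$). Because the blocks are disjoint, the symmetric difference between $S$ and $S'$ lies entirely inside a single index $k^\star\in[K]$; for every other index $k\neq k^\star$ we have $S_k=S_k'$.

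Next I would fix an arbitrary measurable event $A\subseteq \mathrm{range}(q_1)\times\cdots\times\mathrm{range}(q_K)$ and try to bound $\mathbb{P}((q_1(S_1),\ldots,q_K(S_K))\in A)$. The natural approach is to condition on the outputs of the unchanged queries one at a time, using the fact that queries may be interactive: the query $q_k$ can depend on the outputs of $q_1,\ldots,q_{k-1}$, but for $k\neq k^\star$ its input $S_k=S_k'$ is identical under both databases, so conditional on the previous outputs being equal, $q_k$ produces the same distribution under $S$ and $S'$. Formally, I would write the joint probability as an iterated integral over the $K$ outputs in order, and observe that each factor corresponding to $k\neq k^\star$ is identical between $S$ and $S'$, while the single factor at $k=k^\star$ is governed by $(\varepsilon,\delta)$-DP of $q_{k^\star}$ on the neighboring pair $(S_{k^\star},S_{k^\star}')$.

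Then I would apply the $(\varepsilon,\delta)$-DP guarantee for $q_{k^\star}$ in the form $\mathbb{P}(q_{k^\star}(S_{k^\star})\in A_{k^\star}\mid \text{history})\leq e^{\varepsilon}\mathbb{P}(q_{k^\star}(S_{k^\star}')\in A_{k^\star}\mid \text{history})+\delta$ pointwise in the conditioning history (this is where the interactive version of the DP definition is needed, but it holds because the mechanism $q_{k^\star}$ is $(\varepsilon,\delta)$-DP for every fixed choice of auxiliary input coming from the earlier outputs). Integrating over the product structure of the other factors (which contributes no additional loss because those conditional distributions coincide under $S$ and $S'$) and marginalizing yields $\mathbb{P}((q_1(S_1),\ldots,q_K(S_K))\in A)\leq e^{\varepsilon}\mathbb{P}((q_1(S_1'),\ldots,q_K(S_K'))\in A)+\delta$, which is exactly $(\varepsilon,\delta)$-DP for the composed mechanism.

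The main subtlety, and the step I would be most careful about, is handling the \emph{interactive} nature of the queries: one must make sure that when $q_{k^\star}$ is invoked in the middle of the sequence, the conditioning on the earlier outputs does not inflate the privacy loss. This works precisely because the earlier queries $q_k$ for $k<k^\star$ use only $S_k=S_k'$, so their output distributions (and hence the conditioning event) have the same law under $S$ and $S'$; and the later queries $k>k^\star$ either reuse unchanged blocks or use outputs that are already coupled by the earlier step. Once this coupling argument is made precise (e.g., by induction on $K$, peeling off one query at a time), the lemma follows immediately with no loss in either the $\varepsilon$ or the $\delta$ parameter.
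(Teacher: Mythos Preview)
Your proof plan is correct and is the standard argument for parallel composition. However, the paper does not actually prove this lemma: it is stated in the preliminaries (Appendix on differential privacy) as a cited fact from \cite{mcsherry2009privacy}, with no accompanying proof. Your argument---identifying the unique block $k^\star$ that can contain the differing record, coupling the identical conditional distributions for $k\neq k^\star$, and invoking the $(\varepsilon,\delta)$-DP guarantee once at $k^\star$---is essentially the proof one finds in McSherry's original paper, so there is nothing further to compare.
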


We also utilize the following serial composition theorem. 

\begin{lemma}[Serial composition \cite{dwork2014algorithmic}]
    \label{lem:serial} 
    If a database is accessed with an $(\varepsilon_1,\delta_1)$-DP mechanism and then with an $(\varepsilon_2,\delta_2)$-DP mechanism, then the end-to-end privacy guarantee is $(\varepsilon_1+\varepsilon_2,\delta_1+\delta_2)$-DP.
\end{lemma}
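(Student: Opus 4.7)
The plan is to prove this standard adaptive composition theorem. Model the first access as an $(\varepsilon_1,\delta_1)$-DP mechanism $M_1:\mathcal{X}^n\to \mathcal{Y}_1$ and the second as $M_2:\mathcal{X}^n\times\mathcal{Y}_1\to\mathcal{Y}_2$ that is $(\varepsilon_2,\delta_2)$-DP in its first argument for every fixed value of its second; the second slot captures the fact that the second mechanism may adapt to the released output of $M_1$. The composed release is $M(S):=(M_1(S),M_2(S,M_1(S)))$, and the goal is to show, for all neighboring $(S,S')$ and every measurable $A\subseteq\mathcal{Y}_1\times\mathcal{Y}_2$,
\[ \Pr[M(S)\in A]\ \le\ e^{\varepsilon_1+\varepsilon_2}\,\Pr[M(S')\in A]+\delta_1+\delta_2. \]

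I would first condition on the output of $M_1$. Write $A_{y_1}:=\{y_2:(y_1,y_2)\in A\}$ and let $P_1^S$ denote the law of $M_1(S)$, so that
\[ \Pr[M(S)\in A]\ =\ \int_{\mathcal{Y}_1}\Pr[M_2(S,y_1)\in A_{y_1}]\,dP_1^S(y_1). \]
Two ingredients then drive the bound. First, applying $(\varepsilon_2,\delta_2)$-DP of $M_2$ at each fixed $y_1$ gives $\Pr[M_2(S,y_1)\in A_{y_1}]\le e^{\varepsilon_2}\Pr[M_2(S',y_1)\in A_{y_1}]+\delta_2$. Second, I would lift the set-based $(\varepsilon_1,\delta_1)$-DP of $M_1$ to a statement about expectations of $[0,1]$-valued functions via a layer-cake identity: for any measurable $f:\mathcal{Y}_1\to[0,1]$, $\E_{P_1^S}[f]=\int_0^1\Pr[f(M_1(S))>t]\,dt\le e^{\varepsilon_1}\E_{P_1^{S'}}[f]+\delta_1$, where each superlevel set is plugged into the set-based DP definition. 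Substituting the first inequality into the integral and then applying the lifted form of DP with $f(y_1):=\Pr[M_2(S',y_1)\in A_{y_1}]$ yields
\[ \Pr[M(S)\in A]\ \le\ e^{\varepsilon_1+\varepsilon_2}\Pr[M(S')\in A]+e^{\varepsilon_2}\delta_1+\delta_2, \]
which is the desired inequality up to a spurious factor $e^{\varepsilon_2}$ in front of $\delta_1$.

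The main obstacle is removing this factor to land exactly on $\delta_1+\delta_2$. To close the gap I would invoke the standard ``good-event'' characterization of approximate DP (e.g.\ Dwork--Roth Lemma 3.17): for each neighboring pair $(S,S')$ there is an event $G\subseteq\mathcal{Y}_1$ with $P_1^S(G^c)\le\delta_1$ on which the likelihood ratio $dP_1^S/dP_1^{S'}$ is pointwise bounded by $e^{\varepsilon_1}$. Splitting the outer integral as $\int_G+\int_{G^c}$, the $G^c$ piece contributes at most $\delta_1$ because the inner probability lies in $[0,1]$, while on $G$ the pure pointwise ratio bound composes multiplicatively with the pointwise DP of $M_2$ and contributes $e^{\varepsilon_1+\varepsilon_2}\Pr[M(S')\in A]+\delta_2$ (applying the analogous good-event decomposition on the $M_2$ side so that the residual factor on $\delta_2$ is absorbed as well). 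Summing the two pieces delivers $(\varepsilon_1+\varepsilon_2,\delta_1+\delta_2)$-DP, and a straightforward induction extends the statement to composition of any finite number of sequential accesses.
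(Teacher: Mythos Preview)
The paper does not prove this lemma; it is stated as a standard fact with a citation to \cite{dwork2014algorithmic}, so there is no in-paper proof to compare against.

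Your approach is the standard one and is essentially correct. One small wrinkle: the parenthetical about ``applying the analogous good-event decomposition on the $M_2$ side so that the residual factor on $\delta_2$ is absorbed'' is unnecessary. Once you split the outer integral over $G$ and $G^c$ for $M_1$, on $G$ you have
\[
\int_G \Pr[M_2(S,y_1)\in A_{y_1}]\,dP_1^S(y_1)\ \le\ \int_G\bigl(e^{\varepsilon_2}\Pr[M_2(S',y_1)\in A_{y_1}]+\delta_2\bigr)\,dP_1^S(y_1),
\]
and the $\delta_2$ term is already bounded by $\delta_2\,P_1^S(G)\le\delta_2$ with no extra factor; the remaining piece is handled by the pointwise ratio bound $dP_1^S\le e^{\varepsilon_1}\,dP_1^{S'}$ on $G$. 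So the set-based DP of $M_2$ suffices on the inner mechanism and no second good-event argument is needed. With that simplification your argument gives exactly $(\varepsilon_1+\varepsilon_2,\delta_1+\delta_2)$-DP.
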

When we apply private histogram learner to each coordinate, we require more advanced composition theorem from \cite{kairouz2015composition}. 
\begin{lemma}[Advanced composition \cite{kairouz2015composition}]
    \label{lem:composition}
    For $\varepsilon\leq0.9$, 
    an end-to-end guarantee of $(\varepsilon,\delta)$-differential privacy is satisfied if a database  is accessed $k$ times, each with  a $(\varepsilon/(2\sqrt{2k\log(2/\delta)}),\delta/(2k))$-differential private mechanism. 
\end{lemma}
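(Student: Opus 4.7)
The plan is to invoke the general $k$-fold adaptive advanced composition theorem established in \cite{kairouz2015composition}: for any $\varepsilon_0 \geq 0$, $\delta_0 \in [0,1)$, and any $\delta' \in (0,1]$, the adaptive composition of $k$ mechanisms each satisfying $(\varepsilon_0,\delta_0)$-DP is $(\varepsilon', k\delta_0 + \delta')$-DP with
\[
\varepsilon' \;=\; \sqrt{2k\log(1/\delta')}\,\varepsilon_0 \;+\; k\,\varepsilon_0(e^{\varepsilon_0}-1).
\]
It then suffices to apply this bound with the assignments $\varepsilon_0 = \varepsilon/(2\sqrt{2k\log(2/\delta)})$, $\delta_0 = \delta/(2k)$, and $\delta' = \delta/2$, and to verify the two resulting inequalities $k\delta_0 + \delta' \leq \delta$ and $\varepsilon' \leq \varepsilon$.

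The $\delta$-bookkeeping is immediate: $k\cdot\delta/(2k) + \delta/2 = \delta$. For the $\varepsilon$-bookkeeping I would split $\varepsilon'$ into its two summands. Substituting into the first summand gives $\sqrt{2k\log(2/\delta)}\cdot \varepsilon/(2\sqrt{2k\log(2/\delta)}) = \varepsilon/2$ exactly. For the second summand I would use the elementary inequality $e^x - 1 \leq 2x$ valid on $[0,\log 2]$; the regime $\varepsilon \leq 0.9$ together with $k\geq 1$ and $\delta < 1$ forces $\varepsilon_0 < \log 2$, yielding
\[
k\,\varepsilon_0(e^{\varepsilon_0}-1) \;\leq\; 2k\varepsilon_0^2 \;=\; \frac{\varepsilon^2}{4\log(2/\delta)}.
\]
This last quantity is at most $\varepsilon/2$ whenever $\varepsilon \leq 2\log(2/\delta)$, which holds in every regime of interest (for $\delta \leq 1/2$ one has $2\log(2/\delta) \geq 2\log 4 > 0.9 \geq \varepsilon$), completing the two required inequalities.

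I do not anticipate any genuine obstacle, since the claim is a clean reparametrization of a known theorem. The underlying difficulty---were one to reprove the base composition theorem from scratch---lies in controlling the privacy-loss random variable $Z_i = \log(P_i(y_i)/Q_i(y_i))$ across the $k$ adaptive rounds: after conditioning away a $\delta_0$-measure bad event per round one must show $\E[Z_i \mid Z_{<i}] \leq \varepsilon_0(e^{\varepsilon_0}-1)$ and $|Z_i|\leq \varepsilon_0$, then apply Azuma's inequality to the resulting bounded martingale difference sequence to concentrate $\sum_i Z_i$ around its mean with deviation at most $\sqrt{2k\log(1/\delta')}\,\varepsilon_0$, and finally convert this high-probability bound on the privacy loss into an $(\varepsilon', k\delta_0 + \delta')$-DP statement via the standard privacy-loss characterization of approximate DP. Since this machinery is fully developed in \cite{kairouz2015composition}, I would invoke it as a black box and present only the arithmetic verification outlined above.
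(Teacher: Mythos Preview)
The proposal is correct and matches the paper's treatment: the paper states this lemma as a direct citation of \cite{kairouz2015composition} with no proof, and your argument is precisely the standard arithmetic verification one performs to extract this specific parametrization from the general advanced-composition bound. One minor remark: your parenthetical restriction to $\delta\leq 1/2$ is unnecessary, since for every $\delta\in(0,1)$ one already has $2\log(2/\delta)>2\log 2>0.9\geq\varepsilon$, so the inequality $\varepsilon\leq 2\log(2/\delta)$ holds throughout.
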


\section{Lower bounds}
\label{sec:lb}

When privacy is not required, we know from 
Theorem~\ref{thm:non-private-oja} 
that under Assumptions \ref{asmp:A1}-\ref{asmp:A3}, we can achieve an error rate of $\tilde O(\kappa \sqrt{V/n})$. In the regime of $V=O(d)$ and $\kappa=O(1)$, $n=O(d)$  samples are enough to achieve an arbitrarily small error. The next lower bounds shows that we need $n=O(d^2)$ samples  when $(\varepsilon=O(1),0)$-DP is required, showing that private PCA is significantly more challenging than a non-private PCA, when assuming only the support and moment bounds of assumptions~\ref{asmp:A1}-\ref{asmp:A3}. 
We provide a proof in Appendix~\ref{sec:lbht_proof}.

\begin{theorem}[Lower bound without Assumption~\ref{asmp:A4}]
    \label{thm:lbht}
	 Let $\cM_\varepsilon$ be a class of $(\varepsilon,0)$-DP estimators that map  $n$ i.i.d.~samples  to an estimate $\hat{v} \in{\mathbb R}^d$.  A set of distributions satisfying Assumptions~\ref{asmp:A1}--\ref{asmp:A3} with 
	 $M=O(d  \log n )$ and  $V=O(d)$ is denoted by $\tilde \cP_{(\lambda_1,\lambda_2)}$. 
	 There exists a universal constant $C>0$ such that
	\begin{align}
\inf_{\hat{v}\in \cM_{\varepsilon}} 
\sup_{P\in \tilde \cP_{(\lambda_1,\lambda_2)}}\E_{S\sim P^n}\left[\sin(\hat{v}(S), v_1 )\right] 
\; \geq \;  C\min\left( \frac{\kappa d^2}{\varepsilon n} \sqrt{\frac{\lambda_2}{\lambda_1}}, \sqrt{\frac{\lambda_2}{\lambda_1}}\right)\;.
\label{eq:htLB} 
\end{align} 
\end{theorem}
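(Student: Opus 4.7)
\medskip\noindent\textbf{Proof proposal for Theorem~\ref{thm:lbht}.}
The plan is to establish the lower bound through a packing-based minimax argument tailored to pure $(\varepsilon,0)$-DP, where the absence of Assumption~\ref{asmp:A4} is exploited by allowing the distributions to concentrate their mass on a rare but large ``spike''. At a high level, the proof has three ingredients: (i) a family $\{P_v\}_{v\in\mathcal{V}}$ of distributions indexed by unit vectors $v$ in a packing of $\mathbb{S}_2^{d-1}$, each realizing the target covariance $\Sigma_v = \lambda_2 I + (\lambda_1-\lambda_2)vv^\top$ and satisfying Assumptions~\ref{asmp:A1}--\ref{asmp:A3} with the prescribed $M,V$; (ii) a TV/coupling bound between the corresponding product distributions $P_v^n$ and $P_{v'}^n$ that controls the distinguishability under $\varepsilon$-DP; and (iii) an application of the DP Fano-type packing inequality to convert statistical near-indistinguishability into an estimation lower bound in sine.

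For the construction I would use a rare-spike distribution: with probability $p$ draw $x = \pm\sqrt{(\lambda_1-\lambda_2)/p}\,v$ with equal signs (so odd moments cancel), and with probability $1-p$ draw a bulk sample from an isotropic law with covariance $\lambda_2 I$ (or $\lambda_2(I-vv^\top)$, tuned to match $\Sigma_v$ exactly). A direct computation gives $\E[xx^\top]=\Sigma_v$. The spike size forces $\|A_i-\Sigma_v\|_2 = O((\lambda_1-\lambda_2)/p + \lambda_1)$, so setting $p=\Theta(1/(\kappa d\log n))$ (after using $(\lambda_1-\lambda_2)\asymp \lambda_1/\kappa$) respects $M=O(d\log n)$, and a matching calculation respects $V=O(d)$. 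This is precisely the regime where the sub-Gaussian tail of Assumption~\ref{asmp:A4} is violated: the rare spike carries all the directional information in a small probability event, which is exactly what DP noise cannot surgically remove.

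Given this family, I would introduce the natural coupling that shares across $v$ and $v'$ the Bernoulli spike indicator, the sign, and the bulk draw, so that the two $n$-sample datasets disagree only on spike indices and the sample-wise TV conditional on a spike is $\tfrac{1}{2}\|r(v-v')\|/\sqrt{\lambda_2}\asymp \sin(v,v')\sqrt{(\lambda_1-\lambda_2)/(p\lambda_2)}$; together with the spike probability $p$ this gives $\mathrm{TV}(P_v,P_{v'}) \lesssim \sin(v,v')\sqrt{p(\lambda_1-\lambda_2)/\lambda_2}$, and a Chernoff bound controls the coupling Hamming distance. Taking a sphere packing $\mathcal{V}$ of cardinality $N=(c/\alpha)^{d-1}$ with pairwise sine separation $\alpha$ and invoking the DP Fano packing lemma (error $\gtrsim \alpha$ provided $\log N \gtrsim n\,\varepsilon\,\mathrm{TV}$) yields an implicit inequality that, after substituting $p$ and solving for $\alpha$, produces the claimed rate; the $\sqrt{\lambda_2/\lambda_1}$ factor emerges directly from the $\sqrt{\lambda_2/(\lambda_1-\lambda_2)}$ in the TV bound combined with $\lambda_1-\lambda_2\asymp \lambda_1/\kappa$.

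The main obstacle, in my view, is squeezing the full factor of $d^2$ out of the argument rather than the more readily accessible $d^{3/2}$: a naive two-point Le Cam only yields $\sqrt{d}/(\varepsilon n)$ after the TV is multiplied by $n$, and an Assouad hypercube construction tends to yield $d^{3/2}/(\varepsilon n)$ in sine loss. Getting to $d^2/(\varepsilon n)$ requires exploiting the full sphere packing (which contributes $\log N = \Theta(d\log(1/\alpha))$ bits of information) jointly with the very small TV $\asymp \alpha \sqrt{p}$ created by the rare spike construction, so that the ratio $\log N / (n\varepsilon\,\mathrm{TV})$ scales the way we need. A second subtlety is that the rare-spike distribution must still satisfy \ref{asmp:A1}--\ref{asmp:A3} with the prescribed constants uniformly over $v\in\mathcal{V}$, which constrains $p$ from below via $M$ and $V$ simultaneously; tracking these constraints carefully is what pins down the exact dependence on $\kappa$ and $\sqrt{\lambda_2/\lambda_1}$ in the final expression.
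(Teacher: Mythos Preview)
Your overall strategy---rare-spike-plus-bulk construction indexed by a sphere packing, then DP Fano---is exactly the paper's. The construction you describe is essentially the paper's as well: the paper puts mass $p = (1-\lambda_2/\lambda_1)/d$ on the two atoms $\pm\sqrt{d\lambda_1}\,v$ and the remaining $1-p$ mass on an isotropic Gaussian bulk that does not depend on $v$, giving $\Sigma_v = (\lambda_1-\lambda_2)vv^\top + \lambda_2 I$.

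The gap is in your TV calculation, and it is precisely the source of the ``main obstacle'' you identify. For the construction you actually write down---a \emph{Dirac} spike at $\pm r v$ together with a shared isotropic bulk---the two distributions $P_v$ and $P_{v'}$ agree on the bulk and have \emph{disjoint} spike supports whenever $v\neq v'$, so the conditional-on-spike TV is exactly $1$, not $\tfrac{1}{2}\|r(v-v')\|/\sqrt{\lambda_2}$. Hence $D_{\mathrm{TV}}(P_v,P_{v'}) = p$ exactly, independent of $\sin(v,v')$. Your formula would be right only if the spike were itself Gaussian-smoothed with covariance $\lambda_2 I$; that is not the construction you stated, and smoothing the spike is what costs you the extra factor of $\sqrt d$ and leaves you at $d^{3/2}$.

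Once you use the correct $\phi = p = \Theta(1/(\kappa d))$ in DP Fano (Theorem~\ref{thm:dp_fano}), the ratio $\log|\mathcal V|/(n\varepsilon\phi)$ is $\Theta(\kappa d^2/(n\varepsilon))$ and the $d^2$ falls out immediately; there is no need to vary the packing scale or chase the $\log(1/\alpha)$ in $\log N$. The paper simply takes a packing at the fixed scale $\alpha = \sqrt{\lambda_2/\lambda_1}$ (so that $\tau \asymp \sqrt{\lambda_2/\lambda_1}$), observes $D_{\mathrm{TV}}(P_v,P_{v'}) = (1-\lambda_2/\lambda_1)/d$, and reads off
\[
\tau \cdot \min\Big(1,\;\frac{\log|\mathcal V_\alpha|}{n\varepsilon\,\phi}\Big) \;\asymp\; \sqrt{\frac{\lambda_2}{\lambda_1}}\cdot \min\Big(1,\;\frac{\kappa d^2}{n\varepsilon}\Big)\,.
\]
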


We next provide the proofs of all the lower bounds.


\subsection{Proof of Theorem~\ref{thm:lb} on the lower bound for Gaussian case} 
\label{sec:lb_proof}

Our proof is based on following differentially private Fano's method \cite[Corollary~4]{acharya2021differentially}.

\begin{theorem}[DP Fano's method {\cite[Corollary~4]{acharya2021differentially}}]
\label{thm:dp_fano}
	 Let $\cP$ denote family of distributions of interest and $\theta:\cP\rightarrow\Theta$ denote the population parameter. Our goal is to estimate   $\theta$ from i.i.d. samples $x_1, x_2, \ldots, x_n\sim P\in \cP$. Let $\hat{\theta}_\varepsilon$ be an $(\varepsilon, 0)$-DP estimator. Let  $\rho:\Theta\times\Theta \rightarrow \reals^+$ be a
pseudo-metric on parameter space $\Theta$. 
	 Let $\cV$ be an index set with finite cardinality. Define $\cP_\cV=\{P_v, v\in \cV\} \subset \cP$ be an indexed family of probability measures on measurable set $(\cX, \cA)$. If for any $v\neq v'\in \cV$, 
	\begin{enumerate}
		\item $\rho(\theta(P_v), \theta(P_{v'}))\geq \tau$,
		\item $D_{\rm KL}\left(P_v, P_{v'}\right)\leq \beta$,
		\item $D_{\rm TV}\left(P_v, P_{v'}\right) \leq \phi $,
	\end{enumerate}
	
	then 	\begin{align*}
		\inf_{\hat{\theta}_\varepsilon}\max_{P\in \cP}\E_{S\sim P^n}\left[\rho(\hat{\theta}_\varepsilon(S), \theta(P))\right]\geq \max\left( \frac{\tau}{2}\left(1-\frac{n\beta+\log(2)}{\log(|\cV|)}\right),0.4 \tau \min\left(1, \frac{\log(|\cV|)}{n \phi \varepsilon}\right)\right)\;.
	\end{align*} 
\end{theorem}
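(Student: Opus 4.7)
The plan is to establish the two terms of the maximum separately: the first is a classical (non-private) Fano lower bound that uses only the KL condition, and the second is the genuinely DP term derived from a coupling-plus-group-privacy argument that exploits the TV condition. Taking the max at the end is immediate since both inequalities hold simultaneously.

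For the first term, I would reduce estimation to testing in the standard way. Place a uniform prior $V \sim {\rm Unif}(\cV)$, sample $S \sim P_V^n$, and define the induced test $\hat V := \argmin_{v\in\cV}\rho(\hat\theta_\varepsilon(S),\theta(P_v))$. Using the pseudo-metric triangle inequality together with the $\tau$-packing condition, any $\hat\theta_\varepsilon$ that is within $\tau/2$ of $\theta(P_V)$ must have $\hat V=V$, so $P(\hat V\neq V)\le P(\rho(\hat\theta_\varepsilon,\theta(P_V))\ge \tau/2)$. Classical Fano gives $P(\hat V\neq V)\ge 1-(I(V;S)+\log 2)/\log|\cV|$, and the convexity bound $I(V;S)\le \tfrac{1}{|\cV|^2}\sum_{v,v'}D_{\rm KL}(P_v^n\|P_{v'}^n)\le n\beta$ (by tensorization of KL) combined with Markov's inequality gives the first term.

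For the second term, I would first prove the key lemma: for any $(\varepsilon,0)$-DP mechanism $M$, any distributions $P,Q$ with $D_{\rm TV}(P,Q)\le \phi$, any event $E$, and any $c>0$,
\begin{align*}
P_{S\sim P^n}(M(S)\in E)\ \le\ e^{c\varepsilon}\,P_{S'\sim Q^n}(M(S')\in E)+n\phi/c.
\end{align*}
This follows from a per-coordinate maximal coupling of $P^n$ and $Q^n$, which satisfies $\E[d_H(S,S')]\le n D_{\rm TV}(P,Q)\le n\phi$. On $\{d_H(S,S')\le c\}$, group privacy gives the factor $e^{c\varepsilon}$; Markov's inequality handles the tail $P(d_H>c)\le n\phi/c$. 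Applying this with $E=\{\hat V=v\}$ to every pair $v\neq v'$, averaging over uniform $V$ and over $V'\neq V$, and using the identity $\sum_{v'\neq v}P_{v'}(\hat V=v)=|\cV|-\sum_v P_v(\hat V=v)$, I obtain a linear inequality in $\bar a:=(1/|\cV|)\sum_v P_v(\hat V=v)$. Choosing $c\simeq \log(|\cV|-1)/\varepsilon$ to equalize the two terms yields $\bar a\le \tfrac{1}{2}(1+n\phi\varepsilon/\log|\cV|)$, so the test-error probability is $\ge \tfrac{1}{2}(1-n\phi\varepsilon/\log|\cV|)$. Transporting this back through the testing-to-estimation reduction and Markov gives $\E[\rho]\ge \Omega(\tau)$ in the regime $n\phi\varepsilon\lesssim \log|\cV|$.

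Finally, to obtain the full $\min(1,\log|\cV|/(n\phi\varepsilon))$ shape I would use a sub-packing argument. In the regime $n\phi\varepsilon>\log|\cV|$ the display above degenerates, but any sub-collection $\cV_0\subseteq \cV$ still satisfies conditions (1)--(3) with the same $\tau,\beta,\phi$; applying the previous argument to a $\cV_0$ whose cardinality satisfies $\log|\cV_0|\simeq n\phi\varepsilon$ (when this is $\le \log|\cV|$) recovers a constant error probability, i.e.\ $\E[\rho]\ge \Omega(\tau\cdot\log|\cV|/(n\phi\varepsilon))$, after one more application of Markov. I expect the main technical obstacle to be the bookkeeping that turns these two regime-specific bounds into a single clean $\min$ with the stated constant $0.4$: the structural ingredients (reduction to testing, maximal coupling, group privacy, and sub-packing) are standard, but tuning $c$ and the sub-packing size and tracking the $|\cV|/(|\cV|-1)$ slack carefully is what yields a unified inequality rather than two separate cases.
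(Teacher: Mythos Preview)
This theorem is quoted from \cite{acharya2021differentially} and is not proved in the present paper, so there is no in-paper proof to compare against. Your overall architecture---classical Fano for the first term, and a maximal-coupling plus group-privacy argument for the second---is exactly the strategy used in \cite{acharya2021differentially}, and your key lemma $P_{P^n}(M\in E)\le e^{c\varepsilon}P_{Q^n}(M\in E)+n\phi/c$ is correct.

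There is, however, a genuine gap in your treatment of the second term. Your direct argument yields
\[
\E[\rho]\;\ge\;\frac{\tau}{4}\Big(1-\frac{n\phi\varepsilon}{\log(|\cV|-1)}\Big)_+ ,
\]
which matches the claimed $0.4\tau\min(1,\log|\cV|/(n\phi\varepsilon))$ only when $n\phi\varepsilon$ is a small constant fraction of $\log|\cV|$. Your proposed fix via sub-packing is stated backwards: in the regime $n\phi\varepsilon>\log|\cV|$ you suggest taking $\cV_0\subseteq\cV$ with $\log|\cV_0|\simeq n\phi\varepsilon$, but then $|\cV_0|>|\cV|$, which is impossible. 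Shrinking $\cV$ cannot help here either, since your bound only improves as $|\cV|$ grows.

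In fact the display as written in this paper differs from Corollary~4 of \cite{acharya2021differentially}, whose second term is $0.4\tau\min\big(1,\;|\cV|/e^{10n\phi\varepsilon}\big)$, i.e.\ an \emph{exponential} rather than polynomial decay in $n\phi\varepsilon$. That form \emph{is} attainable by your machinery: replace the Markov tail $P(d_H>c)\le n\phi/c$ by a Chernoff bound on the Binomial $d_H$ and take $c$ of order $n\phi$; then whenever $e^{O(n\phi\varepsilon)}\le |\cV|$ you get $\bar a$ bounded away from~$1$ and hence $\E[\rho]\ge\Omega(\tau)$. The paper's applications (the proofs of Theorems~\ref{thm:lb} and~\ref{thm:lbht}) only ever invoke the theorem after choosing the packing scale so that $n\phi\varepsilon\lesssim\log|\cV|$, where both forms give a constant and the discrepancy is immaterial.
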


For our problem, we are interested in Gaussian $\cP_\Sigma$ and metric $\rho(u,v)=\sin(u,v)$.  Using Theorem~\ref{thm:dp_fano}, it suffices to construct such indexed set $\cV$ and the indexed distribution family $\cP_\cV$. We use the same construction as in \cite[Theorem~2.1]{vu2012minimax} introduced to prove a lower bound for the (non-private) sparse PCA problem. The construction is given by the following lemma.

\begin{lemma}[{\cite[Lemma~3.1.2]{vu2012minimax}}]
\label{lem:packing_alpha}
	Let $d>5$. For $\alpha\in (0,1]$, there exists $\cV_\alpha \subset \mathbb{S}_2^{d-1}$ and an absolute constant $c_1>0$ such that for every $v\neq v'\in \cV_\alpha$, 
	$
		\alpha/\sqrt{2} \leq \|v-v'\|_2\leq \sqrt{2}\alpha
	$
	and
	$
		\log(|\cV_\alpha|)\geq c_1d
	$.
\end{lemma}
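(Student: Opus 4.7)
The plan is to exhibit an exponentially large subset of $\mathbb{S}_2^{d-1}$ all of whose pairwise Euclidean distances lie in the narrow band $[\alpha/\sqrt{2},\sqrt{2}\alpha]$, via the standard ``binary-codes-on-the-sphere'' construction. I would proceed in three steps: (i) reduce the problem to building a packing of an equatorial subsphere $\mathbb{S}_2^{d-2}$ with distances in $[1/\sqrt{2},\sqrt{2}]$; (ii) produce that packing from a constant-weight binary code via the Gilbert--Varshamov bound; and (iii) verify the two-sided distance band and the cardinality bound.

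For step (i), I would fix a pole $e_1\in\mathbb{S}_2^{d-1}$ and, for each unit vector $\omega$ in the equator $\{\omega\in\reals^d:\langle\omega,e_1\rangle=0,\|\omega\|_2=1\}\cong\mathbb{S}_2^{d-2}$, define $v_\omega=\sqrt{1-\alpha^2}\,e_1+\alpha\,\omega$. Then $\|v_\omega\|_2=1$ by orthogonality, and $\|v_\omega-v_{\omega'}\|_2=\alpha\|\omega-\omega'\|_2$. Therefore it suffices to find $\tilde{\cV}_\alpha\subset \mathbb{S}_2^{d-2}$ with pairwise distances in $[1/\sqrt{2},\sqrt{2}]$ and $\log|\tilde{\cV}_\alpha|\ge c_1 d$.

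For step (ii), set $k=\lceil(d-1)/4\rceil$ and take $\omega_b=b/\sqrt{k}$ for each $b\in\{0,1\}^{d-1}$ of Hamming weight exactly $k$; each $\omega_b$ is then a unit vector orthogonal to $e_1$. Using $(b_i-b_i')^2=|b_i-b_i'|$ for bits, one computes $\|\omega_b-\omega_{b'}\|_2^2=d_H(b,b')/k$, so the target distance band corresponds to $d_H(b,b')\in[k/2,2k]$. Any two weight-$k$ codewords automatically satisfy $d_H\le 2k$, so only the minimum-distance condition $d_H\ge k/2$ must be enforced. The Gilbert--Varshamov bound for constant-weight codes of length $d-1$, weight $k$, and minimum distance $k/2$ yields a code of size at least $\binom{d-1}{k}/N_{k/4}$, where $N_{k/4}$ is the number of weight-$k$ words within Hamming distance $k/4$ of a fixed weight-$k$ word; a standard binary-entropy computation shows this ratio is at least $e^{c_1 d}$ for a universal $c_1>0$.

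The main obstacle is the simultaneous lower and upper control on pairwise distances: most packing arguments only provide lower bounds, so I would lean on the constant-weight structure to automatically cap the distances at $\sqrt{2}$ (equivalently $d_H\le 2k$) while Gilbert--Varshamov supplies the lower bound. Once that is in place, pulling the $\omega_b$'s back through $v_\omega=\sqrt{1-\alpha^2}\,e_1+\alpha\,\omega$ gives the desired $\cV_\alpha$ and the verification of both the distance band and the cardinality lower bound is mechanical.
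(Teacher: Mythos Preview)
Your construction is correct and is the standard route to this packing lemma: the spherical-cap reduction $v_\omega=\sqrt{1-\alpha^2}\,e_1+\alpha\,\omega$ reduces to a fixed-scale packing of $\mathbb{S}_2^{d-2}$, and constant-weight binary codewords $b/\sqrt{k}$ with $k\approx (d-1)/4$ give the two-sided distance control automatically on the upper side (since $d_H\le 2k$) and via Gilbert--Varshamov on the lower side. The entropy calculation you defer to is routine and yields the exponential cardinality.

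There is nothing to compare against in this paper: the lemma is not proved here but simply quoted from \cite{vu2012minimax} (their Lemma~3.1.2) and used as a black box in the proof of Theorem~\ref{thm:lb}. Your proposal therefore supplies strictly more than the paper does. For what it is worth, the construction you give is essentially the one Vu and Lei use in the cited reference, so you are in agreement with the original source as well.
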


Fix $\alpha\in (0, 1]$. For each $v\in \cV_\alpha$, we define $\Sigma_v= (\lambda_1-\lambda_2)vv^\top+\lambda_2\mathbf{I}_d$ and $P_v = \cN(0, \Sigma_v)$. It is easy to see that $\Sigma_v$ has eigenvalues $\lambda_1>\lambda_2=\cdots=\lambda_n$. The top eigenvector of $\Sigma_v$ is $v$. Using Lemma~\ref{lem:sin2distance}, we know for any $v\neq v'\in \cV$,
\begin{align}
	\frac{\alpha}{\sqrt{2}}\leq \rho(v, v')=\sqrt{1-\ip{v}{v'}^2}\leq \alpha\;.
\end{align}

Using \cite[Lemma~3.1.3]{vu2012minimax}, we know
\begin{align}
	D_{\rm KL}\left(P_v, P_{v'}\right)= \frac{(\lambda_1-\lambda_2)^2}{\lambda_1\lambda_2}(1-\ip{v}{v'}^2)\leq \frac{(\lambda_1-\lambda_2)^2\alpha^2}{\lambda_1\lambda_2}\;.
\end{align}
Using Pinsker's inequality, we have
\begin{align}
	D_{\rm TV}\left(P_v, P_{v'}\right)\leq \sqrt{\frac{D_{\rm KL}\left(P_v, P_{v'}\right)}{2}}\leq \alpha\sqrt{\frac{(\lambda_1-\lambda_2)^2}{2\lambda_1\lambda_2}}\;.
\end{align}

Now we set \begin{align}
	\alpha:= \min\left(1, \sqrt{\frac{dc_1\lambda_1\lambda_2}{2n(\lambda_1-\lambda_2)^2}}, \frac{c_1 d}{n\varepsilon}\sqrt{\frac{2\lambda_1\lambda_2}{(\lambda_1-\lambda_2)^2}}\right)
\end{align}

It follows from Theorem~\ref{thm:dp_fano} and $d>8$ that there exists a constant $C$ such that
\begin{align}
	\inf_{\hat{v}}\sup_{P\in \cP_\Sigma}\E_{S\sim P^n}\left[\sin(\hat{v}(S), v_1(\Sigma))\right]
\geq  
C\min\left(\left (\sqrt{\frac{d}{n}}+\frac{d}{\varepsilon n}\right)\sqrt{\frac{\lambda_1\lambda_2}{(\lambda_1-\lambda_2)^2}}, 1\right)\;.
\end{align}

\subsection{Proof of Theorem~\ref{thm:lbht3} }
\label{sec:lbht_proof3}

We first construct an indexed set $\cV$ and indexed distribution family $\cP_\cV$ such that $x_ix_i^\top$ satisfies \ref{asmp:A1}, \ref{asmp:A2} and \ref{asmp:A3} in Assumption~\ref{asmp:A}. Our construction is defined as follows. 
	
By \cite[Lemma~6]{acharya2021differentially} , there exists a finite set $\cV\subset \mathbb{S}_2^{d-1}$, with cardinality $|\cV|\geq 2^{d}$, such that for any $v\neq v'\in \cV$, $\|v-v'\|\geq 1/2$.

Let $f_{(0, \mathbf{I}_d)}$ denotes the density function of $\cN(0, \mathbf{I}_d)$. Let $Q_v$ be a uniform distribution on two point masses $\{\pm \alpha^{-\frac{1}{4}} v\}$. Let $Q_0$ be Gaussian distribution $\cN(0, \mathbf{I}_d)$. For $\alpha\in (0,1]$, we construct $P_v:=(1-\alpha)Q_0+\alpha Q_v$. It is easy to see that $P_v$ is a distribution over $\reals^d$ with the following density function.
\begin{align}
	P_v(x)= \begin{cases}\frac{\alpha}{2}, & \text { if } x=-\alpha^{-\frac{1}{4}} v\;, \\ \frac{\alpha}{2}, & \text { if } x=\alpha^{-\frac{1}{4}} v\;, \\ (1-\alpha) f_{(0, \mathbf{I}_d)}(x) & \text { otherwise }\end{cases}\;.
\end{align}
The mean of $P_v$ is $0$. The covariance of $P_v$ is $\Sigma_v=(1-\alpha)\mathbf{I}_d+\sqrt{\alpha}vv^\top$. The top eigenvalue is $\lambda_1=1-\alpha+\sqrt{\alpha}$, the top eigenvector is $v$, and the second eigenvalue is $\lambda_2=1-\alpha$. And $\kappa=O(\alpha^{-1/2})$.

If $x=\alpha^{-1/4}v$, then $\|xx^\top-\Sigma_v\|_2 = O(\alpha^{-1/2})$. If $x\sim \cN(0, \mathbf{I}_{d})$,  we know $\|xx^\top -\Sigma_v\|_2= O(d)$. This implies $P_v$ satisfies \ref{asmp:A2} in Assumption~\ref{asmp:A} with $M=O((d+\alpha^{-1/2})\log(n))$ for $n$ i.i.d. samples.

It is easy to see that $\|\E[(xx^\top-\Sigma_v)(xx^\top-\Sigma_v)^\top]\|_2=O(d)$. This means $P_v$ satisfies
 \ref{asmp:A3} in Assumption~\ref{asmp:A} with $V=O(d)$.
 
By the fact that $\E[\ip{x}{u}^2]=O(1)$ and $\E[\ip{x}{u}^4]=O(1)$ for any unit vector $u$, we have $\gamma^2 = \|\E[(xx^\top-\Sigma_v)uu^\top(xx^\top-\Sigma_v)^\top]\|_2=O(1)$ for any unit vector $u$.

Our proof technique is based on following lemma. 
\begin{lemma}[{\cite[Theorem 3]{barber2014privacy}}]
\label{lemma:packing}
	  Fix $\alpha\in (0,1]$. Define $P_v=(1-\alpha)Q_0+\alpha Q_v$ for $v\in \cV$ such that such that  $\rho(\theta(P_v), \theta(P_{v'}))\geq 2t$. Let $\hat{\theta}$ be a $(\varepsilon, \delta)$ differentially private estimator. Then, 
	\begin{eqnarray}
		\frac{1}{|\mathcal{V}|} \sum_{\nu \in \mathcal{V}} P_{v}\left(\rho\left(\widehat{\theta}, \theta(P_v)\right) \geq t\right) \geq \frac{(|\mathcal{V}|-1) \cdot\left(\frac{1}{2} e^{-\varepsilon\lceil n \alpha\rceil}-\delta \frac{1-e^{-\varepsilon[n \alpha\rceil}}{1-e^{-\varepsilon}}\right)}{1+(|\mathcal{V}|-1) \cdot e^{-\varepsilon\lceil n \alpha\rceil}}\;.
	\end{eqnarray}
\end{lemma}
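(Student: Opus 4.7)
The plan is to prove this packing-style lower bound by combining the standard coupling argument for mixture-type hypotheses with group differential privacy and a Fano-like disjointness argument. The overall structure will mirror the original Barber--Duchi argument.

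\textbf{Step 1 (Coupling).} I will couple the draws from $P_v^n$ across all $v\in\cV$ using a shared randomness. Specifically, for each index $j\in[n]$, draw $I_j\sim\mathrm{Bern}(\alpha)$ independently, and let $N=\sum_{j=1}^n I_j \sim \mathrm{Bin}(n,\alpha)$. At positions with $I_j=0$, both datasets share the same sample from $Q_0$; at positions with $I_j=1$, the ``$v$-dataset'' draws from $Q_v$ and the ``$v'$-dataset'' from $Q_{v'}$. Under this coupling, the datasets $S^{(v)}$ and $S^{(v')}$ disagree in exactly $N$ positions.

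\textbf{Step 2 (Group privacy conditional on $N$).} For an $(\varepsilon,\delta)$-DP mechanism $\hat\theta$, the standard group-privacy amplification yields $(k\varepsilon,\frac{e^{k\varepsilon}-1}{e^\varepsilon-1}\delta)$-DP for any pair of datasets differing in $k$ points. Applying this conditional on $N=k$ and then integrating over $k\le k_0:=\lceil n\alpha\rceil$ gives, for any measurable event $B$ in the range of $\hat\theta$,
\begin{align}
P_v\!\left(\hat\theta\in B,\, N\le k_0\right) \;\le\; e^{k_0\varepsilon}\,P_{v'}\!\left(\hat\theta\in B,\, N\le k_0\right) + \tfrac{e^{k_0\varepsilon}-1}{e^\varepsilon-1}\,\delta.
\end{align}
I will use the fact that $P(N\le k_0)\ge 1/2$ by the median-mean inequality for the Binomial to absorb the truncation loss without deteriorating the constants.

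\textbf{Step 3 (Packing disjointness).} Since the separation hypothesis gives $\rho(\theta(P_v),\theta(P_{v'}))\ge 2t$ for all $v\ne v'$, the events $A_v:=\{\rho(\hat\theta,\theta(P_v))<t\}$ are pairwise disjoint, so $\sum_{v\in\cV}\mathbf{1}_{A_v}\le 1$ pointwise. Taking expectations under any distribution $\mu$ gives $\sum_v \mu(A_v)\le 1$. This is the Fano-type constraint that will be combined with the group privacy bound.

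\textbf{Step 4 (Averaging to obtain the ratio form).} Let $q:=\frac{1}{|\cV|}\sum_v P_v(A_v)$ denote the average success probability; I want to upper bound $q$. Applying Step 2 with $B=A_v$ for each off-diagonal pair and summing over $v\ne v'$, then using the disjointness bound $\sum_{v\ne v'}P_{v'}(A_v)\le 1-P_{v'}(A_{v'})$, yields a family of inequalities relating the diagonal $P_{v'}(A_{v'})$ to the column sums $\sum_v P_v(A_v)$. Averaging over $v'\in\cV$ and rearranging produces an inequality of the form $|\cV|(1-q) \ge (|\cV|-1)e^{-k_0\varepsilon}\,|\cV|\,q \;-\; |\cV|(|\cV|-1)\tfrac{1-e^{-k_0\varepsilon}}{1-e^{-\varepsilon}}\,\delta$, which when solved for $1-q$ gives exactly the stated bound, with the denominator $1+(|\cV|-1)e^{-\varepsilon k_0}$ arising naturally from collecting the $q$ terms.

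\textbf{Main obstacle.} The bookkeeping in Step~4 is the delicate part: I have to track the $\delta$ contributions from both the group privacy amplification and the truncation $\{N\le k_0\}$, and ensure that the final ratio matches the form in the statement (in particular the denominator $1-e^{-\varepsilon}$ rather than $e^\varepsilon-1$, which corresponds to collecting terms from the ``other'' direction of the privacy inequality). Also, the truncation event $\{N\le k_0\}$ enters the confusion-matrix bookkeeping asymmetrically (the diagonal terms $P_v(A_v)$ are unrestricted while the off-diagonal bounds involve only $P_v^{\mathrm{good}}(A_v)=P_v(A_v,\,N\le k_0)$), so I will need to be careful that the ``leakage'' $P_v(A_v,\,N>k_0)$ is absorbed into the additive constant, which it is because $P(N>k_0)\le 1/2$ is not the relevant contribution here --- what matters is that the group privacy inequality already carries a factor of $P(N\le k_0)$ on the additive term, canceling any inefficiency.
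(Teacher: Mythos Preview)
The paper does not prove this lemma: it is quoted verbatim as \cite[Theorem~3]{barber2014privacy} and used as a black box in the proof of Theorem~\ref{thm:lbht3}. So there is no in-paper argument to compare against; your proposal is a reconstruction of the original Barber--Duchi proof, which is the only natural route.

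Your four steps are exactly the right ingredients, and the argument goes through. One correction in Step~4: the displayed inequality you wrote is missing the truncation loss. With $\bar p=\frac{1}{|\cV|}\sum_v P_v(A_v)$ the average success probability and $k_0=\lceil n\alpha\rceil$, the chain is: from group privacy plus $P_v(A_v,N\le k_0)\ge P_v(A_v)-\tfrac12$ you get $P_{v_0}(A_v)\ge e^{-k_0\varepsilon}\big(P_v(A_v)-\tfrac12\big)-\delta\,\frac{1-e^{-k_0\varepsilon}}{1-e^{-\varepsilon}}$ for each $v\ne v_0$; summing over $v\ne v_0$, using disjointness $\sum_{v\ne v_0}P_{v_0}(A_v)\le 1-P_{v_0}(A_{v_0})$, and averaging over $v_0$ gives
\[
1-\bar p \;\ge\; (|\cV|-1)\,e^{-k_0\varepsilon}\Big(\bar p-\tfrac12\Big) \;-\; (|\cV|-1)\,\delta\,\frac{1-e^{-k_0\varepsilon}}{1-e^{-\varepsilon}}\,,
\]
not the version you displayed (which omits the $-\tfrac12$). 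That $-\tfrac12$ is precisely what produces the factor $\tfrac12 e^{-\varepsilon k_0}$ in the numerator after you solve for $1-\bar p$. You correctly flagged this in your ``Main obstacle'' paragraph, so this is a write-up slip rather than a conceptual gap. With this fix the rearrangement is one line and yields the stated bound exactly.
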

 
Set $\rho(\theta(P_v), \theta(P_{v'}))=\sin(v, v')/\kappa$. By Lemma~\ref{lem:sin2distance},  $\rho(\theta(P_v), \theta(P_{v'})) \geq \|v-v'\|/\kappa= \Omega(\sqrt{\alpha})$. 

Lemma~\ref{lemma:packing} implies
\begin{align}
	\sup_{P\in \tilde\cP}\E_{S \sim P^n}[\sin(\hat{v}(S), v_1(\Sigma))]
		&\geq  \frac{1}{|\cV|}\sum_{v\in \cV}\E_{S \sim P_v^n}[\sin(\hat{v}(S), v_1(\Sigma_v))]\\
		&= \kappa t\frac{1}{|\cV|}\sum_{v\in \cV}P_{v}\left(\frac{\sin(\hat{v}(S), v_1(\Sigma_v))}{\kappa}\geq t\right)\\
		&\gtrsim \kappa t \frac{(2^d-1) \cdot\left(\frac{1}{2} e^{-\varepsilon\lceil n \alpha\rceil}-\frac{\delta}{1-e^{-\varepsilon}}\right)}{1+(2^d-1) e^{-\varepsilon\lceil n \alpha\rceil}}\;,
\end{align}

For $d\geq 2$, we know $2^d-1\geq e^{d/2}$. We choose 
\begin{align}
	\alpha = \min \left\{\frac{1}{n \varepsilon}\left(\frac{d}{2}-\varepsilon\right), \frac{1}{n \varepsilon}\log \left(\frac{1-e^{-\varepsilon}}{4 \delta e^{\varepsilon}}\right), 1\right\}\;.
\end{align}

This implies 
\begin{align}
	\frac{1}{2} e^{-\varepsilon\lceil n \alpha\rceil}-\frac{\delta}{1-e^{-\varepsilon}} \geq \frac{1}{4} e^{-\varepsilon(n \alpha+1)}>0\;.
\end{align}

So we have there exists a constant $C$ such that
\begin{align}
	\inf_{\hat{v}}\sup_{P\in \tilde\cP}\E_{S\sim P^n}\left[\sin(\hat{v}(S), v_1(\Sigma))\right]
&\geq  C\kappa \sqrt{\alpha}\frac{\frac{1}{4} e^{d / 2} e^{-\varepsilon(n \alpha+1)}}{1+e^{d / 2} e^{-\varepsilon(n \alpha+1)}} \\
&\gtrsim\kappa \min\left(1,\sqrt{\frac{d \wedge \log \left(\left(1-e^{-\varepsilon}\right)/\delta\right)}{n\varepsilon}}\right)\;.
\end{align}

\subsection{Proof of Theorem~\ref{thm:lbht} }
\label{sec:lbht_proof}

	Similar to the proof of Theorem~\ref{thm:lb}, we use DP Fano's method in Theorem~\ref{thm:dp_fano}. It suffices to construct an indexed set $\cV$ and indexed distribution family $\cP_\cV$ such that $x_ix_i^\top$ satisfies \ref{asmp:A1}, \ref{asmp:A2} and \ref{asmp:A3} in Assumption~\ref{asmp:A}. Our construction is defined as follows. 
	
Let $\lambda_1>\lambda_2>0$. By Lemma~\ref{lem:packing_alpha}, there exists a finite set $\cV_\alpha\subset \mathbb{S}_2^{d-1}$, with cardinality $|\cV_\alpha|=2^{\Omega(d)}$, such that for any $v\neq v'\in \cV_\alpha$, $\alpha/\sqrt{2}\leq \|v-v'\|\leq \sqrt{2}$, where $\alpha:= \sqrt{\lambda_2/\lambda_1}$.

Let $f_{(0, S)}$ denotes the density function of $\cN(0, S)$. We construct $P_v$ over $\reals^d$ for $v\in \cV_\alpha$ with the following density function.
\begin{align}
	P_v(x)= \begin{cases}\frac{1-\lambda_2/\lambda_1}{2d}, & \text { if } x=-\sqrt{d\lambda_1} v\;, \\ \frac{1-\lambda_2/\lambda_1}{2d}, & \text { if } x=\sqrt{d\lambda_1} v\;, \\ 1-\frac{1-\lambda_2/\lambda_1}{d},  f_{(0, \frac{\lambda_2}{1-\frac{1-\lambda_2/\lambda_1}{d}}\mathbf{I}_{d})}(x) & \text { otherwise }\end{cases}\;.
\end{align}
The mean of $P_v$ is $0$. The covariance of $P_v$ is $\Sigma_v:= (\lambda_1-\lambda_2)vv^\top+\lambda_2\mathbf{I}_d$. It is easy to see that the top eigenvalue is $\lambda_1$, the top eigenvector is $v$, and the second eigenvalue is $\lambda_2$. 

If $x=\sqrt{d\lambda_1}v$, then $\|xx^\top-\Sigma_v\|_2=\|(d\lambda_1-\lambda_1+\lambda_2)-\lambda_2\mathbf{I}_d\|_2 = O(d\lambda_1)$. If $x\sim \cN(0, \frac{\lambda_2}{1-\frac{1-\lambda_2/\lambda_1}{d}}\mathbf{I}_{d})$, by the fact that $\frac{\lambda_2}{1-\frac{1-\lambda_2/\lambda_1}{d}}\leq \lambda_1$, we know $\|xx^\top -\Sigma_v\|_2\leq O(d\lambda_1)$. This implies $P_v$ satisfies \ref{asmp:A2} in Assumption~\ref{asmp:A} with $M=O(d \log(n))$ for $n$ i.i.d. samples.

Similarly, $\|\E[(xx^\top-\Sigma_v)(xx^\top-\Sigma_v)^\top]\|_2\leq \|d(\lambda_1^2-\lambda_1\lambda_2)vv^\top+d\lambda_2\lambda_1+3\Sigma_v\Sigma_v^\top\|_2=O(d\lambda_1^2)$. This means $P_v$ satisfies
 \ref{asmp:A3} in Assumption~\ref{asmp:A} with $V=O(d)$.
 
For $v\neq v'\in \cV_\alpha$, we have $D_{\rm TV}(P_v, P_{v'})=(1-\lambda_2/\lambda_1)/d$. By Lemma~\ref{lem:sin2distance},  $\sin(v, v')\geq \|v-v'\|/\sqrt{2}\geq (\sqrt{\lambda_2/\lambda_1})/2$.

By Theorem~\ref{thm:dp_fano}, there exists a constant $C$ such that
\begin{align}
	\inf_{\hat{v}}\sup_{P\in \cP_\Sigma}\E_{S\sim P^n}\left[\sin(\hat{v}(S), v_1(\Sigma))\right]
\geq  C\min\left(\sqrt{\frac{\lambda_2}{\lambda_1}}, \frac{d^2}{n\varepsilon}\sqrt{\frac{\lambda_1\lambda_2}{(\lambda_1-\lambda_2)^2}}\right)\;.
\end{align}

\section{The analysis of Private Oja's Algorithm}
\label{sec:dpoja_proof}
We analyze Private Oja's Algorithm in Algorithm~\ref{alg:dpoja}.

\subsection{Proof of privacy in Lemma~\ref{lem:dpoja_privacy}}
\label{sec:dpoja_privacy_proof}

We use following Theorem~\ref{thm:shuffling} to prove our privacy guarantees. 
\begin{theorem}[Privacy amplification by shuffling {\cite[Theorem~3.8]{feldman2022hiding}}]
\label{thm:shuffling}
	For any domain $\cD$, let $\cR^{(i)}:\cS^{(1)}\times \cdots \times \cS^{(i-1)}\times \cD\rightarrow \cS^{(i)}$ for $i\in [n]$ (where $\cS^{(i)}$ is the range space of $\cR^{(i)}$) be a sequence of algorithms such that $\cR^{(i)}(z_{1:i-1}, \cdot)$ is an $(\varepsilon_0, \delta_0)$-DP local randomizer for all values of auxiliary inputs $z_{1:i-1}\in \cS^{(1)}\times \cdots \times \cS^{(i-1)}$. Let $\cA_{S}: \cD^{n}\rightarrow\cS^{(1)}\times \cdots \times \cS^{(n)} $ be the algorithm that given a dataset $x_{1:n}\in \cD^{n}$, samples a uniform random permutation $\pi$ over $[n]$, then sequentially computes $z_i=\cR^{(i)}(z_{1:i-1}, x_{\pi(i)})$ for $i\in [n]$ and outputs $z_{1:n}$. Then for any $\delta\in [0, 1]$ such that $\varepsilon_0\leq \log\left(\frac{n}{16\log(2/\delta)}\right )$, $\cA_s$ is $(\varepsilon, \delta+O(e^{\varepsilon}\delta_0n))$-DP, where 
	\begin{align}
		\varepsilon =O  \left((1-e^{-\varepsilon_0})\left(\frac{ \sqrt{e^{\varepsilon_{0}} \log (1 / \delta)}}{\sqrt{n}}+\frac{ e^{\varepsilon_{0}}}{n}\right)\right)\;. 
	\end{align}
\end{theorem}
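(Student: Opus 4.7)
The goal is to control the indistinguishability of the shuffle of $n$ adaptive local randomizers applied to neighboring datasets $x_{1:n}$ and $x'_{1:n}$ differing in at most one coordinate, say position $n$. I would first reduce to the non-adaptive case: because $\cR^{(i)}(z_{1:i-1},\cdot)$ is $(\varepsilon_0,\delta_0)$-DP for \emph{every} fixed transcript $z_{1:i-1}$, conditioning on any prefix of outputs preserves the local-DP property, so one can argue the post-shuffle distribution satisfies the same bound we prove in the oblivious-randomizer case, by a standard hybrid/induction over $i=1,\dots,n$.

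Next, for each local randomizer $\cR=\cR^{(i)}$, I would apply an advanced joint-convexity / total-variation decomposition (the ``blanket'' or ``clone'' decomposition) to write, for any input $x$, $\cR(x) = (1-p)\,\mathrm{LR}_x + p\,\mathrm{Bl}$, where $\mathrm{Bl}$ is a distribution independent of $x$, up to an additive $\delta_0$ defect; the $(\varepsilon_0,0)$ case gives $p = 1-(1-e^{-\varepsilon_0})/2$ or similar, so that a $\Theta(1-e^{-\varepsilon_0})$-fraction of outputs are ``revealing'' and the rest are information-free. The core of the shuffling argument is then: after a random permutation, the adversary, who only sees the multiset of outputs, cannot tell which of the $n$ slots produced the revealing vs.\ blanket output, so the effective privacy cost from the differing coordinate is diluted among the other $n-1$ ``clones.'' Formally, conditional on the $n-1$ common positions, I would couple the two executions (on $x_{1:n-1},x_n$ vs.\ $x_{1:n-1},x'_n$) so that they agree on all blanket draws, leaving a Bernoulli$(1-e^{-\varepsilon_0})$-like indicator of whether the differing slot was revealed, and analyze the resulting mixture.

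Conditioning on the number $C$ of revealing outputs among the $n-1$ common inputs—itself a sum of independent Bernoulli$(\Theta(1-e^{-\varepsilon_0}))$ variables with mean $\mu = \Theta(n(1-e^{-\varepsilon_0}))$ and a Chernoff tail—the privacy loss from the differing coordinate can be written as a privacy amplification by subsampling: conditional on $C=c$, the adversary's view of the revealing position is uniform over $c+1$ candidates, which by a subsampling lemma gives $(\varepsilon',\delta_0)$-DP with $e^{\varepsilon'}\le 1 + (e^{\varepsilon_0}-1)/(c+1)$. Substituting the Chernoff lower bound $c \ge \mu/2$ with probability $1-\delta/2$ yields the target $\varepsilon = O\bigl((1-e^{-\varepsilon_0})(\sqrt{e^{\varepsilon_0}\log(1/\delta)/n} + e^{\varepsilon_0}/n)\bigr)$ after converting the per-step expected privacy loss to $(\varepsilon,\delta)$-DP via a Hoeffding/Azuma bound on the log-likelihood-ratio martingale. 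The additional $O(e^\varepsilon \delta_0 n)$ slack in $\delta$ comes from the union bound over the $\delta_0$ failure of the blanket decomposition at each of the $n$ positions, amplified by the $e^\varepsilon$ factor coming from group privacy within the privacy-loss distribution.

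\textbf{Main obstacle.} The trickiest step is the quantitative form of the blanket/clone decomposition for $(\varepsilon_0,\delta_0)$-DP randomizers (not just pure DP), and pushing the $\delta_0$ term cleanly through the conditioning and through the conversion from expected-privacy-loss to worst-case $(\varepsilon,\delta)$-DP: naive arguments lose polynomial factors in $n$ or produce an additive $n\delta_0$ term without the $e^\varepsilon$ amplification. Handling this carefully—most likely via analyzing the Rényi divergence of the mixture over the permutation, using that a sum of $n$ conditionally-independent mixture components has Rényi divergence scaling as $(1-e^{-\varepsilon_0})^2 e^{\varepsilon_0}/n$, and only at the end converting to approximate DP—is where the precise $\sqrt{\log(1/\delta)/n}$ rate and the $e^\varepsilon \delta_0 n$ slack are both produced. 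The condition $\varepsilon_0 \le \log(n/(16\log(2/\delta)))$ enters precisely to ensure that the Chernoff tail bound on $C$ holds with the required probability so that the conditional subsampling argument is valid.
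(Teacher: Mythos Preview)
The paper does not prove this theorem; it is quoted verbatim as a black-box result from \cite[Theorem~3.8]{feldman2022hiding} and then applied in the proof of Lemma~\ref{lem:dpoja_privacy}. There is therefore no ``paper's own proof'' to compare against.

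That said, your sketch is a reasonable outline of the actual argument in the cited work of Feldman, McMillan, and Talwar. Their proof does proceed via a clone/blanket decomposition of each local randomizer, reduces the adaptive case to the non-adaptive one, and controls the privacy loss by conditioning on the number of ``revealing'' outputs among the $n-1$ common inputs, with the stated condition on $\varepsilon_0$ arising from the concentration step. One point to be careful about: the tight $\sqrt{e^{\varepsilon_0}\log(1/\delta)/n}$ rate in \cite{feldman2022hiding} is obtained not through a R\'enyi-divergence analysis of the mixture (as you suggest as the likely route) but through a direct analysis of the hockey-stick divergence of a specific three-point mixture distribution that captures the reduced problem after the clone decomposition; earlier R\'enyi-based arguments (e.g., Erlingsson et al.) lose logarithmic factors. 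Your identification of the $O(e^{\varepsilon}\delta_0 n)$ term as coming from a union bound over the $\delta_0$ defects, and of the $\varepsilon_0$ constraint as enabling the Chernoff bound, is correct.
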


Let $\cR^{(t)}(w_{t-1}, A_{\pi(t)}):=w_t$. Let $\varepsilon_0=\frac{\sqrt{2\log(1.25/\delta_0)}}{\alpha}$.  We show $\cR^{(t)}(w_{t-1}, \cdot)$ is an $(\varepsilon_0, \delta_0)$-DP local randomizer.  

	If there is no noise in each update step, the update rule is 
	\begin{align}
		w_t'&\gets w_{t-1}+\eta_t{\rm clip}_\beta\left(A_t  w_{t-1}\right)\;,\\
		w_t &\gets w_{t-1}/\|w_{t-1}\|
	\end{align}
	The sensitivity of $w_t'$ is $2\beta\eta_t$ with respect to a difference in $A_t$. By Gaussian mechanism in Lemma~\ref{lem:gauss} and post processing property of local differential privacy, we know $w_t$ is $(\varepsilon_0, \delta_0)$-DP local randomizer.
	
	Assume that $\varepsilon_0=\frac{\sqrt{2\log(1.25/\delta_0)}}{\alpha}\leq \frac{1}{2}$. By Theorem~\ref{thm:shuffling}, for $\hat{\delta}\in [0,1]$ such that $\varepsilon_0\leq \log\left(\frac{n}{16\log(2/\hat\delta)}\right )$, Algorithm~\ref{alg:dpoja} is $(\hat\varepsilon, \hat\delta+O(e^{\hat\varepsilon}\delta_0n))$-DP  and for some constant $c_1>0$, 
	\begin{align}
		\hat\varepsilon &\leq c_1  \left((1-e^{-\varepsilon_0})\left(\frac{ \sqrt{e^{\varepsilon_{0}} \log (1 / \hat\delta)}}{\sqrt{n}}+\frac{ e^{\varepsilon_{0}}}{n}\right)\right)\\
		&\leq c_1  \left((e^{0.5}-e^{-0.5\varepsilon_0})\frac{ \sqrt{ \log (1 / \hat\delta)}}{\sqrt{n}}+\frac{ e^{\varepsilon_{0}}-1}{n}\right)\\
		&\leq c_1\left(((1+\varepsilon_0)-(1-\varepsilon_0/2))\frac{ \sqrt{ \log (1 / \hat\delta)}}{\sqrt{n}}+\frac{1+2\varepsilon_0-1}{n}\right)\\
		&=c_1\varepsilon_0\left(\frac{1}{2}\sqrt{\frac{\log(1/\hat\delta)}{n}}+\frac{2}{n}\right)\\
		&\leq c_2\frac{\sqrt{\log(1/\delta_0)}}{\alpha}\sqrt{\frac{\log(1/\hat\delta)}{n}}\;, 
	\end{align}
	for some absolute constant $c_2>0$.
	
	Set $\hat{\delta}=\delta/2$, $\delta_0=c_3 \delta/(e^{\hat\varepsilon}n)$ for some $c_3>0$ and $\alpha=C'\log(n/\delta)/(\varepsilon\sqrt{n})$. We have
	\begin{align}
		\hat{\varepsilon}&\leq c_2\frac{\sqrt{\log(e^{\hat\varepsilon}n/(c_3\delta))}}{\alpha}\sqrt{\frac{\log(2/\delta)}{n}}\\
		&= \frac{\sqrt{\log(e^{\hat\varepsilon}n/(c_3\delta))\log(2/\delta)}}{C'\log(n/\delta)}\cdot \varepsilon .\label{eq:hat_varepsilon}
	\end{align}
	
	For any $\varepsilon\leq 1$, by Eq.~\eqref{eq:hat_varepsilon}, there exists some sufficiently large $C'>0$ such that  $\hat{\varepsilon}\leq \varepsilon$. 
	
	Recall that we assume $\varepsilon_0=\frac{\sqrt{2\log(1.25/\delta_0)}}{\alpha}\leq \frac{1}{2}$. This means $\varepsilon =O(\sqrt{\frac{\log(n/\delta)}{n})}$. 
\subsection{Proof of clipping in  Lemma~\ref{lem:clipping_oja}}
\label{sec:lippping_oja_proof}

	Let $z_t = A_tw_{t-1}$. Let $\mu_t := \E[z_t]= \Sigma w_{t-1}$. By Lemma~\ref{lem:tail}, we know for any $\|v\|=1$, with probability $1-\zeta$, 
	\begin{align}
		|v^\top (z_t-\mu_t)|\leq K\gamma\lambda_1 \log^{a}(1/\zeta)\;.
	\end{align}
	
	Applying union bound over all basis vectors $v\in \{e_1,\ldots, e_d\}$ and all samples, we know with probability $1-\zeta$, for all $j\in [d]$ and $t\in [n]$
	\begin{align}
		|z_{t,j}|\leq K\gamma\lambda_1 \log^{a}(nd/\zeta)+\lambda_1\;.
	\end{align}
	
	This implies that with probability $1-\zeta$, for all $t\in [n]$, we have
	\begin{align}
		\|z_t\|\leq (K\gamma\log^{a}(nd/\zeta)+1)\lambda_1\sqrt{d}\;.
	\end{align}

\subsection{Proof of utility in Theorem~\ref{thm:dpoja}}
\label{sec:dpoja_main_proof}

Lemma~\ref{lem:clipping_oja} implies that with probability $1-O(\zeta)$, Algorithm~\ref{alg:dpoja} does not have any clipping. Under this event, the update rule becomes
\begin{align}
		w_{t}'&\gets  w_{t-1}+\eta_t \left( A_{t} +2\alpha\beta G_t\right)w_{t-1} \\
		w_t &\gets w_t'/\|w_t'\|\;,
	\end{align}
where $\beta=(K\gamma\log^{a}(nd/\zeta)+1)\lambda_1\sqrt{d}$ and each entry in $G_t\in \reals^{d\times d}$ is i.i.d. sampled from standard Gaussian $\cN(0,1)$. 
This follows form the fact that $\|w_{t-1}\|=1$ and $G_t w_{t-1} \sim {\cal N}(0, {\bf I}_d)$.

Let $B_t = A_t +2\alpha\beta G_t$.  We show $B_t$ satisfies the three conditions in Theorem~\ref{thm:non-private-oja} ({\cite[Theorem~4.12]{jain2016streaming}}). It is easy to see that $\E[B_t]=\Sigma$ from Assumption~\ref{asmp:A1}. Next, we show upper bound of  $\max\left\{\left\|\E\left[(B_t-\Sigma)(B_t-\Sigma)^\top\right]\right\|_2, \left\|\E\left[(B_t-\Sigma)^\top(B_t-\Sigma)\right]\right\|_2\right\} $. 
We have 
\begin{align}
	&\left\|\E\left[(B_t-\Sigma)(B_t-\Sigma)^\top\right]\right\|_2 \nonumber \\
	=&\;\left\|\E[(A_t +2\alpha\beta G_t-\Sigma)( A_{t} +2\alpha\beta G_t-\Sigma)^\top]\right\|_2 \nonumber \\
	\leq &\;\left\|\E[(A_t-\Sigma)(A_{t}-\Sigma)^\top]\right\|_2+4\alpha^2\beta^2\|\E[G_tG_t^\top]\|_2 \nonumber \\
	\leq  &\; V\lambda_1^2+4\alpha^2\beta^2C_2 d \;,
	\end{align}
where the last inequality follows from Lemma~\ref{lem:norm_noise} and  $C_2>0$ is an absolute constant. Let $\widetilde{V}:= V\lambda_1^2+4\alpha^2\beta^2C_2 d$.
 Similarly, we can show that $ \left\|\E\left[(B_t-\Sigma)^\top(B_t-\Sigma)\right]\right\|_2\leq \widetilde{V}$. 
 
By Lemma~\ref{lemma:gaussian_matrix_tail}, we know with probability $1-\zeta$, for all $t\in [T]$,
\begin{align*}
	&\left\|B_t-\Sigma\right\|_2\\
	=&\left\|A_t +2\alpha\beta G_t-\Sigma\right\|_2\\
	\leq & \left\|A_t -\Sigma\right\|_2+2\alpha\beta \|G_t\|_2\\
	\leq & M\lambda_1+ 2C_3\alpha\beta\left(\sqrt{d}+\sqrt{\log(n/\zeta)}\right)\;.
\end{align*}
Let $\widetilde{M} := M\lambda_1+ 2C_3\alpha\beta\left(\sqrt{d}+\sqrt{\log(n/\zeta)}\right)$. 

Under the event that $\left\|B_t-\Sigma\right\|_2\leq \widetilde{M}$ for all $t\in [n]$, we apply Theorem~\ref{thm:non-private-oja} with a learning rate $\eta_t=\frac{h}{(\lambda_1-\lambda_2)(\xi+t)}$ where 
\begin{align}
    \xi=20\max\left(\frac{\widetilde{M}h}{(\lambda_1-\lambda_2)}, \frac{\left(\widetilde{V}+\lambda_1^2\right)h^2}{(\lambda_1-\lambda_2)^2\log(1+\frac{\zeta}{100})}\right)\;.
\end{align}
Then Theorem~\ref{thm:non-private-oja} implies that with probability $1-\zeta$, 

\begin{align}
	\sin^{2}\left(w_n, v_{1}\right)  
	\leq  \frac{C\log(1/\zeta)}{\zeta^2}\left(d\left(\frac{\xi}{n}\right)^{2h}+\frac{h^2 \widetilde{V}}{(2h-1)\left(\lambda_{1}-\lambda_{2}\right)^{2}n} \right)\;,
\end{align}
for some positive constant $C$.

Set $\alpha=\frac{C'\log(n/\delta)}{\varepsilon\sqrt{n}}$, the above bound implies
\begin{align}
    \sin^{2}\left(w_n, v_{1}\right) \leq  \frac{C\log(1/\zeta)}{\zeta^2}\left(\frac{ h^2   V\lambda_1^2}{(2h-1)\left(\lambda_{1}-\lambda_{2}\right)^{2}n}+\frac{(K\gamma\log^a(nd/\zeta)+1)^2\lambda_1^2\log^2(n/\delta)d^2h^2}{(2h-1)(\lambda_1-\lambda_2)^2\varepsilon^2n^2}+d\left(\tilde{\xi}\right)^{h} \right)\;,
\end{align} 
where $\tilde{\xi}=(\xi/n)^2$, and 
\begin{align}
    \tilde{\xi}:=\max&\left(\frac{M^2\lambda_1^2h^2}{(\lambda_1-\lambda_2)^2n^2}+\frac{(K\gamma\log^a(nd/\zeta)+1)^2\lambda_1^2\log^3(n/\delta)h^2 d^2}{(\lambda_1-\lambda_2)^2\varepsilon^2n^3},  \right.\nonumber \\
    &\left. \frac{V^2\lambda_1^4h^4}{(\lambda_1-\lambda_2)^4\log^2(1+\frac{\zeta}{100}) n^2}+\frac{(K\gamma\log^a(nd/\zeta)+1)^4\lambda_1^4\log^4(n/\delta)h^4d^4}{(\lambda_1-\lambda_2)^4\log^2(1+\frac{\zeta}{100})\varepsilon^4n^4}
    \right. \nonumber \\
    &\left. + \frac{\lambda_1^4h^4}{(\lambda_1-\lambda_2)^4\log^2(1+\frac{\zeta}{100})n^2}
    \right)\;.
\end{align}

For $\zeta = O(1)$ and $K=O(1)$, 
selecting $h=c \log n$, and assuming 
\begin{align}
    n\geq &C\left(\frac{M\lambda_1\log(n)}{\lambda_1-\lambda_2}+\frac{(K\gamma\log^a(nd/\zeta)+1)^{2/3}\lambda_1^{2/3}\log(n/\delta)\log^{2/3}(n)d^{2/3}}{(\lambda_1-\lambda_2)^{2/3}\varepsilon^{2/3}}\right. \nonumber\\
    &\left.+\frac{V\lambda_1^2(\log(n))^2}{(\lambda_1-\lambda_2)^2}+\frac{(K\gamma\log^a(nd/\zeta)+1)\lambda_1\log(n/\delta)\log(n)d}{(\lambda_1-\lambda_2)\varepsilon}+\frac{\lambda_1^2\log^2(n)}{(\lambda_1-\lambda_2)^2}
    \right)\;,
\end{align}
with large enough positive constants $c$, and $C$, we have $\tilde \xi\leq 1$ and $d\tilde\xi^\alpha \leq 1/n^2$. Hence it is sufficient to have 
$$ n=\tilde O\Big(\,\frac{\lambda_1^2}{(\lambda_1-\lambda_2)^2} + \frac{M\lambda_1}{\lambda_1-\lambda_2} + \frac{V\lambda_1^2}{(\lambda_1-\lambda_2)^2} + \frac{d\,(\gamma+1)\lambda_1\,\log (1/\delta)}{(\lambda_1-\lambda_2)\varepsilon}\, \Big)\;,$$
with a large enough constant.

\section{The analysis of DP-PCA}

We provides the proofs for Theorem~\ref{thm:main}, Theorem~\ref{thm:eigen}, and Lemma~\ref{lem:dpmean} that guarantees the privacy and utility of DP-PCA.   

\subsection{Proof of Theorem~\ref{thm:main} on the privacy and utility of DP-PCA}
\label{sec:dppca_proof}

From Theorem~\ref{thm:eigen} we know that Alg.~\ref{alg:eigen} returns $\hat{\Lambda}$ satisfying $2\hat{\Lambda}\geq \lambda_1^2\|H_u\|_2 $ with high probability. Then, from Lemma~\ref{lem:dpmean}, we know that with high probability Alg~\ref{alg:dpmean} returns an unbiased estimate of the gradient mean with added Gaussian noise. 
Under this case,  the update rule becomes
\begin{align}
		w_{t}'&\gets  w_{t-1}+\eta_t \left(\frac{1}{B}\sum_{i=1}^B A_{B(t-1)+i} +\beta_t G_t\right)w_{t-1} \\
		w_t &\gets w_t'/\|w_t'\|\;,
	\end{align}
where $\beta_t=\frac{8K\sqrt{2\hat{\Lambda}_t}\log^a(Bd/\zeta)\sqrt{2d\log(2.5/\delta)}}{\varepsilon B}$, $\hat{\Lambda}_t$ denote the estimated eigenvalue of covariance of the gradients at $t$-th iteration, and each entry in $G_t\in \reals^{d\times d}$ is i.i.d. sampled from standard Gaussian $\cN(0,1)$. 
This follows form the fact that $\|w_{t-1}\|=1$ and $G_t w_{t-1} \sim {\cal N}(0, {\bf I}_d)$.

Let $\beta := \frac{16K \gamma\lambda_1 \log^a(Bd/\zeta)\sqrt{2d\log(2.5/\delta)}}{\varepsilon B}$ such that $\beta \geq \beta_t$, which follows from the fact that $\hat{\Lambda} \leq  \sqrt{2} \lambda_1^2\|H_u\|_2 \leq \sqrt{2} \lambda_1^2\gamma^2$ (Theorem~\ref{thm:eigen} and Assumption~\ref{asmp:A4}).
Let $B_t = (1/B) \sum_{i=1}^B A_{B(t-1)+i} +\beta_t G_t$.  We show $B_t$ satisfies the three conditions in Theorem~\ref{thm:non-private-oja} ({\cite[Theorem~4.12]{jain2016streaming}}). It is easy to see that $\E[B_t]=\Sigma$ from Assumption~\ref{asmp:A1}. Next, we show upper bound of  $\max\left\{\left\|\E\left[(B_t-\Sigma)(B_t-\Sigma)^\top\right]\right\|_2, \left\|\E\left[(B_t-\Sigma)^\top(B_t-\Sigma)\right]\right\|_2\right\} $. 
We have 
\begin{align}
	&\left\|\E\left[(B_t-\Sigma)(B_t-\Sigma)^\top\right]\right\|_2 \nonumber \\
	=&\;\left\|\E[(\frac{1}{B}\sum_{i=1}^B A_{B(t-1)+i} +\beta_t G_t-\Sigma)(\frac{1}{B}\sum_{i=1}^B A_{B(t-1)+i} +\beta_t G_t-\Sigma)^\top]\right\|_2 \nonumber \\
	\leq &\;\left\|\E[(\frac{1}{B}\sum_{i=1}^B A_{B(t-1)+i}-\Sigma)(\frac{1}{B}\sum_{i=1}^B A_{B(t-1)+i}-\Sigma)^\top]\right\|_2+\beta^2\|\E[G_tG_t^\top]\|_2 \nonumber \\
	= &\; V\lambda_1^2/B+\beta^2\|\E[G_tG_t^\top]\|_2 \nonumber\\
	\leq &\; V\lambda_1^2/B+\beta^2C_2d\;, \label{eq:newV}
\end{align}
where the last inequality follows from Lemma~\ref{lem:norm_noise} and  $C_2>0$ is an absolute constant. 
Let $\widetilde{V}:= V\lambda_1^2/B+\beta^2C_2d$. Similarly, we can show that $ \left\|\E\left[(B_t-\Sigma)^\top(B_t-\Sigma)\right]\right\|_2\leq \widetilde{V}$. 
By Lemma~\ref{lem:bernstein} and Lemma~\ref{lemma:gaussian_matrix_tail}, we know with probability $1-\zeta$, for all $t\in [T]$,
\begin{align*}
	&\left\|B_t-\Sigma\right\|_2\\
	=&\left\|\frac{1}{B}\sum_{i=1}^B  A_{B(t-1)+i} +\beta_t G_t-\Sigma\right\|_2\\
	\leq & C_3\left( \frac{M\lambda_1\log (dT/\zeta)}{B}+\sqrt{\frac{V\lambda_1^2\log(dT/\zeta)}{B}}+\beta\left(\sqrt{d}+\sqrt{\log(T/\zeta)}\right)\right)\;.
\end{align*}
Let $\widetilde{M} := C_3\left( \frac{M\lambda_1\log (dT/\zeta)}{B}+\sqrt{\frac{V\lambda_1^2\log(dT/\zeta)}{B}}+\beta\left(\sqrt{d}+\sqrt{\log(T/\zeta)}\right)\right)$. 
Under the event that $\left\|B_t-\Sigma\right\|_2\leq \widetilde{M}$ for all $t\in [T]$, we apply Theorem~\ref{thm:non-private-oja} with a learning rate $\eta_t=\frac{\alpha}{(\lambda_1-\lambda_2)(\xi+t)}$ where 
\begin{align}
    \xi=20\max\left(\frac{\widetilde{M}\alpha}{(\lambda_1-\lambda_2)}, \frac{\left(\widetilde{V}+\lambda_1^2\right)\alpha^2}{(\lambda_1-\lambda_2)^2\log(1+\frac{\zeta}{100})}\right)\;.
    \label{eq:def_lr}
\end{align}
Then Theorem~\ref{thm:non-private-oja} implies that with probability $1-\zeta$, 

\begin{align}
	\sin^{2}\left(w_T, v_{1}\right)  
	\leq  \frac{C\log(1/\zeta)}{\zeta^2}\left(d\left(\frac{\xi}{T}\right)^{2\alpha}+\frac{\alpha^2 \widetilde{V}}{(2\alpha-1)\left(\lambda_{1}-\lambda_{2}\right)^{2}T} \right)\;,
\end{align}
for some positive constant $C$. 
Using $n=BT$ and Eq.~\eqref{eq:newV}, the above bound implies
\begin{align}
    \sin^{2}\left(w_T, v_{1}\right) \leq  \frac{C\log(1/\zeta)}{\zeta^2}\left(\frac{ \alpha^2 V\lambda_1^2}{(2\alpha-1)\left(\lambda_{1}-\lambda_{2}\right)^{2}n}+\frac{K^2\gamma^2\lambda_1^2\log^{2a}(nd/(T\zeta))\log(1/\delta)d^2\alpha^2T}{(2\alpha-1)(\lambda_1-\lambda_2)^2\varepsilon^2n^2}+d\left(\tilde{\xi}\right)^{\alpha} \right)\;. \label{eq:balance1}
\end{align} 
where $\tilde{\xi}=(\xi/T)^2$, and 
\begin{align}
    \tilde{\xi}:=\max&\left(\frac{M^2\lambda_1^2\alpha^2\log^2(dT/\zeta)}{(\lambda_1-\lambda_2)^2n^2}+\frac{V\lambda_1^2\log(dT/\zeta)\alpha^2}{(\lambda_1-\lambda_2)^2nT}+\frac{K^2\gamma^2\lambda_1^2\log^{2a}(nd/(T\zeta))\log(1/\delta)\log(T/\zeta)\alpha^2 d^2}{(\lambda_1-\lambda_2)^2\varepsilon^2n^2},  \right.\nonumber \\
    &\left. \frac{V^2\lambda_1^4\alpha^4}{(\lambda_1-\lambda_2)^4\log^2(1+\frac{\zeta}{100}) n^2}+\frac{K^4\gamma^4\lambda_1^4\log^{4a}(nd/(T\zeta))\log^2(1/\delta)\alpha^4d^4T^2}{(\lambda_1-\lambda_2)^4\log^2(1+\frac{\zeta}{100})\varepsilon^4n^4}
    \right. \nonumber \\
    &\left. + \frac{\lambda_1^4\alpha^4}{(\lambda_1-\lambda_2)^4\log^2(1+\frac{\zeta}{100})T^2}
    \right)\;. \label{eq:balance2}
\end{align}
For $\zeta = O(1)$ and $K=O(1)$, 
selecting $\alpha=c \log n$, $T=c'(\log n)^2$, and assuming $\log n \geq \lambda_1^2/(\lambda_1-\lambda_2)^2$ and 
\begin{align}
    n\geq &C\left(\frac{M\lambda_1\log(n)\log(d\log(n))}{\lambda_1-\lambda_2}+\frac{\sqrt{V\lambda_1^2\log(dT)}}{(\lambda_1-\lambda_2)}+\frac{\gamma\lambda_1\log^a(nd/\log(n))\sqrt{\log(1/\delta)\log(\log(n))}\log(n)d}{(\lambda_1-\lambda_2)\varepsilon}\right. \nonumber\\
    &\left.+\frac{V\lambda_1^2(\log(n))^2}{(\lambda_1-\lambda_2)^2}+\frac{\gamma\lambda_1\log^a(nd/\log(n))\sqrt{\log(1/\delta)}(\log(n))^2d}{(\lambda_1-\lambda_2)\varepsilon}
    \right)\;,
\end{align}
with large enough positive constants $c$, $c'$, and $C$, we have $\tilde \xi\leq 1$ and $d\tilde\xi^\alpha \leq 1/n^2$. Hence it is sufficient to have 
$$ n=\tilde O\Big(\,\exp(\lambda_1^2/(\lambda_1-\lambda_2)^2) + \frac{M\lambda_1}{\lambda_1-\lambda_2} + \frac{V\lambda_1^2}{(\lambda_1-\lambda_2)^2} + \frac{d\,\gamma\,\lambda_1\sqrt{\log (1/\delta)}}{(\lambda_1-\lambda_2)\varepsilon}\, \Big)\;,$$
with a large enough constant. 


\subsection{Algorithm and proof of Theorem~\ref{thm:eigen} on top eigenvalue estimation}
\label{sec:eigen_proof}

\begin{algorithm2e}[H]    
   \caption{Private Top Eigenvalue Estimation} 
   \label{alg:eigen} 
   	\DontPrintSemicolon 
	\KwIn{$S=\{g_i\}_{i=1}^B$, $(\varepsilon,\delta)$-DP, failure probability $\zeta$}
	\SetKwProg{Fn}{}{:}{}
	{ 
	Let $\tilde g_i \gets g_{2i} - g_{2i-1}$ for $i\in 1, 2, \ldots, \lfloor B/2\rfloor$. Let $\tilde{S}= \{\tilde g_i\}_{i=1}^{\lfloor B/2\rfloor}$

	Partition $\tilde{S}$ into $k={C_1\log(1/(\delta \zeta))}/{\varepsilon}$ subsets and denote each dataset as $G_j\in \reals^{d\times b}$, where each dataset is of size $b=\lfloor B/2k \rfloor$\\

		Let  $\lambda_1^{(j)}$ be the top eigenvalue of $(1/b)G_jG_j^\top$ for $\forall j \in [k]$ \\
	Partition $[0, \infty)$ into $\Omega\gets \left\{\ldots,\left[ 2^{-2/4}, 2^{-1 / 4}\right)\left[ 2^{-1 / 4}, 1\right)\left[1,2^{1 / 4}\right),\left[2^{1 / 4}, 2^{2/4}\right), \ldots\right\} \cup\{[0,0]\}$\\
	Run $(\varepsilon, \delta)$-DP histogram learner of Lemma~\ref{lem:hist-KV17} on $\{ \lambda_1^{(j)}\}_{j=1}^k$ over $\Omega$ \\
	{\bf if} all the bins are empty {\bf then} 
	Return $\perp$\\
	Let $[l, r]$ be a non-empty bin that contains the maximum number of points in the DP histogram\\
	Return  $\hat{\Lambda}= l$
	} 
\end{algorithm2e}

Taking the difference ensures that $\tilde g_i$ is zero mean, such that we can directly use the top eigenvalue of $(1/b)G_jG_j^\top$ for $j\in[k]$. 
We compute a histogram over those $k$ top eigenvalues. This histogram is privatized by adding noise only to the occupied bins and thresholding small entries of the histogram to be zero. The choice $k=\Omega(\log(1/\zeta)/\varepsilon)$ ensures that the most occupied bin does not change after adding the DP noise to the histograms, and $k=\Omega(\log(1/\delta)/\varepsilon)$ is necessary for handling unbounded number of bins.
We emphasize that  we do not require any upper and lower bounds on the eigenvalue, thanks to the private histogram learner from  \cite{bun2019simultaneous,karwa2018finite} that gracefully handles unbounded number of bins. 

The privacy guarantee follows from the privacy guarantee of the histogram learner provided in Lemma~\ref{lem:hist-KV17}.

    For utility analysis, we follow the analysis of {\cite[Theorem~3.1]{kamath2021private}}. The main difference is that  we prove a smaller sample complexity sine we only need the top eigenvalue, and we analyze a more general distribution family. 
    The random vector $\tilde g_i$ is zero mean with covariance $2\lambda_1^2H_u\in{\mathbb R}^{d\times d}$, where $H_u = \E[(A_i-\Sigma)uu^\top(A_i-\Sigma)^\top]/\lambda_1^2$, and 	$\tilde g_i$ satisfies with probability $1-\zeta$, 
	\begin{align}
		|\ip{\tilde g_i}{v}| \;\leq\; 2K\lambda_1\sqrt{\|H_u\|_2}\log^{a}(1/\zeta)\;, \label{eq:tail_y}
	\end{align}
	which follows from Lemma~\ref{lem:tail}. 
	Applying union bound over all basis vectors $v\in \{e_1,\ldots, e_d\} $, we know with probability $1-\zeta$,
		\begin{align*}
		\| \tilde g_i \| 
	 \; \leq \;  2K\lambda_1\sqrt{d\|H_u\|_2}\log^{a}(d/\zeta)\;.
	\end{align*}
	We next show that conditioned on event $\cE=\{\| \tilde g_i \|\leq 2K\lambda_1\sqrt{d\|H_u\|_2}\log^{a}(d/\zeta)\}$, the covariance $\E[\tilde g_i \tilde g_i ^\top|\cE]$ is close to the true covariance $\E[\tilde g_i \tilde g_i^\top]=2\lambda_1^2 H_u$. 
	Note that 
	\begin{align}
		\E[\tilde g_i \tilde g_i^\top|\cE] &\;=\; \frac{\E[ \tilde g_i \tilde g_i^\top \ind\{\|\tilde g_i\|\leq 2K\lambda_1\sqrt{d\|H_u\|_2}\log^{a}(d/\zeta)\}]}{\prob(\cE)} \nonumber 
		\\
		&\;\preceq\; \frac{\E[\tilde g_i \tilde g_i^\top ]}{\prob(\cE)} 
		\;\preceq\; \frac{2\lambda_1^2H_u}{1-\zeta}\;. \label{eq:cov_ub}
	\end{align} 
	
	
We next show the empirical covariance $({1}/{b})\sum_{i=1}^b\tilde g_i \tilde g_i^\top$ concentrates around $2\lambda_1^2H_u$. 
First of all, using union bound on Eq.~\eqref{eq:tail_y}, we have with probability $1-\zeta$, for all $i\in [b]$ and $j\in [d]$, 
\begin{align*}
	|\tilde{g}_{ij}|\leq 2K\lambda_1\sqrt{\|H_u\|_2}\log^{a}(bd/\zeta)\;.
\end{align*}
Under the event that $|\tilde{g}_{ij}|\leq 2K\lambda_1\sqrt{\|H_u\|_2}\log^{a}(nd/\zeta)$ for all $i\in [b]$, $j\in [d]$, {\cite[Corrollary~6.20]{wainwright2019high}} together with Eq.~\eqref{eq:cov_ub} implies 
	\begin{align*}
		\prob\left(\left\|\frac1b\sum_{i=1}^b \tilde g_i \tilde g_i^\top- 2\lambda_1^2H_u\right\|_2\geq \alpha\right)\leq 2d\exp\left(-\frac{b \alpha^2}{8K^2\lambda_1^2\|H_u\|_2\log^{2a}(\frac{bd}{\zeta})d(2\lambda_1^2\|H_u\|_2/(1-\zeta)+\alpha)}\right)\;.
	\end{align*} 
	The above bound implies that with probability $1-\zeta$,  
	\begin{align*}
		\left\|\frac1b\sum_{i=1}^b \tilde{g}_i\tilde{g}_i^\top-\lambda_1^2 2H_u\right\|_2 \;=\; O\Big(\, K\lambda_1^2\|H_u\|_2\log^a(bd/\zeta) \sqrt{\frac{d\log(d/\zeta)}{b}}+K^2\lambda_1^2\|H_u\|_2\log^{2a}(bd/\zeta)\frac{d\log(d/\zeta)}{b}
		\, \Big)\;.
	\end{align*}

	 This means if $b={\Omega}(K^2 d \log(dk/\zeta)\log^{2a}(bdk/\zeta))$, then with probability $1-\zeta$, for all $j\in [k]$,  $(1-2^{1/8})\lambda_1^2\|H_u\|_2\leq \lambda_1^{(j)}\leq (1+2^{1/8})\lambda_1^2\|H_u\|_2$. This means all of $\lambda_1^{(j)}$ must be within $2^{1/4}\lambda_1^2\|H_u\|_2$ interval. Thus, at most two consecutive buckets are filled with $\lambda_1^{(j)}$. By private histogram from Lemma~\ref{lem:hist-KV17}, if $k\geq \log(1/(\delta\zeta))/\varepsilon$, one of those two bins are released.  The resulting  total multiplicative error is bounded by  $2^{1/2}$.

\subsection{Algorithm and proof of Lemma~\ref{lem:dpmean} on DP mean estimation} 
\label{sec:dpmean_proof}

\begin{algorithm2e}[H]    
   \caption{Private Mean Estimation \cite{karwa2018finite,kamath2019privately} }
   \label{alg:dpmean} 
   	\DontPrintSemicolon 
	\KwIn{$S=\{g_i\}_{i=1}^B$, $(\varepsilon , \delta )$, target error $\alpha$, failure probability $\zeta$, approximate top eigenvalue $\hat{\Lambda}$ }
	\SetKwProg{Fn}{}{:}{}
	{ 
	
	Let $\tau=2^{1/4}K\sqrt{\hat{\Lambda}}\log^a(25)$.\\
	\For{j=1, 2, \ldots, d}{

	Run $(\frac{\varepsilon}{4\sqrt{2d\log(4/\delta)}}, \frac{\delta}{4d})$-DP histogram learner of Lemma~\ref{lem:hist-KV17} on $\{g_{ij}\}_{i\in [B]}$ over $\Omega=\{\cdots,(-2 \tau,- \tau],(- \tau, 0],(0, \tau],( \tau, 2 \tau],(2 \tau, 3 \tau] \cdots\}$.\\
	Let $[l, h]$ be the bucket that contains maximum number of points in the private histogram\\ 
	 $\bar{g}_j\gets l$\\
	Truncate the $j$-th coordinate of gradient $\{g_i\}_{i\in [B]}$ by $[\bar{g}_j-3K\sqrt{\hat{\Lambda}}\log^a(Bd/\zeta), \bar{g}_j+3K\sqrt{\hat{\Lambda}}\log^a(Bd/\zeta)]$.\\
	Let $\tilde{g}_i$ be the truncated version of $g_i$.
	}
	Compute empirical mean of truncated gradients  $\tilde{\mu}= (1/B) \sum_{i=1}^B\tilde{g}_i$ and add Gaussian noise: $\hat{\mu}=\tilde{\mu}+\cN\left(0, \left(\frac{12K\sqrt{\hat{\Lambda}}\log^a(Bd/\zeta)\sqrt{2d\log(2.5/\delta)}}{\varepsilon B}\right)^2\mathbf{I}_d\right)$\\
	Return $\hat{\mu}$
	} 
\end{algorithm2e}

    The histogram learner is called $d$ times, each with $(\varepsilon/(4\sqrt{2d\log(4/\delta)}),\delta/(4d))$-DP guarantee, and the end-to-end privacy guarantee is $(\varepsilon/2,\delta/2)$ from Lemma~\ref{lem:composition} for $\varepsilon\in(0,0.9)$. The sensitivity of the clipped mean estimate is $\Delta=\sqrt{d} 6K\sqrt{\hat{\Lambda}}\log^a(Bd/\zeta)$. Gaussian mechanism  with covariance $(2\Delta\sqrt{2\log(2.5/\delta)}/\varepsilon)^2{\bf I}_d$ satisfy $(\varepsilon/2,\delta/2)$-DP from Lemma~\ref{lem:gauss} for $\varepsilon \in (0,1)$. Putting these two together, with serial composition of Lemma~\ref{lem:serial}, we get the desired privacy guarantee.

    The proof of utility follows similarly as \cite[Lemma~D.2]{liu2021robust}. Let $I_l=(l\sqrt{\hat{\Lambda}}, (l+1)\sqrt{\hat{\Lambda}}]$. Denote the population probability of $j$-th coordinate at $I_l$ as $h_{j,l}=\prob(g_{ij}\in I_l)$. Denote the empirical probability as $\hat{h}_{j,l}=\frac{1}{B}\sum_{i=1}^B\ind(g_{ij}\in I_{l})$.  Denote the private empirical probability being released as $\tilde{h}_{j, l}$.

    Fix $j\in [d]$. Let $I_k$ be the bin that contains the $\mu_j$. Then we know $[\mu_j-K\lambda_1\sqrt{\|H_u\|_2}\log^a(25), \mu_j+K\lambda_1\sqrt{\|H_u\|_2}\log^a(25)]\subseteq [\mu_j-\tau, \mu_j+\tau]\subset(I_{k-1}\cup I_k \cup I_{k+1})$.     By Lemma~\ref{lem:tail}, we know $\prob(|g_{ij}-\mu_j|\geq \tau)\leq \prob(|g_{ij}-\mu_j|\geq K\lambda_1\sqrt{\|H_u\|_2}\log^a(25))\leq 0.04$. This means $h_{(k-1),j}+h_{k,j}+h_{(k+1),j}\geq 0.96$ and $\min(h_{(k-1),j},h_{k,j},h_{(k+1),j})\geq 0.32$.
    
    By Dvoretzky-Kiefer-Wolfowitz inequality and an union bound over $j\in  [d]$, we have that with probability $1-\zeta$, $\max_{j,l}|h_{j, l}-\hat{h}_{j, l}|\leq \sqrt{\log(d/\zeta)/B}$.   Using Lemma~\ref{lem:hist-KV17},  if $B=\Omega((\sqrt{d\log(1/\delta)}/\varepsilon)\log(d/(\zeta\delta)))$, with probability $1-\zeta$, we have $\max_{j,l}|\tilde{h}_{j, l}-\hat{h}_{j, l}|\leq 0.005$. Thus, with our assumption on $B$, we can make sure with probability $1-\zeta$, $\max_{j,l}|\tilde{h}_{j, l}-h_{j, l}|\leq 0.01$. Then we have $\min(h_{(k-1),j},h_{k,j},h_{(k+1),j})-0.01\geq 0.31\geq 0.04+0.01\geq \max_{l\neq k-1, k, k_1}h_{j, l}+0.01$. This implies with probability $1-\zeta$, the algorithm must pick one of the bins from $I_{k-1}, I_k, I_{k+1}$. This means $|\bar{g}_{j}-\mu_j|\leq 2\tau \leq 2^{1.5}K\lambda_1\sqrt{\|H_u\|_2}\log^a(25)$. By tail bound of Lemma~\ref{lem:tail}, we know for all $j\in [d]$ and $i\in [B]$, $|g_{ij}-\bar{g}_j|\leq |g_{ij}-\mu_j|+|\bar{g}_{j}-\mu_j|\leq  3K\lambda_1\sqrt{\|H_u\|_2}\log^a(Bd/\zeta)$.  This completes our proof.



\section{Technical lemmas}

\begin{lemma}
\label{lemma:gaussian_norm}
	Let $x\in \reals^d\sim \cN(0, \Sigma)$. Then there exists universal constant $C$ such that with probability $1-\zeta$,
	\begin{align}
		\|x\|^2\leq C \Tr(\Sigma)\log(1/\zeta)\;.
	\end{align}
\end{lemma}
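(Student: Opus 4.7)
The plan is to reduce to a weighted chi-squared concentration inequality and apply a standard tail bound. I will diagonalize the covariance so that $\|x\|^2$ becomes a linear combination of i.i.d.\ squared standard Gaussians with weights given by the eigenvalues of $\Sigma$.

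First, I would write $\Sigma = U \Lambda U^\top$ where $\Lambda = \mathrm{diag}(\lambda_1, \ldots, \lambda_d)$ is the diagonal matrix of eigenvalues of $\Sigma$ and $U$ is orthogonal. Since $x \sim \mathcal{N}(0,\Sigma)$, we can write $x = \Sigma^{1/2} g$ with $g \sim \mathcal{N}(0, \mathbf{I}_d)$. Then $\|x\|^2 = g^\top \Sigma g = \sum_{i=1}^d \lambda_i h_i^2$ where $h = U^\top g \sim \mathcal{N}(0,\mathbf{I}_d)$ so that the $h_i$'s are i.i.d.\ standard Gaussians.

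Next I would apply the Laurent--Massart one-sided tail bound for weighted sums of centered chi-squared random variables, which states that for any $t > 0$,
\begin{align}
\Pr\!\left( \sum_{i=1}^d \lambda_i (h_i^2 - 1) \;\geq\; 2\sqrt{t \sum_{i=1}^d \lambda_i^2} \;+\; 2 t \max_i \lambda_i \right) \;\leq\; e^{-t}.
\end{align}
Setting $t = \log(1/\zeta)$ and rearranging yields that with probability at least $1-\zeta$,
\begin{align}
\|x\|^2 \;\leq\; \sum_i \lambda_i \;+\; 2\sqrt{\log(1/\zeta)\sum_i \lambda_i^2} \;+\; 2\log(1/\zeta)\max_i \lambda_i.
\end{align}

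Finally, I would bound each term by a multiple of $\mathrm{Tr}(\Sigma)\log(1/\zeta)$. Since $\lambda_i \geq 0$, we have $\sum_i \lambda_i^2 \leq (\sum_i \lambda_i)^2 = \mathrm{Tr}(\Sigma)^2$ and $\max_i \lambda_i \leq \mathrm{Tr}(\Sigma)$, and of course $\sum_i \lambda_i = \mathrm{Tr}(\Sigma)$. Plugging in and using $1 \leq \log(1/\zeta)$ (which is the regime of interest; for $\zeta$ near $1$ the bound is trivial after adjusting $C$), all three terms are absorbed into $C \,\mathrm{Tr}(\Sigma) \log(1/\zeta)$ for a universal constant $C$ (e.g.\ $C=5$ suffices). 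There is no real obstacle here; the only minor subtlety is handling the regime where $\log(1/\zeta)$ is not large, which is handled by inflating $C$.
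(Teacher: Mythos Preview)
Your proof is correct and follows essentially the same approach as the paper: write $\|x\|^2$ as a quadratic form $g^\top \Sigma g$ in a standard Gaussian $g$, apply a concentration inequality, and then bound $\|\Sigma\|_F$ and $\|\Sigma\|_2$ (equivalently $\sum_i \lambda_i^2$ and $\max_i \lambda_i$) by $\Tr(\Sigma)$. The only cosmetic difference is that the paper invokes the Hanson--Wright inequality (\cite[Theorem~6.2.1]{vershynin2018high}) directly on the quadratic form, whereas you diagonalize first and invoke the Laurent--Massart weighted chi-squared bound; these yield the same shape $\Tr(\Sigma)+c_1\|\Sigma\|_F\sqrt{\log(1/\zeta)}+c_2\|\Sigma\|_2\log(1/\zeta)$ and the final absorption step is identical.
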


\begin{proof}
	Let $\tilde{x}:=\Sigma^{-1/2}x$. Then $\tilde{x}$ is also a Gaussian with $\tilde{x}\sim \cN(0, \mathbf{I}_d)$. By Hanson-Wright inequality ( {\cite[Theorem~6.2.1]{vershynin2018high}}), there exists universal constant $c>0$ such that with probability $1-\zeta$, 
\begin{align}
   \|x\|^2 = \tilde{x}^\top\Sigma \tilde{x}\leq \Tr(\Sigma)+ c(\|\Sigma\|_{\F}+\|\Sigma\|_2)\log(2/\zeta)\leq C \Tr(\Sigma)\log(1/\zeta)\;.
\end{align}

\end{proof}

\begin{lemma}[{\cite[Theorem~4.4.5]{vershynin2018high}}]
\label{lemma:gaussian_matrix_tail}
	Let $G\in \reals^{d\times d}$ be a random matrix where each entry $G_{ij}$ is i.i.d. sampled from standard Gaussian $\cN(0,1)$. Then there exists universal constant $C>0$ such that with probability $1-2e^{-t^2}$, $\|G\|_2\leq C(\sqrt{d}+t)$ for $t>0$.
\end{lemma}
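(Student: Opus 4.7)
The plan is to prove this standard spectral norm concentration via the classical $\varepsilon$-net plus union bound argument, which is the approach used in Vershynin's textbook. The overall strategy is to reduce the supremum defining the operator norm to a supremum over a finite set, then exploit the fact that for fixed unit vectors $u,v$, the scalar $u^\top G v$ is a single standard Gaussian, which concentrates very sharply.

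First I would use the variational characterization $\|G\|_2 = \sup_{u,v \in \mathbb{S}_2^{d-1}} u^\top G v$. Next I would fix a $1/4$-net $\mathcal{N}$ of $\mathbb{S}_2^{d-1}$; a standard volumetric bound (see, e.g., Vershynin Corollary 4.2.13) gives $|\mathcal{N}| \leq 9^d$. A short approximation argument then shows
\begin{equation*}
\|G\|_2 \;\leq\; 2 \sup_{u,v \in \mathcal{N}} u^\top G v,
\end{equation*}
so it suffices to control the right-hand side.

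For each fixed pair $(u,v) \in \mathcal{N}\times\mathcal{N}$, rotation invariance of i.i.d.\ Gaussian entries yields $u^\top G v \sim \mathcal{N}(0,1)$, so the standard Gaussian tail bound gives $\mathbb{P}(|u^\top G v|>s) \leq 2 e^{-s^2/2}$. A union bound over the at most $9^{2d}$ pairs produces
\begin{equation*}
\mathbb{P}\Big(\sup_{u,v \in \mathcal{N}} u^\top G v > s\Big) \;\leq\; 2\cdot 9^{2d}\, e^{-s^2/2}.
\end{equation*}
Choosing $s = C_0(\sqrt{d} + t)$ for a sufficiently large absolute constant $C_0$ absorbs the $9^{2d} = e^{(2\log 9)d}$ factor into the Gaussian exponent and leaves a residual tail of $2 e^{-t^2}$. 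Combining with the net-reduction factor of $2$ produces $\|G\|_2 \leq 2C_0(\sqrt{d}+t)$ with probability at least $1-2e^{-t^2}$, yielding the claim with $C := 2C_0$.

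The main technical step to handle carefully is the constant in the net-approximation inequality, since a naïve bound only controls $u^\top G v$ on the net rather than the true supremum; one has to absorb the approximation error on both coordinates, which is why the $1/4$-net (rather than, say, a $1/2$-net) gives the clean factor of $2$. Everything else is a direct computation. No deep tool beyond Gaussian concentration and the volumetric covering bound is required; more refined approaches such as Gordon's or Slepian's inequality would give sharper constants but are unnecessary here since the statement only asks for an absolute constant $C$.
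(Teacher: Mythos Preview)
Your proposal is correct and follows exactly the $\varepsilon$-net argument that Vershynin uses to prove Theorem~4.4.5 in the cited reference; the paper itself simply invokes that result without reproducing a proof. Your handling of the net-reduction constant and the choice of $C_0$ to absorb the $9^{2d}$ factor are both sound.
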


\begin{lemma}
\label{lem:norm_noise}
Let $G\in \reals^{d\times d}$ be a random matrix where each entry $G_{ij}$ is i.i.d. sampled from standard Gaussian $\cN(0,1)$. Then we have
$\|\E[GG^\top]\|_2 \leq C_2 d$ and $\|\E[G^\top G]\|_2\leq C_2 d$.
\end{lemma}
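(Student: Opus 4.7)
The plan is to compute $\E[GG^\top]$ and $\E[G^\top G]$ entrywise and observe that each is a scalar multiple of the identity matrix, after which the spectral norm bound is immediate. This reduces everything to elementary second-moment identities for independent standard Gaussians, so no concentration or matrix inequality is needed.

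First I would write $(GG^\top)_{ij} = \sum_{k=1}^d G_{ik} G_{jk}$. Taking expectations and using independence of the entries of $G$, we have $\E[G_{ik} G_{jk}] = \delta_{ij}$ (since $\E[G_{ik}^2] = 1$ and distinct entries are uncorrelated). Summing over $k$ gives $\E[(GG^\top)_{ij}] = d\, \delta_{ij}$, so $\E[GG^\top] = d \, \mathbf{I}_d$. Hence $\|\E[GG^\top]\|_2 = d$. The same computation applied to $(G^\top G)_{ij} = \sum_k G_{ki} G_{kj}$ yields $\E[G^\top G] = d \, \mathbf{I}_d$ and thus $\|\E[G^\top G]\|_2 = d$. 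This establishes both bounds with $C_2 = 1$ (any constant $C_2 \geq 1$ works for the statement as written).

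There is really no obstacle here; the only thing to be careful about is interchanging expectation with the sum (trivial since the sum is finite) and verifying the off-diagonal terms vanish. The lemma is used in the main proofs only to bound the variance contribution of the isotropic Gaussian noise injected into the gradient update, so any absolute constant suffices and a tight constant is not needed.
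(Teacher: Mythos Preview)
Your proof is correct and considerably simpler than the paper's. You compute $\E[GG^\top]$ entrywise and observe it equals $d\,\mathbf{I}_d$ exactly, yielding $C_2=1$. The paper instead bounds $\|\E[GG^\top]\|_2 \leq \E[\|G\|_2^2]$ via Jensen-type inequalities and then controls $\E[\|G\|_2^2]$ by integrating the tail bound from Lemma~\ref{lemma:gaussian_matrix_tail} (the random-matrix concentration result $\|G\|_2 \leq C(\sqrt{d}+t)$ with probability $1-2e^{-t^2}$), arriving at $C_2 d$ for some absolute constant. Your direct second-moment computation avoids all of this machinery, gives the sharp constant, and is the natural argument here; the paper's route would only be preferable if one also needed control of $\E[\|G\|_2^2]$ itself (as opposed to $\|\E[GG^\top]\|_2$), which this lemma does not require.
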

\begin{proof}
    By Lemma~\ref{lemma:gaussian_matrix_tail}, there exists universal constant $C_3>0$ such that
	\begin{align}
		\prob\left(\|G\|\geq C_1(\sqrt{d}+s)\right)\leq e^{-s^2}, \;\;\;\forall s>0\;.
	\end{align}
	
	Then 
	\begin{align}
		\|\E[GG^\top]\|_2&\leq \E[\|GG^\top\|_2]\\
		&\leq \E[\|G\|_2^2]\\
		&=\int_{0}^{\infty} 2r\prob(\|G\|_2\geq r)  dr \leq C_1d+C_3\int_{\sqrt{d}}^{\infty} 2re^{-\frac{(r-\sqrt{d})^2}{2}} d\\
		&= C_1(d+\sqrt{2\pi d}+2)\leq C_2d \;,
	\end{align}
	where $C_2$ is an absolute constant.
	The proof for the second claim follows similarly.
\end{proof}

\begin{lemma}\label{lem:sin2distance}
	Let $x,y \in \mathbb{S}_2^{d-1}$. Then 
	\begin{align}
		1-\ip{x}{y}^2\leq \|x-y\|^2\;.
	\end{align}
	If $\|x-y\|^2\leq \sqrt{2}$, then 
	\begin{align}
		1-\ip{x}{y}^2\geq \frac{1}{2}\|x-y\|^2 \;.
	\end{align}
\end{lemma}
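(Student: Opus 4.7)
The plan is to reduce everything to the single algebraic identity $\|x-y\|^2 = 2 - 2\langle x,y\rangle$, which is available because $\|x\|=\|y\|=1$. Setting $t := \langle x,y\rangle$, this identity gives $1-t = \|x-y\|^2/2$, so
\begin{align*}
  1-\langle x,y\rangle^2 \;=\; (1-t)(1+t) \;=\; \frac{\|x-y\|^2}{2}\Big(2 - \frac{\|x-y\|^2}{2}\Big) \;=\; \|x-y\|^2 \;-\; \frac{\|x-y\|^4}{4}\;.
\end{align*}
From here both inequalities are immediate manipulations of this single expression, so I would first establish this equality and then read off the two bounds as corollaries.

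For the first inequality, I would simply drop the nonpositive term $-\|x-y\|^4/4$ to obtain $1-\langle x,y\rangle^2 \leq \|x-y\|^2$. For the second, I would factor as $1-\langle x,y\rangle^2 = \|x-y\|^2\bigl(1 - \|x-y\|^2/4\bigr)$ and use the hypothesis $\|x-y\|^2 \leq \sqrt{2}$ to conclude $1 - \|x-y\|^2/4 \geq 1 - \sqrt{2}/4 \geq 1/2$, giving the claimed lower bound. (In fact any hypothesis $\|x-y\|^2\leq 2$ would suffice for a constant of $1/2$; the choice $\sqrt{2}$ in the statement just provides some slack.)

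There is no real obstacle here: the lemma is purely algebraic once one observes that on the unit sphere the inner product and the squared distance are related by an affine equation, so $1 - \langle x,y\rangle^2$ becomes a quadratic polynomial in $\|x-y\|^2$. The only point worth double-checking is the threshold: one needs $\|x-y\|^2 \leq 2$ for the factor $1-\|x-y\|^2/4$ to remain at least $1/2$, and the stated bound $\sqrt{2}$ comfortably implies this.
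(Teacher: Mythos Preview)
Your argument is correct: the identity $\|x-y\|^2 = 2 - 2\langle x,y\rangle$ on the unit sphere gives exactly $1-\langle x,y\rangle^2 = \|x-y\|^2 - \|x-y\|^4/4$, from which both inequalities follow as you describe. The paper states this lemma without proof (it is listed among standard technical facts), so there is nothing to compare against; your derivation is a clean and complete justification.
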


The following lemma follows from
matrix Bernstein inequality \cite{tropp2012user}.

\begin{lemma}
    \label{lem:bernstein}
	 Under \ref{asmp:A1}, \ref{asmp:A2}, and \ref{asmp:A3},  in Assumption~\ref{asmp:A},  with probability $1-\zeta$, 
	 \begin{align}
	 	\Big \|\frac{1}{B}\sum_{i\in [B]}A_i-\Sigma\Big\|_2
	 	\;=\;O\Big(\, \sqrt{\frac{\lambda_1^2 V\log(d/\zeta)}{B}}+\frac{\lambda_1 M\log(d/\zeta)}{B}\,\Big) \;.
	 \end{align}
\end{lemma}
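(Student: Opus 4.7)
The plan is to apply the rectangular matrix Bernstein inequality (Theorem~1.6 of Tropp 2012) to the i.i.d.~sum of centered matrices $X_i := (A_i-\Sigma)/B$. Since Assumption~\ref{asmp:A} allows $A_i$ to be non-symmetric, the rectangular (dilation-based) form is what is needed, and Assumptions~\ref{asmp:A2} and~\ref{asmp:A3} provide exactly the two ingredients that inequality asks for.

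First I would verify the three required conditions. By~\ref{asmp:A1}, $\E[X_i]=0$. By~\ref{asmp:A2} we get the uniform bound
\begin{equation*}
\|X_i\|_2 \;=\; \tfrac{1}{B}\|A_i-\Sigma\|_2 \;\leq\; \tfrac{\lambda_1 M}{B} \;=:\; R \quad\text{a.s.}
\end{equation*}
By independence and~\ref{asmp:A3}, the matrix variance parameter satisfies
\begin{equation*}
\sigma^2 \;:=\; \max\!\Bigl\{\Bigl\|\textstyle\sum_i \E[X_iX_i^\top]\Bigr\|_2, \Bigl\|\textstyle\sum_i \E[X_i^\top X_i]\Bigr\|_2\Bigr\} \;\leq\; \tfrac{\lambda_1^2 V}{B}.
\end{equation*}

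Next I would invoke matrix Bernstein, which yields, for any $t>0$,
\begin{equation*}
\Pr\!\Bigl[\bigl\|\tfrac{1}{B}\textstyle\sum_i(A_i-\Sigma)\bigr\|_2 \geq t\Bigr] \;\leq\; 2d\,\exp\!\Bigl(\tfrac{-t^2/2}{\sigma^2 + Rt/3}\Bigr).
\end{equation*}
Setting the right-hand side equal to $\zeta$ and inverting the quadratic-in-$t$ condition (standard sub-exponential rearrangement that splits into the Gaussian regime and the Bernstein tail regime) gives
\begin{equation*}
t \;=\; O\!\Bigl(\sigma\sqrt{\log(d/\zeta)} + R\log(d/\zeta)\Bigr) \;=\; O\!\Bigl(\sqrt{\tfrac{\lambda_1^2 V\log(d/\zeta)}{B}} + \tfrac{\lambda_1 M\log(d/\zeta)}{B}\Bigr),
\end{equation*}
which is exactly the claimed bound.

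There is no real obstacle here, since matrix Bernstein is a black-box tool and the assumptions are tailored to feed directly into it; the only minor care point is that $A_i$ need not be symmetric, so one must use the rectangular version (equivalently, apply symmetric Bernstein to the Hermitian dilation $\begin{psmallmatrix}0 & X_i\\ X_i^\top & 0\end{psmallmatrix}$), whose variance proxy is precisely the maximum of the two operator norms appearing in~\ref{asmp:A3}. The factor $2d$ from the dilation becomes part of the logarithmic factor, absorbed into $\log(d/\zeta)$ by the $O(\cdot)$ notation.
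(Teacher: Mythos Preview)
Your proposal is correct and matches the paper's approach exactly: the paper simply states that the lemma ``follows from matrix Bernstein inequality \cite{tropp2012user}'' without further detail, and your write-up fills in precisely the standard verification of the hypotheses and tail inversion that this entails. Your remark about needing the rectangular/dilation version because $A_i$ may be non-symmetric is a nice point the paper leaves implicit.
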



\end{document}